\newif\ifarxiv
\newif\ifcolt
\definecolor{dgreen}{rgb}{0,0.5,0}
\newenvironment{namedproof}[1]{\paragraph{Proof of #1.}\hspace{-1em}}{\hfill$\blacksquare$\vspace{1em}}
\newtheorem{theorem}{Theorem}[section]
\newtheorem{lemma}[theorem]{Lemma}
\newtheorem{corollary}[theorem]{Corollary} 
\newtheorem{conjecture}[theorem]{Conjecture}
\newtheorem{example}[theorem]{Example}
\theoremstyle{definition}
\newtheorem{definition}[theorem]{Definition}
\newtheorem{remark}[theorem]{Remark}
\newtheorem{setting}{Setting}
\DeclareMathOperator{\Var}{Var}
\newcommand{\norm}[1]{\left \lVert #1 \right \rVert}
\mathchardef\mhyphen="2D
\DeclareMathOperator*{\argmin}{argmin}
\newcommand{\RR}{\mathbb{R}}
\newcommand{\NN}{\mathbb{N}}
\DeclareMathOperator{\supp}{supp}
\DeclareMathOperator{\diag}{diag}
\DeclareMathOperator*{\E}{\mathbb{E}}
\DeclareMathOperator{\EE}{\mathbb{E}}
\DeclareMathOperator{\vspan}{span}
\DeclareMathOperator{\Unif}{Unif}
\DeclareMathOperator{\tr}{tr}
\DeclareMathOperator{\vrank}{rank}
\DeclareMathOperator*{\argmax}{arg\,max}
\renewcommand{\t}{\top}
\DeclareMathOperator{\poly}{poly}
\newcommand{\CS}{\mathcal{S}}
\newcommand{\CW}{\mathcal{W}}
\newcommand{\BX}{\mathbb{X}}
\newcommand{\MC}{\mathcal{C}}
\newcommand{\mslr}{{m_{\mathsf{SLR}}}}
\newcommand{\MA}{\mathcal{A}}
\newcommand{\BQ}{\mathbb{Q}}
\newcommand{\BP}{\mathbb{P}}
\DeclareMathOperator{\Rad}{Rad}
\newcommand{\tfinal}{{t_\mathsf{final}}}
\newcommand{\citep}[1]{\cite{{#1}}}
\newcommand{\imin}{{i_{\mathsf{min}}}}
\newcommand{\st}{\star}
\newcommand{\wh}{{\hat{w}}}
\newcommand{\wst}{{w^\st}}
\newcommand{\SLR}{{\mathsf{SLR}}}
\newcommand{\ME}{{\mathcal{E}}}
\newcommand{\dlow}{{d_{\mathsf{low}}}}
\newcommand{\dhigh}{{d_{\mathsf{high}}}}
\newcommand{\NP}{\mathsf{NP}}
\title{Lasso with Latents: Efficient Estimation, Covariate Rescaling,\\ and Computational-Statistical Gaps}
\title[Lasso with Latents]{Lasso with Latents: Efficient Estimation, Covariate Rescaling,\\ and Computational-Statistical Gaps}
\author{Jonathan A. Kelner\thanks{\texttt{kelner@mit.edu}. This work was supported in part by NSF Large CCF-1565235, NSF Medium CCF-1955217, and NSF TRIPODS 1740751.} \\ MIT \and Frederic Koehler\thanks{\texttt{fkoehler@uchicago.edu}. This work was supported in part by NSF award CCF-1704417, NSF award IIS-1908774, and N. Anari’s Sloan Research Fellowship} \\ University of Chicago \and Raghu Meka\thanks{\texttt{raghum@cs.ucla.edu}. This work was supported in part by NSF CAREER Award CCF-1553605 and NSF Small CCF-2007682} \\ UCLA \and Dhruv Rohatgi\thanks{\texttt{drohatgi@mit.edu}. This work was supported by a U.S. DoD NDSEG Fellowship.} \\ MIT}
\begin{document}

\maketitle
\begin{abstract}
It is well-known that the statistical performance of Lasso can suffer significantly when the covariates of interest have strong correlations. In particular, the prediction error of Lasso becomes much worse than computationally inefficient alternatives like Best Subset Selection. Due to a large conjectured computational-statistical tradeoff in the problem of sparse linear regression, it may be impossible to close this gap in general.

In this work, we propose a natural sparse linear regression setting where strong correlations between covariates arise from unobserved latent variables. In this setting, we analyze the problem caused by strong correlations and design a surprisingly simple fix. While Lasso with standard normalization of covariates fails, there exists a heterogeneous scaling of the covariates with which Lasso will suddenly obtain strong provable guarantees for estimation. Moreover, we design a simple, efficient procedure for computing such a ``smart scaling.''

The sample complexity of the resulting ``rescaled Lasso'' algorithm incurs (in the worst case) quadratic dependence on the sparsity of the underlying signal. While this dependence is not information-theoretically necessary, we give evidence that it is optimal among the class of polynomial-time algorithms, via the method of low-degree polynomials. This argument reveals a new connection between sparse linear regression and a special version of sparse PCA with a \emph{near-critical negative spike}. The latter problem can be thought of as a real-valued analogue of learning a sparse parity. Using it, we also establish the first computational-statistical gap for the closely related problem of learning a Gaussian Graphical Model.
\end{abstract}

\section{Introduction}

Sparse linear regression (SLR) is one of the most fundamental problems in high-dimensional statistics. In this paper, we study algorithmic aspects of the problem.
For simplicity, we focus on the following setting with Gaussian random design (though our results should be generalizable to e.g., \ sub-Gaussian data, misspecification via oracle inequalities, etc.): 

\begin{definition}\label{def:slr-intro}
Let $\Sigma \in \RR^{n\times n}$ be positive semi-definite,  $w^\st \in \RR^n$ be $k$-sparse, and  $\sigma \ge 0$. We define $\SLR_{\Sigma,\sigma}(w^\st)$ to be the distribution of $(X,y)$ where $X \sim N(0,\Sigma)$ and $y \sim N(\langle X, w^\st \rangle,\sigma^2)$.
\end{definition}
Given $m$ independent samples $(X^{(j)},y^{(j)})_{j=1}^m$ from $\SLR_{\Sigma,\sigma}(w^\st)$, the goal of sparse linear regression is to produce an estimate $\hat w$ with low \emph{out-of-sample (clean) prediction error}, defined as:
\[\EE (\langle X^{(0)}, \hat w\rangle - \langle X^{(0)}, w^\st\rangle)^2 = (\hat w - w^\st)^\t \Sigma (\hat w - w^\st) =: \norm{\hat w - w^\st}_\Sigma^2\]
where $(X^{(0)},y^{(0)})$ is a fresh sample from $\SLR_{\Sigma,\sigma}(w^\st)$.

Despite significant effort, there is a vast gap in our understanding of the \emph{computational complexity} of sparse linear regression \--- and, in particular, how computational efficiency interplays with sample efficiency.
On the one hand, the natural \emph{Best Subset Selection} estimator \citep{hocking1967selection} achieves prediction error $O(\sigma^2 k (\log n)/m)$ with $m$ samples, so long as $m = \Omega(k\log n)$. Note that the sample complexity scales only logarithmically with the ambient dimension $n$, and no further assumptions on $\Sigma$ or $w^\st$ are needed. Unfortunately, this estimator is computationally intractable. On the other hand, classical estimators such as Lasso \citep{tibshirani1996regression} can be computed in polynomial time. However, they are known to have poor statistical performance (e.g., sample complexity \emph{linear} in $n$) in many settings where the covariates have strong correlations. In particular, Lasso is only statistically efficient when $\Sigma$ satisfies a restricted condition-number assumption such as the \emph{compatibility condition} \citep{van2009conditions}. While there are several special cases where Lasso fails but other polynomial-time algorithms are known to succeed, these are (thus far) the exceptions to the rule. See \Cref{sec:related-upper} for further discussion about Lasso and other estimators.

Given the dearth of strong algorithmic guarantees for SLR, it's natural to speculate that some choices of $\Sigma$ make SLR computationally hard for any sample complexity $m = o(n)$. But proving such a lower bound via average-case reduction or in any standard restricted computational model (e.g., low-degree polynomials or statistical queries) seems out of reach at present (see \Cref{sec:related-lower} for discussion of prior attempts). We lack even a \emph{conjecture} about which families of $\Sigma$ might induce computational hardness: obviously, $\Sigma$ must be ill-conditioned, but little else is clear. 

In this paper, we make progress on this problem by identifying the fundamental computational limits for a subclass of SLR problems. Informally, the subclass captures common situations where strong correlations are due to the existence of a few latent confounders or a few directions of unusually small variance in the data; see below for more details. For this subclass, we give mathematical evidence that no efficient estimator can succeed with significantly less than $O(k^2 \log n)$ samples (even though $O(k \log n)$ samples suffice information-theoretically), and we design a new polynomial-time algorithm that matches the lower bound.
Our efficient algorithm is based on a simple but surprisingly powerful \emph{smart scaling} procedure that we use as a preprocessing step to ``fix'' the Lasso. Our lower bound is 
based on a new connection between SLR and what we call the \emph{near-critical} regime of negatively spiked sparse principal component analysis (PCA).


\subsection{Upper bounds}     
We start by describing two natural settings where $\Sigma$ may be arbitrarily ill-conditioned (and Lasso has poor sample complexity and performs poorly empirically), but the degeneracies among covariates are sufficiently few or structured so that one may still hope for an efficient SLR algorithm.


\begin{setting}[Latent variable models]\label{setting:lvm}
Correlations are often induced via latent confounders. Thus, as is common in econometrics, causal inference, and other fields (see e.g. \cite{hoyle1995structural,pearl2009causality}), we 
can posit that covariates follow a \emph{Structural Equation Model} (SEM) with $h \ll n$ latent variables. Formally, we suppose that each observed covariate vector $X^{(j)}$ can be written as $X^{(j)} := AH^{(j)} + Z^{(j)}$ where $A$ is an unknown, fixed $n \times h$ matrix, $H^{(j)}$ is an unobserved Gaussian random vector, and independently $Z^{(j)} \sim N(0,D)$ is Gaussian noise where $D \succ 0$ is diagonal.
Thus, there is a rank-$h$ matrix $L \succeq 0$ such that each $X^{(j)}$ is a multivariate Gaussian with covariance matrix
\begin{equation}
\Sigma := \EE XX^\t = L + D. \label{eq:lvm-cov}
\end{equation}
\end{setting}

\begin{figure}[t]
\centering

\ifarxiv
\subcaptionbox{\label{fig:lvm}}[.4\textwidth]{\includegraphics[width=\linewidth]{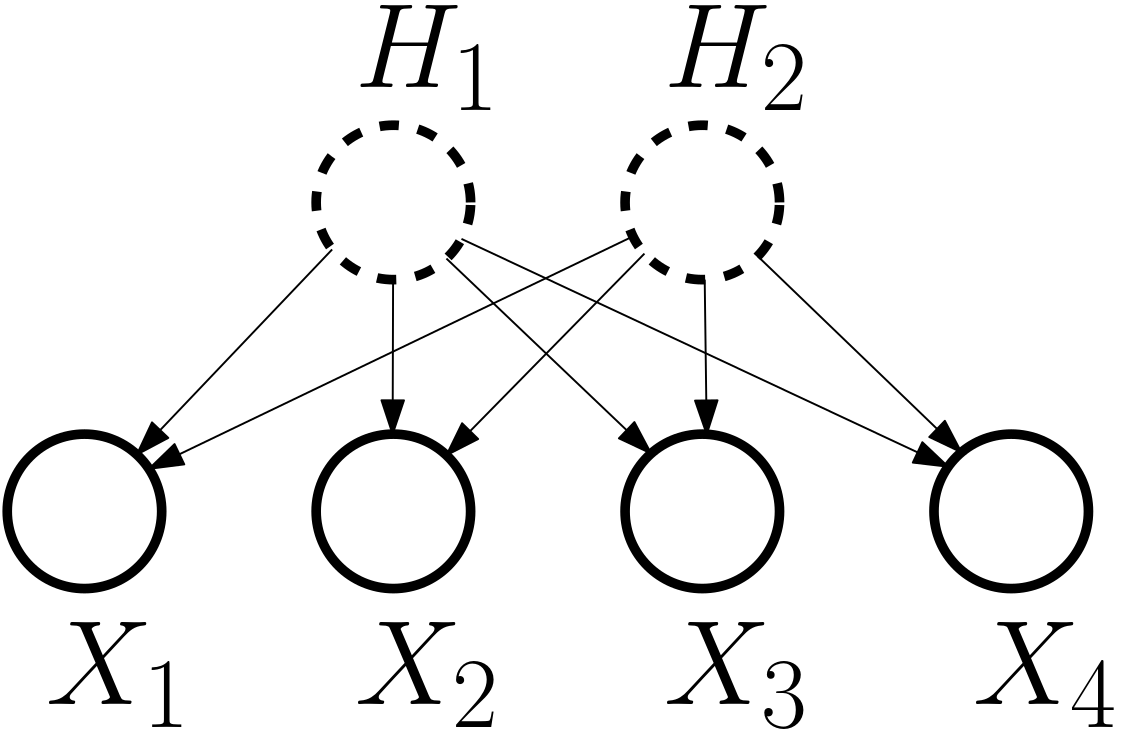}}
\hspace{5em}
\subcaptionbox{\label{fig:outlier-spec}}[.4\textwidth]{\includegraphics[width=\linewidth]{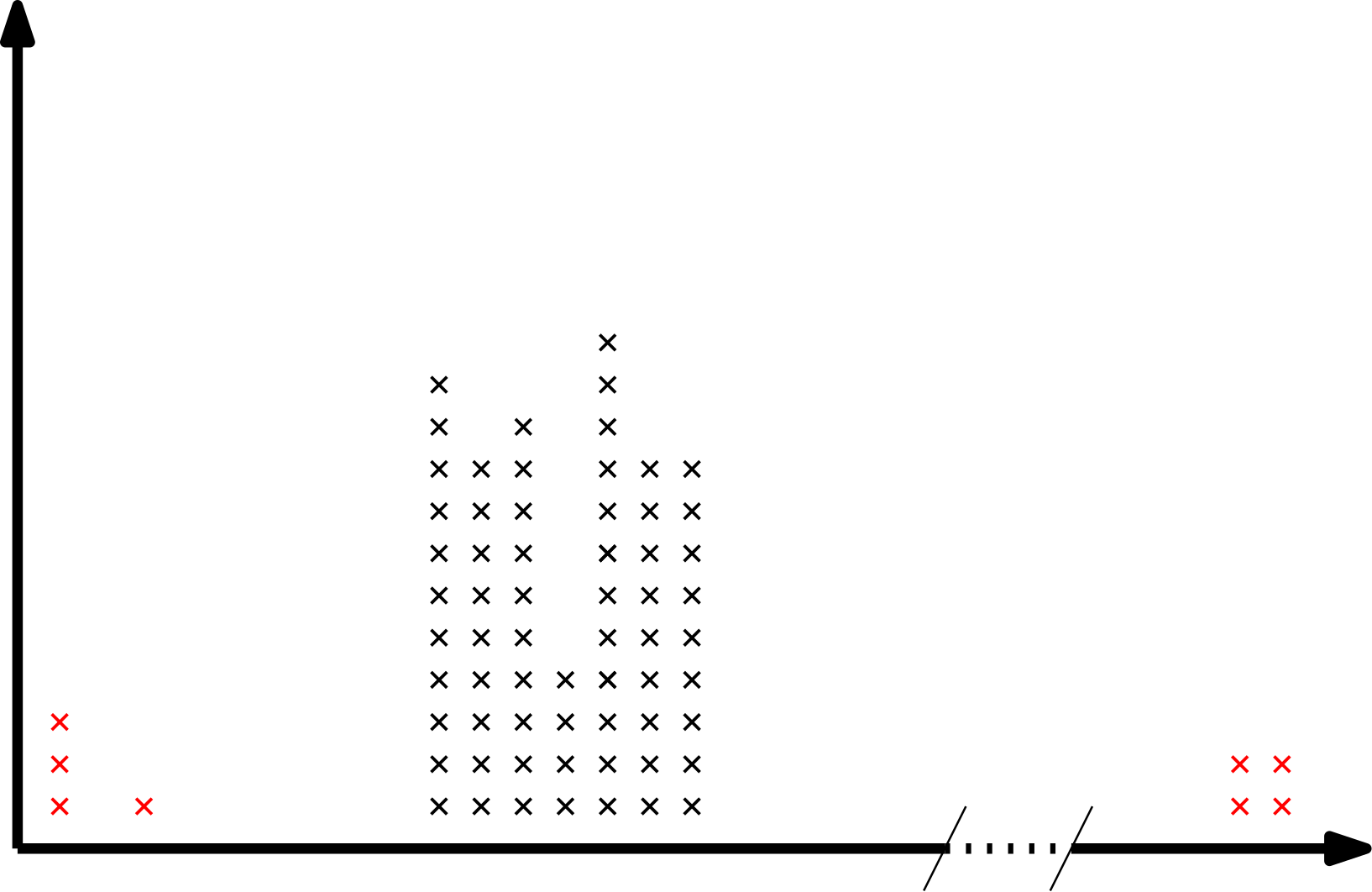}}
\fi

\ifcolt
\begin{subfigure}{}
\includesvg[width=12em]{lvm.svg}
\end{subfigure}
\hspace{5em}
\begin{subfigure}{}
\includesvg[width=12em]{spectrum.svg}
\end{subfigure}
\fi

\caption{(a) Example graphical model with $X_1,\dots,X_4$ observed and $H_1,H_2$ latent. (b) Example eigenspectrum that is well-conditioned aside from a few ``outliers'' (displayed in red).}
\label{fig}

\end{figure}

\begin{setting}[Eigenspectrum with outliers]\label{setting:outliers}
Alternatively, we may restrict the degeneracies in $\Sigma$ by explicitly controlling the eigenspectrum. In this setting, originally introduced by \cite{kelner2023feature}, we assume that the spectrum of $\Sigma$ is well-concentrated, aside from a small number of ``outlier'' eigenvalues. That is, suppose that the eigenvalues of $\Sigma$ are $\lambda_1 \leq \dots \leq \lambda_n$, and there is some $d \ll n$ such that $\lambda_{n-d}/\lambda_{d+1}$ is small (see \Cref{fig} for a depiction). Unlike in \cite{kelner2023feature}, we do not assume that $\Sigma$ is known.
\end{setting}
Note that the latter setting generalizes the classical, well-conditioned setting (where $\lambda_n/\lambda_1 = O(1)$). 
Both settings allow for a small number of approximate linear dependencies among the covariates, which is a natural case where Lasso may provably fail, requiring as many as $\Omega(n)$ samples to achieve non-trivial prediction error \cite[Theorem 6.5]{kelner2021power}.

\paragraph{Challenge: adapting to unknown structure.} In both settings, the covariates are drawn from a highly structured distribution, but one of the main challenges is that the structure is \emph{unknown}. In the latent variable model setting, $\Sigma$ has a ``low-rank plus diagonal'' decomposition. However, even if $\Sigma$ is known, efficiently computing such a decomposition is a well-studied open problem \citep{saunderson2012diagonal, bertsimas2017certifiably, wu2020high} with some evidence of intractability \citep{tunccel2023computational}. In the outlier setting, note that the eigendecomposition of $\Sigma$ isn't even identifiable from a sublinear number of samples. Thus, efficient sparse linear regression in these settings requires exploiting unknown structure without completely learning it.


\subsubsection{An efficient algorithm via smart scaling}


Conventional wisdom when applying the Lasso (and in statistics and machine learning more broadly)
is to scale all covariates to unit variance \citep{ahrens2020lassopack}.
While this is a good idea in many cases, it is not always the \emph{optimal} choice! 
In fact, in both settings described above, even if Lasso has poor performance with the standard scaling, there always exists a clever rescaling after which Lasso would achieve near-optimal sample complexity. We formalize this existence criterion via the following notion of \emph{$(\alpha,h)$-rescalability}. It essentially states that after rescaling by some diagonal matrix, covariates from $N(0,\Sigma)$ satisfy a restricted eigenvalue condition (similar to \cite{raskutti2010restricted}) modulo a low-rank subspace $\vspan(L)$.
\begin{definition}\label{def:quant-sparse}
For any $n \in \NN$ and $\gamma > 1$, let $\MC_n(\gamma) := \{x \in \RR^n: \norm{x}_1 \leq \gamma \norm{x}_\infty\}$ 
be the set of \emph{$\gamma$-quantitatively sparse vectors}.
\end{definition}
\begin{definition}\label{def:rescalable}
Let $n \in \NN$, and let $\Sigma \in \RR^{n\times n}$ be a positive semi-definite matrix. For $k,h\in \NN$ and $\alpha>0$, we say that $\Sigma$ is \emph{$(\alpha,h)$-rescalable at sparsity $k$} if there are matrices $D, L \in \RR^{n\times n}$ such that $D \succ 0$ is diagonal, $L \succeq 0$ has rank at most $h$, and
\begin{equation} I_n \preceq_{\MC_n(32k)} D^{-1/2}\Sigma D^{-1/2} \preceq \alpha I_n + L\label{eq:diag-prec-def} \end{equation}
where $I_n \preceq_{\MC_n(32k)} D^{-1/2}\Sigma D^{-1/2}$ means that $v^\t v \leq v^\t D^{-1/2}\Sigma D^{-1/2} v$ for all $v \in \MC_n(32k)$. If \Cref{eq:diag-prec-def} holds for all $k$, then we simply say that $\Sigma$ is $(\alpha,h)$-rescalable.
\end{definition}
In \Cref{setting:lvm} (the latent variable model with $h$ latent variables), it's immediate from \Cref{def:rescalable} that $\Sigma$ is $(1, h)$-rescalable. In \Cref{setting:outliers}, the implication is less obvious, but we are able to show that $\Sigma$ is $(\alpha,h)$-rescalable at sparsity $k$ with $\alpha = O(k^2 \lambda_{n-d}/\lambda_{d+1})$ and $h = O(k^2 d)$ (see \Cref{lemma:outlier-rescalability}). Thus, the notion of rescalability unifies both settings.

If $D$ were known, then one could simply rescale each sample via $X \mapsto D^{-1/2} X$. By standard analyses, Lasso with this ``oracle rescaling'' would have sample complexity $O((\alpha k + h)\log n)$, which is information-theoretically optimal for $\alpha, h = O(1)$. However, as discussed above, it is unreasonable to assume access to $D$, which in \Cref{setting:lvm} consists of the conditional variances of the covariates with respect to the unknown latent variables. Our first main result is a computationally efficient SLR algorithm \rl{} that doesn't need to know $D$ (or even $\Sigma$), and nonetheless matches the sample complexity of the ``oracle rescaled'' Lasso up to a factor of $k$:
\begin{theorem}\label{theorem:rescaled-lasso-intro}
Let $n,m,k,h \in \NN$ and $\alpha,\delta,\sigma,\lambda>0$. Suppose that $\Sigma \in \RR^{n\times n}$ is $(\alpha,h)$-rescalable at sparsity $k$, and $w^\st \in \RR^n$ is $k$-sparse. Let $(X^{(j)}, y^{(j)})_{j=1}^m \sim \SLR_{\Sigma,\sigma}(w^\st)$ be independent samples, and define $\hat w := \rl{\text{$(X^{(j)},y^{(j)})_{j=1}^m, k, \lambda$}}$ (see \Cref{alg:intro}). 

If $m = \Omega((\alpha k^2 + h)\log (n/\delta))$ and $\lambda = \Omega(\sigma \sqrt{(\alpha k^2 + h)\log(n/\delta) / (k^2 m)})$, then with probability at least $1-\delta$ it holds that $\norm{\hat w - w^\st}_\Sigma^2 \leq O(k^2\lambda^2)$. Moreover, the algorithm's time complexity is $\poly(n, \log \max_i \frac{\Sigma_{ii}}{D_{ii}})$, where $D$ is the (unknown) matrix in \Cref{def:rescalable}.
\end{theorem}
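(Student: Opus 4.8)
The plan is to reduce \Cref{theorem:rescaled-lasso-intro} to an analysis of ordinary Lasso run on the \emph{rescaled} samples $\hat D^{-1/2}X^{(j)}$, where $\hat D$ is the diagonal matrix produced by the \lp{} subroutine inside \rl{}; everything then hinges on showing that $\hat D$ is ``competitive'' with the unknown oracle scaling $D$ from \Cref{def:rescalable}, losing only a factor of $k$. The natural choice for \lp{} is to set $\hat D_{ii}$ to (an empirical estimate of) the smallest residual variance achievable when predicting $X_i$ from the other coordinates with a quantitatively sparse coefficient vector, i.e.\ $\hat D_{ii} \approx \min\{\EE(X_i - \langle z,X\rangle)^2 : z \in \MC_n(O(k)),\ z_i = 0\}$; this is the largest diagonal for which a tractable (convex) relaxation of a restricted-eigenvalue condition holds for the rescaled empirical Gram matrix, and it can be computed to within constants by running a Lasso regression of each $X_i$ on $X_{-i}$. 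The search over the right regularization level in each of these $n$ sub-regressions is what produces the $\log \max_i \Sigma_{ii}/D_{ii}$ factor in the running time, since that ratio is exactly the dynamic range of achievable residual variances.

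Two facts should then follow, using a uniform concentration bound for quantitatively sparse quadratic forms in the samples — whose effective dimension is $O(\alpha k + h)$ by the right-hand inequality in \eqref{eq:diag-prec-def} — and with sample splitting to decouple $\hat D$ from the Lasso step. (i) \textbf{$\hat D$ dominates $D$}: plugging $v = e_i - z$ into the left-hand inequality $I_n \preceq_{\MC_n(32k)} D^{-1/2}\Sigma D^{-1/2}$ gives $\EE(X_i - \langle z,X\rangle)^2 \ge D_{ii}$ for every quantitatively sparse $z$, hence $\hat D_{ii} \gtrsim D_{ii}$; and trivially $\hat D_{ii} \le O(\Sigma_{ii})$. (ii) \textbf{a restricted eigenvalue bound with constant $\Omega(1/k)$}: for $u$ supported on a set $S$ with $|S| = O(k)$ one has $u^\t \Sigma u = \Var(\langle u,X\rangle) \ge u_i^2\, \Var(X_i \mid X_{S\setminus i})$ for each $i \in S$ (conditioning on $X_{S\setminus i}$ freezes all of $\langle u, X\rangle$ except the $u_i X_i$ term), and since the least-squares predictor of $X_i$ on $X_{S\setminus i}$ is quantitatively sparse this is $\ge u_i^2\, \hat D_{ii}$; averaging over $i \in S$ yields $u^\t\Sigma u \ge \frac{1}{O(k)}\sum_i \hat D_{ii}u_i^2$, i.e.\ $v^\t \hat D^{-1/2}\Sigma \hat D^{-1/2} v \ge \frac{1}{O(k)}\norm{v}_2^2$ for $O(k)$-sparse $v = \hat D^{1/2}u$. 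Finally, combining (i) with the right-hand inequality of \eqref{eq:diag-prec-def}, conjugated by the diagonal contraction $R := \hat D^{-1/2}D^{1/2} \preceq I_n$, gives $\hat D^{-1/2}\Sigma \hat D^{-1/2} \preceq O(\alpha)I_n + \hat L$ with $\vrank(\hat L) \le h$.

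Given these two properties of the rescaled population covariance, the remainder is a Lasso analysis of the familiar ``restricted eigenvalue plus rank-$h$ correction'' type, applied to the design with covariance $\hat D^{-1/2}\Sigma \hat D^{-1/2}$. One transfers the RE lower bound and the ``$\preceq \alpha I_n + \hat L$'' upper bound from the population to the empirical rescaled design via a Gaussian comparison (Gordon/Mendelson-style) argument rather than crude operator-norm concentration — this is what keeps the sample complexity at $O((\alpha k^2 + h)\log(n/\delta))$, since the sparsity $k$, the variance level $\alpha$, and the reciprocal $\Theta(k)$ of the RE constant multiply to give $\alpha k^2$, while the $h$-dimensional subspace $\vspan(\hat L)$ contributes an additive $h$. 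The noise term $\frac1m \sum_j (y^{(j)} - \langle X^{(j)}, \wst\rangle)\, \hat D^{-1/2}X^{(j)}$ is split into its projection onto $\vspan(\hat L)$ (handled by a one-sided bound that is absorbed into $\norm{\wh - \wst}_\Sigma^2$) and its complement, where the rescaled columns have variance $O(\alpha)$ so a standard $\ell_\infty$/cone bound applies; this is why $\lambda = \Omega(\sigma\sqrt{(\alpha k^2 + h)\log(n/\delta)/(k^2 m)})$ suffices. Feeding an RE constant of $\Theta(1/k)$ into the usual Lasso fast-rate inequality then gives $\norm{\wh - \wst}_\Sigma^2 = O(k\lambda^2/(1/k)) = O(k^2\lambda^2)$.

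I expect the main obstacle to be the first part: showing that \lp{}'s output satisfies the restricted-eigenvalue-plus-rank-$h$ condition with only the claimed factor-$k$ degradation, \emph{without} ever solving an $\NP$-hard restricted-eigenvalue certification problem. The delicate points are (a) that the empirical sparse-regression residuals are simultaneously $\gtrsim D_{ii}$ and $\lesssim \Var(X_i \mid X_{S\setminus i})$ for every relevant small $S$, which requires care with the quantitative-sparsity constants ($O(k)$ versus $32k$) and uniform control over data-dependent predictors produced by the regressions (so that their population residuals are governed by \eqref{eq:diag-prec-def}); and (b) the Gaussian-comparison transfer of the RE bound through a data-dependent diagonal rescaling, where sample splitting and careful bookkeeping of $\vspan(\hat L)$ are needed. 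The one-sided treatment of the noise along $\vspan(\hat L)$ in the Lasso step is the other place where the argument departs from the textbook proof.
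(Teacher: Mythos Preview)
Your high-level structure is right, and several of your ingredients (the lower bound $\hat D\succeq cD$, the covariance-splitting treatment of the noise, projecting off the rank-$h$ subspace) match what the paper does. But the proposal has a real gap at exactly the place you flagged as delicate: the interaction between the scaling $\hat D$ and the notion of quantitative sparsity.

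Concretely, both (i) and (ii) as you wrote them hide a circularity. For (i), the rescalability hypothesis is $I_n\preceq_{\MC_n(32k)} D^{-1/2}\Sigma D^{-1/2}$, so to conclude $\EE(X_i-\langle z,X\rangle)^2\ge D_{ii}$ you need $D^{1/2}(e_i-z)\in\MC_n(32k)$; but your feasible set for $z$ is $\MC_n(O(k))$ in the \emph{unscaled} coordinates, and when the entries of $D$ vary wildly there is no implication between the two. For (ii), your argument $u^\t\Sigma u\ge u_i^2\Var(X_i\mid X_{S\setminus i})$ only works for \emph{truly} $O(k)$-sparse $u$, whereas the Lasso error vector is only in a $\hat D$-rescaled cone (quantitatively sparse after multiplication by $\hat D^{1/2}$). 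The standard passage from sparse RE to cone RE does not survive the $1/k$ RE constant here, and Gordon/Mendelson transfer still needs a population lower bound on the cone, which you have not established. If instead you used true $\ell_0$ sparsity in the definition of $\hat D_{ii}$, both (i) and (ii) would go through, but the program would be nonconvex and the algorithm would not be polynomial time.

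The paper resolves this circularity by making \lp{} \emph{iterative}: at each step one has a current $\hat D^{(t)}$, solves the convex program with the constraint $\|(\hat D^{(t)})^{1/2}v\|_1\le 16k$, and halves the entry at the offending coordinate. The invariant $\hat D^{(t)}\succeq D/2$ is maintained inductively, because that very invariant converts the $\hat D^{(t)}$-quantitative-sparsity constraint on the witness $v$ into the $D$-quantitative-sparsity constraint required by \eqref{eq:diag-prec-def}. At termination one has both $\hat D\succeq D/2$ \emph{and} the empirical $\ell_\infty$-type lower bound $\frac{1}{m}\|\BX v\|_2^2>\|\hat D^{1/2}v\|_\infty^2$ for all $v$ with $\hat D^{1/2}v\in\MC_n(16k)$, simply because that is the termination condition. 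This sidesteps any need to certify an $\ell_2$ restricted eigenvalue. Correspondingly, the Lasso analysis is not the textbook RE-based fast rate you describe but a win-win: either $\|\hat D^{1/2}e\|_1\lesssim \frac{k}{\sqrt m}\|\BX e\|_2$, or the error $e$ itself would witness a violation of the termination condition, which is impossible. The $\log\max_i\Sigma_{ii}/D_{ii}$ factor in the runtime then comes from bounding the number of halving steps, not from a regularization path.
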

In particular, for the optimal choice of $\lambda$, the algorithm achieves prediction error $O(\sigma^2 (\alpha k^2 + h) \log(n)/m)$ with high probability. 
Note that the time complexity depends on $\max_i \Sigma_{ii}/D_{ii}$, but only logarithmically; hence, the algorithm runs in $\poly(n)$ time even if this ratio is exponentially large.
Applying \Cref{theorem:rescaled-lasso-intro} to \Cref{setting:lvm} is immediate using \Cref{eq:lvm-cov}; simply set $\alpha = 1$, and take $h$ to be the number of latent variables:
\begin{corollary}\label{theorem:lvm-lasso-intro}
Let $n,m,k,h \in \NN$ and $\delta,\sigma,\lambda>0$. Let $\Sigma := D+L \in \RR^{n\times n}$ for some diagonal matrix $D \succ 0$ and rank-$h$ matrix $L \succeq 0$, and let $w^\st \in \RR^n$ be $k$-sparse. Let $(X^{(j)}, y^{(j)})_{j=1}^m \sim \SLR_{\Sigma,\sigma}(w^\st)$ be independent samples, and define $\hat w := \rl{\text{$(X^{(j)},y^{(j)})_{j=1}^m, k, \lambda$}}$. If $m = \Omega((k^2 + h)\log (n/\delta))$ and $\lambda = \Omega(\sigma \sqrt{(k^2 + h)\log(n/\delta) / (k^2 m)})$, then with probability at least $1-\delta$ it holds that $\norm{\hat w - w^\st}_\Sigma^2 \leq O(k^2\lambda^2)$. Moreover, the time complexity is $\poly(n, \log \max_i \frac{\Sigma_{ii}}{D_{ii}})$.
\end{corollary}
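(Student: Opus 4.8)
The plan is to deduce this corollary directly from \Cref{theorem:rescaled-lasso-intro} by checking that any covariance of the form $\Sigma = D + L$ (with $D \succ 0$ diagonal and $L \succeq 0$ of rank at most $h$) is $(1,h)$-rescalable at sparsity $k$, and moreover that this rescalability is witnessed by \emph{the same} diagonal matrix $D$ appearing in the hypothesis, so that the time-complexity bound involving $\max_i \Sigma_{ii}/D_{ii}$ carries over verbatim.

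First I would verify the rescalability. Take the diagonal matrix in \Cref{def:rescalable} to be exactly $D$, and set $L' := D^{-1/2} L D^{-1/2}$. Since $D^{-1/2}$ is invertible and $L \succeq 0$ has rank at most $h$, the matrix $L'$ is positive semi-definite with rank at most $h$. Conjugating by $D^{-1/2}$,
\[ D^{-1/2}\Sigma D^{-1/2} = I_n + D^{-1/2} L D^{-1/2} = I_n + L'. \]
The upper bound in \Cref{eq:diag-prec-def} with $\alpha = 1$ is then the identity $I_n + L' \preceq I_n + L'$. For the lower bound, $L' \succeq 0$ gives $v^\t v \leq v^\t (I_n + L') v$ for \emph{every} $v \in \RR^n$, and in particular for all $v \in \MC_n(32k)$; hence $I_n \preceq_{\MC_n(32k)} D^{-1/2}\Sigma D^{-1/2}$. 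This argument is uniform in $k$, so $\Sigma$ is in fact $(1,h)$-rescalable (not merely at a single sparsity level).

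Finally I would apply \Cref{theorem:rescaled-lasso-intro} with $\alpha = 1$. The sample-complexity hypothesis $m = \Omega((\alpha k^2 + h)\log(n/\delta))$ becomes $m = \Omega((k^2 + h)\log(n/\delta))$, the condition on $\lambda$ becomes $\lambda = \Omega(\sigma\sqrt{(k^2+h)\log(n/\delta)/(k^2 m)})$, and the conclusion $\norm{\hat w - w^\st}_\Sigma^2 \leq O(k^2 \lambda^2)$ holds with probability at least $1-\delta$, all matching the statement. The time complexity is $\poly(n, \log \max_i \Sigma_{ii}/D_{ii})$ precisely because the diagonal matrix witnessing rescalability was chosen to be the $D$ in the hypothesis. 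There is essentially no obstacle here beyond bookkeeping: the one point that warrants care is this identification of $D$, since an arbitrary choice of rescaling diagonal in \Cref{def:rescalable} could in principle yield a different — and possibly much larger — ratio inside the logarithm; the natural choice $D' = D$ avoids this.
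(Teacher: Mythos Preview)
Your proposal is correct and takes essentially the same approach as the paper: observe that $\Sigma = D+L$ is $(1,h)$-rescalable via the very matrix $D$ in the hypothesis (since $D^{-1/2}\Sigma D^{-1/2} = I_n + D^{-1/2}LD^{-1/2}$), then invoke \Cref{theorem:rescaled-lasso-intro} with $\alpha = 1$. The paper states this deduction as ``immediate'' without spelling out the rescalability check; your explicit verification and the remark about identifying $D$ to control the time-complexity ratio are both accurate.
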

Note the quadratic dependence on $k$ above and in \Cref{theorem:rescaled-lasso-intro}. While not information-theoretically necessary, we give evidence in \Cref{sec:lower-bounds} that it is the optimal dependence for efficient algorithms. 

The more involved application is to \Cref{setting:outliers}, where proving rescalability is non-trivial (see \Cref{lemma:outlier-rescalability}). Combining \Cref{theorem:rescaled-lasso-intro} with \Cref{lemma:outlier-rescalability} yields the following result. 
\begin{corollary}\label{theorem:outlier-lasso-intro}
Let $n,m,k \in \NN$ and $\delta,\sigma,\lambda>0$. Suppose that $\Sigma \in \RR^{n\times n}$ is positive definite with eigenvalues $\lambda_1 \leq \dots \leq \lambda_n$, and that $w^\st \in \RR^n$ is $k$-sparse. Let $(X^{(j)}, y^{(j)})_{j=1}^m$ be i.i.d.\ samples from $\SLR_{\Sigma,\sigma}(w^\st)$, and define $\hat w := \rl{\text{$(X^{(j)},y^{(j)})_{j=1}^m, k,\lambda$}}$. If $m = \Omega(\min_{0 \leq d < n} (k^4 \frac{\lambda_{n-d}}{\lambda_{d+1}} + k^2 d)\log (n/\delta))$ and $k\lambda = \Omega(\sigma \min_{0 \leq d < n} (k^2 \sqrt{\frac{\lambda_{n-d}}{\lambda_{d+1}}} + k\sqrt{d}) \sqrt{\log (n/\delta)/m})$, then with probability at least $1-\delta$ it holds that $\norm{\hat w - w^\st}_\Sigma^2 \leq O(k^2\lambda^2)$. Moreover, the algorithm's time complexity is $\poly(n, \log \frac{\lambda_n}{\lambda_1})$.
\end{corollary}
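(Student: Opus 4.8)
The plan is to obtain Corollary~\ref{theorem:outlier-lasso-intro} by feeding Lemma~\ref{lemma:outlier-rescalability} into Theorem~\ref{theorem:rescaled-lasso-intro}; no new algorithmic ingredient is required, since \rl{} already runs without knowledge of $\Sigma$. Recall that Lemma~\ref{lemma:outlier-rescalability} says that for \emph{every} integer $0 \le d < n$, the fixed matrix $\Sigma$ is $(\alpha_d, h_d)$-rescalable at sparsity $k$ with $\alpha_d = O(k^2\lambda_{n-d}/\lambda_{d+1})$ and $h_d = O(k^2 d)$. Because this holds for all $d$ simultaneously, I would fix $d^\st := \argmin_{0\le d<n}(k^4\lambda_{n-d}/\lambda_{d+1} + k^2 d)$ once and for all, and invoke Theorem~\ref{theorem:rescaled-lasso-intro} with $(\alpha,h) = (\alpha_{d^\st}, h_{d^\st})$; the whole proof then reduces to checking that the corollary's hypotheses on $m$ and $\lambda$ translate into the theorem's hypotheses under this choice.

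For the $m$ bound: substituting gives $\alpha_{d^\st}k^2 + h_{d^\st} = O(k^4\lambda_{n-d^\st}/\lambda_{d^\st+1} + k^2 d^\st) = O(\min_{0\le d<n}(k^4\lambda_{n-d}/\lambda_{d+1} + k^2 d))$, so the corollary hypothesis $m = \Omega(\min_d(k^4\lambda_{n-d}/\lambda_{d+1}+k^2 d)\log(n/\delta))$ implies $m = \Omega((\alpha_{d^\st}k^2 + h_{d^\st})\log(n/\delta))$, exactly what the theorem needs. For the $\lambda$ bound: since $k^2\sqrt{\lambda_{n-d}/\lambda_{d+1}} + k\sqrt{d} = \Theta(\sqrt{k^4\lambda_{n-d}/\lambda_{d+1} + k^2 d})$ for each $d$ and $\sqrt{\cdot}$ is monotone, one has $\min_d (k^2\sqrt{\lambda_{n-d}/\lambda_{d+1}} + k\sqrt d) = \Theta(\sqrt{k^4\lambda_{n-d^\st}/\lambda_{d^\st+1}+k^2 d^\st}) = \Theta(\sqrt{\alpha_{d^\st}k^2 + h_{d^\st}})$; hence the corollary hypothesis $k\lambda = \Omega(\sigma\min_d(\dots)\sqrt{\log(n/\delta)/m})$ is equivalent to $\lambda = \Omega(\sigma\sqrt{(\alpha_{d^\st}k^2 + h_{d^\st})\log(n/\delta)/(k^2 m)})$, again matching the theorem. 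With both hypotheses verified, Theorem~\ref{theorem:rescaled-lasso-intro} gives $\norm{\hat w - w^\st}_\Sigma^2 \le O(k^2\lambda^2)$ with probability at least $1 - \delta$. Note that the algorithm is called once, with inputs $k$ and $\lambda$ only, so it never sees $d^\st$; we merely use $d^\st$ in the analysis.

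The last thing to handle is the running time. Theorem~\ref{theorem:rescaled-lasso-intro} gives $\poly(n, \log\max_i \Sigma_{ii}/D_{ii})$, and since $\Sigma_{ii} \le \lambda_n$ it suffices to know that the diagonal matrix $D$ furnished by Lemma~\ref{lemma:outlier-rescalability} satisfies $D_{ii} \ge \lambda_1/\poly(n)$, so that $\log\max_i \Sigma_{ii}/D_{ii} = O(\log(\lambda_n/\lambda_1) + \log n)$. I expect this to be immediate from the construction used to prove Lemma~\ref{lemma:outlier-rescalability}, where $D$ can be taken as an essentially uniform rescaling at scale $\Theta(\lambda_{d^\st+1}/\poly(k))$, which is at least $\lambda_1/\poly(n)$; substituting this bound yields the stated $\poly(n, \log(\lambda_n/\lambda_1))$.

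The only genuinely nontrivial obstacle sits outside this corollary, in Lemma~\ref{lemma:outlier-rescalability} itself: one must show that an eigenspectrum with $d$ outliers forces $(\alpha,h)$-rescalability with $\alpha = O(k^2\lambda_{n-d}/\lambda_{d+1})$ and $h = O(k^2 d)$, i.e.\ exhibit a diagonal $D$ and rank-$O(k^2 d)$ matrix $L$ witnessing \Cref{eq:diag-prec-def}. Proving that requires a restricted-eigenvalue argument for the rescaled matrix modulo a low-rank subspace spanned by the outlier directions, and keeping the polynomial blow-up in $k$ under control; that is where the real work lies, but it is handled separately. Given the lemma, the corollary is the routine substitution sketched above.
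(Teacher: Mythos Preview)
Your approach matches the paper's exactly: the proof of the (slightly more general) restatement, Theorem~\ref{theorem:outlier-lasso}, reads in full ``Immediate from Theorem~\ref{theorem:rescaled-lasso} and Lemma~\ref{lemma:outlier-rescalability}.'' Your verification of the $m$ and $\lambda$ conditions is correct.

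One correction on the runtime step: the matrix $D$ produced in the proof of Lemma~\ref{lemma:outlier-rescalability} is \emph{not} an essentially uniform rescaling, and the bound $D_{ii}\geq \lambda_1/\poly(n)$ is false in general (the construction sets $D = \frac{\lambda_{d+1}}{2^{16}k^2}\tilde D$ with $\tilde D_{ii}$ possibly as small as $\lambda_1/(2\lambda_n)$, so $D_{ii}$ can be of order $\lambda_1^2/(k^2\lambda_n)$). You don't need this claim, though: the lemma's ``Moreover'' clause already states $\max_i \Sigma_{ii}/D_{ii} \le C k^2 \lambda_n^2/\lambda_1^2$, so $\log\max_i \Sigma_{ii}/D_{ii} = O(\log k + \log(\lambda_n/\lambda_1))$ and the $\poly(n,\log(\lambda_n/\lambda_1))$ runtime follows immediately.
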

Hence for the optimal choice of $\lambda$, the algorithm achieves prediction error 
$O(\sigma^2 \min_{0 \leq d < n} (k^4 \frac{\lambda_{n-d}}{\lambda_{d+1}} + k^2 d)\log(n/\delta)/m)$. If the spectrum of $\Sigma$ has few outliers, in the sense that there is some $d = O(1)$ with $\lambda_{n-d}/\lambda_{d+1} = O(1)$, then this simplifies to $O(\sigma^2 k^4 \log(n)/m)$. This significantly improves upon the main result of \cite{kelner2023feature}; see \Cref{sec:related-upper} for more detailed comparison.

\subsection{Lower bounds}\label{sec:lower-bounds}

In light of \Cref{theorem:rescaled-lasso-intro}, $(\alpha,h)$-rescalable matrices $\Sigma$ (for small $\alpha, h$) are likely not the ``hardest'' covariance matrices, for which one might expect that no computationally efficient algorithm achieves non-trivial prediction error with $o(n)$ samples. However, there is still a polynomial gap between the sample complexity of \rl{} and the information-theoretic optimum: even for constant $\alpha$ and $h$, the sample complexity of \rl{} is $O(k^2 \log n)$ (to achieve prediction error $O(\sigma^2)$), whereas the inefficient Best Subset Selection estimator only requires $O(k\log n)$ samples. It is natural to ask whether this gap is inherent.

We prove that, under a plausible conjecture about the power of low-degree polynomials, the quadratic dependence on $k$ incurred by \rl{} may indeed be necessary for \emph{any} computationally efficient algorithm. While lower bounds have previously been shown for specific algorithms (such as Lasso and some generalizations), the below result is, to our knowledge, the first broad evidence for a (super-constant) computational-statistical tradeoff in sparse linear regression; see \Cref{sec:related-lower} for further discussion. 
\begin{theorem}\label{thm:no-subquadratic-alg-intro}
Let $\epsilon,C>0$ with $\epsilon \leq 2$. Let $\MA$ be a polynomial-time algorithm. Suppose that for any $n,k \in \NN$, $\sigma>0$, positive semi-definite, $(1,k)$-rescalable matrix $\Sigma \in \RR^{n\times n}$, $k$-sparse vector $w^\st \in \RR^n$, and $m \geq C k^{2-\epsilon}\log n$, the output $\hat w \gets \MA((X^{(j)},y^{(j)})_{j=1}^m)$ satisfies
\[\Pr[\norm{\hat w - w^\st}_\Sigma^2 \leq \sigma^2/10] \geq 1-o(1)\]
where the probability is over the randomness of $\MA$ and $m$ independent samples $(X^{(j)},y^{(j)})_{j=1}^m$ from $\SLR_{\Sigma,\sigma}(w^\st)$. Then \Cref{conj:ldlr} is false.
\end{theorem}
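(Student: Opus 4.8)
The plan is to prove \Cref{thm:no-subquadratic-alg-intro} by reducing the detection version of near-critical, negatively-spiked sparse PCA --- the problem underlying \Cref{conj:ldlr} --- to sparse linear regression with a rescalable design. Fix a scale $c = c(k)$ with $c \asymp 1/k$ and $1 - ck = \Theta(1/k)$, to be tuned below, and consider the hypothesis-testing problem on $2m$ i.i.d.\ samples $X^{(1)},\dots,X^{(2m)} \in \RR^n$: under $H_0$, $X^{(j)} \sim N(0, I_n)$; under $H_1$, one first draws $S$ uniformly from $\binom{[n]}{k}$ and then draws $X^{(j)} \sim N(0, \Sigma_S)$ i.i.d., where $\Sigma_S := I_n - c\,\mathbf{1}_S\mathbf{1}_S^\t$. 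The matrix $\Sigma_S$ is positive definite exactly when $ck < 1$, and the near-critical choice $1 - ck = \Theta(1/k)$ is the one in which the covariates on $S$ nearly satisfy the hidden linear relation $\sum_{i \in S} X_i = 0$, so that recovering $S$ is a real-valued analogue of learning a sparse parity --- conjectured (\Cref{conj:ldlr}) to need $m = \Omega(k^2 \log n)$ samples for any polynomial-time algorithm. The first thing to check is that $\Sigma_S$ is $(1,k)$-rescalable in the sense of \Cref{def:rescalable}: taking $D$ diagonal with $D_{ii} = 1 - ck$ for $i \in S$ and $D_{ii} = 1$ otherwise, a one-line computation (using $\langle \mathbf{1}_S, v\rangle^2 \le k\|v_S\|_2^2$ for the lower bound) gives $I_n \preceq D^{-1/2}\Sigma_S D^{-1/2} \preceq D^{-1} = I_n + L$ with $L := D^{-1} - I_n \succeq 0$ diagonal of rank $|S| = k$. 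Hence every $\Sigma_S$ is a legal input to the algorithm $\MA$ hypothesized in \Cref{thm:no-subquadratic-alg-intro}.

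Next I would turn an SLR estimator into a test. Split the $2m$ samples into two halves of size $m$. For each coordinate $i \in [n]$, reinterpret the first half as an SLR dataset with covariates $\tx^{(j)} := X^{(j)}_{-i} \in \RR^{n-1}$ and responses $\ty^{(j)} := X^{(j)}_i$; by Gaussian conditioning these are i.i.d.\ draws from $\SLR_{\tilde\Sigma_i, \sigma_i}(w^{(i)})$, where $\tilde\Sigma_i$ is $\Sigma_S$ with row/column $i$ removed, $w^{(i)}$ is the population regression vector, and $\sigma_i^2$ the residual variance. Under $H_0$, and under $H_1$ when $i \notin S$, one has $w^{(i)} = 0$ and $\sigma_i^2 = 1$. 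Under $H_1$ with $i \in S$, a short Sherman--Morrison calculation gives $w^{(i)} = -\tfrac{c}{1 - c(k-1)}\,\mathbf{1}_{S \setminus \{i\}}$, which is $(k-1)$-sparse; $\tilde\Sigma_i = I_{n-1} - c\,\mathbf{1}_{S\setminus\{i\}}\mathbf{1}_{S\setminus\{i\}}^\t$ is again $(1,k)$-rescalable by the same argument; and $\sigma_i^2 + \|w^{(i)}\|_{\tilde\Sigma_i}^2 = 1 - c$ with $\|w^{(i)}\|_{\tilde\Sigma_i}^2 = \tfrac{c^2(k-1)}{1 - c(k-1)}$. Choosing $c$ so that $1 - c(k-1) = \Theta(1/k)$ makes both $\|w^{(i)}\|_{\tilde\Sigma_i}^2$ and $\sigma_i^2$ equal to constants bounded away from $0$ and $1$ (e.g.\ both $\approx 1/2$); in particular $\sigma_i^2/10 \ll \|w^{(i)}\|_{\tilde\Sigma_i}^2$, so an estimator meeting the guarantee $\|\hat w - w^{(i)}\|_{\tilde\Sigma_i}^2 \le \sigma_i^2/10$ must beat the trivial estimator $\hat w = 0$ by a constant factor --- it must genuinely localize the planted support.

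Now for the test itself. Assume $\MA$ meets the hypothesis of \Cref{thm:no-subquadratic-alg-intro}, and set $m$ to be a large enough $\Theta(k^{2-\epsilon}\log n)$ (so that $m \ge Ck^{2-\epsilon}\log n$ as required, and also $m \ge C_0 \log n$ for an absolute constant $C_0$ large enough for the concentration bounds below); note $m = o(k^2\log n)$ since $\epsilon > 0$. For each $i$, run $\MA$ on the first half of the $i$-th SLR dataset to get $\hat w^{(i)}$, and compute its held-out error $\hat R_i := \tfrac1m \sum_j \big(X^{(j)}_i - \langle X^{(j)}_{-i}, \hat w^{(i)}\rangle\big)^2$ on the \emph{second} half; output ``$H_1$'' iff $\min_i \hat R_i < 3/4$. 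Conditioned on the first half, $\hat w^{(i)}$ is a fixed vector independent of the second half, so $m\hat R_i$ equals $\big(\sigma_i^2 + \|\hat w^{(i)} - w^{(i)}\|_{\tilde\Sigma_i}^2\big)$ times a $\chi^2_m$ variable; a standard concentration bound together with a union bound over the $n$ coordinates (costing only an $O(\sqrt{\log n / m})$ factor) yields: under $H_0$, $\min_i \hat R_i \ge \min_i (1 + \|\hat w^{(i)}\|_2^2)(1 - o(1)) > 3/4$ with probability $1 - o(1)$; while under $H_1$, for $i^\star := \min S$ (where $\MA$ succeeds with probability $1 - o(1)$ by hypothesis) we get $\hat R_{i^\star} \le 1.1\,\sigma_{i^\star}^2 (1 + o(1)) < 3/4$ with probability $1 - o(1)$. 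Thus this polynomial-time procedure distinguishes $H_0$ from $H_1$ using $2m = o(k^2\log n)$ samples, contradicting \Cref{conj:ldlr}. Note no net or uniform-convergence argument over $\hat w^{(i)}$ is needed: sample-splitting makes $\hat R_i$ a scaled chi-square independent of $\hat w^{(i)}$, so only the $n$ coordinates are union-bounded.

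I expect the main obstacle to be the two calibration points: (a) choosing the near-critical scale $c(k)$ and verifying the rescalability sandwich $I_n \preceq D^{-1/2}\Sigma_S D^{-1/2} \preceq I_n + L$ with $L$ of rank $\le k$, so that the reduced instances lie exactly in the class $\MA$ is assumed to handle; and (b), the more delicate of the two, ensuring that the resulting detection problem is \emph{precisely} the near-critical negatively-spiked sparse PCA problem for which \Cref{conj:ldlr} is stated --- possibly after a trivial modification such as planting $S$ to always contain, or never contain, a designated coordinate, so that the coordinate-wise regressions line up cleanly. It is this alignment that upgrades the argument from a heuristic to a genuine reduction establishing ``polynomial-time SLR below $k^{2-\epsilon}\log n$ samples $\Rightarrow$ \Cref{conj:ldlr} is false.'' The remaining steps --- the Sherman--Morrison identities for $w^{(i)}$ and $\sigma_i^2$, and the chi-square concentration --- are routine.
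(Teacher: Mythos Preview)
Your reduction is essentially correct and mirrors the paper's (\Cref{theorem:pca-to-slr}): the sample-splitting test, the rescalability verification via $D \preceq \Sigma_S \preceq I_n$, and the conditional-variance calculations are all right. One cosmetic discrepancy: the paper's prior $\CW_{n,k}$ uses random Rademacher signs on the spike rather than your fixed positive $\mathbf{1}_S$, but your test is insensitive to signs (or equivalently, a random diagonal sign flip converts between the two while preserving the null), so this is not a real obstacle.

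There is, however, a genuine gap in how you invoke \Cref{conj:ldlr}. You treat it as directly asserting that detection needs $\Omega(k^2\log n)$ samples, but it does not: \Cref{conj:ldlr} is a conditional statement of the form ``if the low-degree likelihood ratio is $O(1)$ for some $D(n) = \log^{1+\Omega(1)} n$, then no polynomial-time algorithm solves strong detection.'' To conclude from your reduction that \Cref{conj:ldlr} is \emph{false}, you must exhibit specific parameter functions $k(n), m(n), \beta(n), D(n)$ for which (i) the LDLR is bounded \emph{and} (ii) efficient detection is possible. Your reduction supplies (ii); you never address (i). The paper supplies (i) as a separate result (\Cref{theorem:ldlr-main}), and this is not a formality you can defer to the literature: as the paper explains in Remark~\ref{rmk:previous-cannot-apply}, the standard LDLR analyses for sparse PCA use an i.i.d.\ prior on the spike entries, which in the near-critical regime $\beta \to -1$ produces a spike of norm exceeding $1$ with constant probability and hence does not define a valid model. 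The fixed-size prior $\CW_{n,k}$ is therefore essential, and bounding its LDLR requires a new moment bound on $\langle w_1,w_2\rangle^{2d}$ for independent $w_1,w_2 \sim \CW_{n,k}$ (\Cref{lemma:ldlr-a-bound}). Your ``calibration point (b)'' is thus not merely a matter of aligning problem formats; it is half of the proof.
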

\Cref{conj:ldlr} is an instantiation of the \emph{Low-Degree Hypothesis}: it asserts that low-degree polynomials have optimal power among polynomial-time algorithms for a natural hypothesis testing problem called \emph{negative-spike sparse PCA}. Informally, this is the problem of distinguishing samples from the standard multivariate Gaussian $N(0,I_n)$ versus samples from the spiked Wishart model $N(0, I_n+\beta ww^\t)$, where $w$ is a random sparse unit vector, and $\beta \in (-1,0)$ is the spike strength.

The proof of \Cref{thm:no-subquadratic-alg-intro} has two components. First, we analyze the low-degree likelihood ratio for negative spike $k$-sparse PCA \--- thereby showing that low-degree polynomials require $\Omega(k^2)$ samples to solve the testing problem. We give additional evidence for the hardness of this problem by proving a lower bound for a natural SDP formulation. Second, we given an efficient reduction from this testing problem (in the \emph{near-critical} regime where $\beta$ is close to $-1$) to sparse linear regression with a $(1,k)$-rescalable covariance matrix $\Sigma$.\footnote{To contrast, \cite{bresler2018sparse} studied solving \emph{positive-spike} sparse PCA in the computationally easy regime by solving \emph{well-conditioned} SLR problems where the LASSO is statistically optimal up to constants. Our hardness reduction is crucially based upon the special properties of sparse PCA with a \emph{large (near-critical) negative spike}.} 

Our analysis of near-critical negative spike PCA also yields the first computational-statistical gap for learning Gaussian Graphical Models (GGMs). While it's information-theoretically possible to learn any $\kappa$-nondegenerate, degree-$d$ GGM with only $O(d\log(n)/\kappa^2)$ samples \citep{misra2017information}, the low-degree analysis implies (under \Cref{conj:ldlr}) that any computationally efficient algorithm requires at least $\Omega(d^{2-\epsilon}\log(n))$ samples, for any constant $\epsilon>0$ and even when $\kappa = \Omega(1)$. See \Cref{remark:ggms} for details. We do not know if this lower bound is tight for learning GGMs (the true computational-statistical gap may be much larger), but it is in fact tight for a natural \emph{testing} problem: testing between an empty graphical model and a sparse graphical model with at least one nonnegligible edge. The matching (computationally efficient) upper bound is given in \Cref{sec:testing}. 

\paragraph{Independent work.} In independent and concurrent work, Buhai, Ding, and Tiegel also gave evidence that sparse linear regression exhibits a $k$-to-$k^2$ computational-statistical gap \cite{buhai2024computationalstatistical}. Their proof proceeds along the same lines as ours (via reduction from negative-spike sparse PCA and analysis of the low-degree likelihood ratio).

\subsection{Outline}
In \Cref{sec:overview} we sketch the proofs of our main results. In \Cref{sec:related} we survey related work on algorithms and lower bounds for sparse linear regression and sparse PCA. In \Cref{sec:rescaling}, \Cref{sec:outliers}, and \Cref{sec:ldlr} we formally prove \Cref{theorem:rescaled-lasso-intro}, \Cref{theorem:outlier-lasso-intro}, and \Cref{thm:no-subquadratic-alg-intro} respectively. In \Cref{sec:testing}, we analyze testing between empty and non-empty GGMs, and in \Cref{sec:simulation} we show the results of applying \rl{} to a simple simulated dataset.

\section{Proof Overview}\label{sec:overview}

In this section we give overviews of the proof of \Cref{theorem:rescaled-lasso-intro} (via a new variable normalization procedure) and \Cref{thm:no-subquadratic-alg-intro} (via a new connection with negative spike sparse PCA). 

Throughout the paper, we adopt the following notation. For $n\times n$ symmetric matrices $A,B$, we write $A \preceq B$ to denote that $B-A$ is positive semi-definite; for a set $S \subseteq \RR^n$, we write $A \preceq_{S} B$ to denote that $v^\t A v \leq v^\t B v$ for all $v \in S$. For a matrix $A\in \RR^{n\times n}$, we write $\diag(A)$ to denote the matrix $D \in \RR^{n\times n}$ defined by $D_{ij} = A_{ij}\mathbbm{1}[i=j]$. We write $I_n$ to denote the $n \times n$ identity matrix. For a positive semi-definite matrix $A \in \RR^{n\times n}$ and vector $v \in \RR^n$, $\norm{v}_A$ denotes $\sqrt{v^\t A v}$.

\subsection{Upper bounds}\label{sec:overview-upper}

Informally, \Cref{theorem:rescaled-lasso-intro} states that there is a computationally efficient and sample-efficient algorithm for sparse linear regression (as modelled in \Cref{def:slr-intro}) whenever the covariance matrix $\Sigma$ is rescalable (see \Cref{def:rescalable}). To reiterate, the main algorithmic difficulty is that the diagonal matrix $D$ in \Cref{def:rescalable} is unknown and potentially even unidentifiable, so we cannot simply perform the ``oracle rescaling'' $X \mapsto D^{-1/2} X$. 
In particular, in \Cref{setting:lvm}, where $\Sigma = D+L$, the diagonal entry $D_{ii}$ measures the conditional variance of the covariate $X_i$ with respect to the latent variables. Even in very simple examples, these conditional variances are unidentifiable:
\begin{example}
Consider a model with latent variable $H_1 \sim N(0,1-\epsilon^2)$, and independent covariates $X_1 = H_1 + N(0, \epsilon^2)$ and $X_2 \sim N(0, 1)$. Then $X_1$ has conditional variance $\epsilon^2$, whereas $X_2$ has conditional variance $1$. However, from the observed data it is impossible to tell which of $X_1$ or $X_2$ is connected to the latent, since either way the joint law is $X \sim N(0,I_2)$. 
\end{example}
Fortunately, we are happy with \emph{any} good rescaling matrix $\hat D$, even if it's not the oracle one.
Based on our mathematical understanding of the Lasso, a rescaling should be ``good'' if the rescaled covariates admit no (quantitatively) sparse approximate dependencies \--- and these are identifiable, even from a small number of samples. This motivates the algorithm described below.

\paragraph{Algorithm description.} The procedure \lp{} takes as input the covariate data $(X^{(j)})_{j=1}^m$ and the sparsity level $k$, and then initializes $\hat{D}$ to be the diagonal of the empirical covariance matrix $\hat\Sigma = \frac{1}{m}\sum_{j=1}^m X^{(j)} (X^{(j)})^\t$. Note that this initialization corresponds to the ``standard'' covariate normalization, which may be highly sub-optimal in the presence of strong correlations.

To fix this, the procedure iteratively decreases entries of $\hat D$ until it can certify that the resulting scaling is good. At each step, for each covariate $X_i$, the procedure solves the following program to compute how much of the (empirical) variance of $X_i$ cannot be explained by quantitatively sparse combinations of the other covariates:\footnote{We remark that this program is similar in spirit to a natural convex relaxation for detecting the sparsest vector in a subspace --- c.f.  \cite{demanet2014scaling,spielman2012exact}. See also our related work section.}
\begin{equation} \min_{v \in \RR^n: \substack{\norm{\hat D^{1/2} v}_1 \leq 16k \\ (\hat D^{1/2} v)_i = 1}} \frac{1}{m} \sum_{j=1}^m \langle X^{(j)}, v\rangle^2.\label{eq:cond-variance-overview}\end{equation}
If \eqref{eq:cond-variance-overview} is at least a constant for all $i \in [n]$, then 
\lp{} returns $\hat D$. Otherwise, the procedure picks some $i$ for which \eqref{eq:cond-variance-overview} is small, and then halves $\hat D_{ii}$ and repeats.

After \lp{} returns $\hat D$, the main algorithm \rl{} simply applies the rescaling $X^{(j)} \mapsto \hat D^{-1/2} X^{(j)}$ to each sample covariate, solves Lasso, and unscales the solution (equivalently, it 
uses a coordinatewise penalty $\lVert \hat D^{1/2} w\rVert_1$). See \Cref{alg:intro} for pseudocode.




\SetKwFunction{adabp}{AdaptedBP}

\SetKwProg{myalg}{Procedure}{}{}
\begin{algorithm2e}[t]
  \SetAlgoLined\DontPrintSemicolon
  \caption{Adaptive variable rescaling}
  \label{alg:intro}

\SetKwBlock{Repeat}{repeat}{}
\SetKw{Break}{break}
\SetAlgoNoEnd
\myalg{\lp{$\BX$, $k$}}{
$\textsf{DIV} \gets 2$; $\textsf{B} \gets 16k$; $\hat{\Sigma} \gets \frac{1}{m}\BX^\t \BX$; $\hat{D}^{(1)} \gets \diag(\hat{\Sigma})$\;
\For{$t = 1, 2, 3, \dots$}{
    For every $1 \leq i \leq n$, compute
        \begin{equation} v^{(t,i)} \gets \argmin_{v \in \RR^n: \ \substack{\norm{(\hat{D}^{(t)})^{1/2} v}_1 \leq \textsf{B} \\ ((\hat{D}^{(t)})^{1/2} v)_i = 1} }\frac{1}{m} \norm{\BX v}_2^2.
        \label{eq:v-prog}
        \end{equation}
    \hspace{-.3cm} 
    $\imin \gets \argmin_{i\in[n]} \frac{1}{m} \norm{\BX v^{(t,i)}}_2^2$\; 
    \lIf{$\frac{1}{m}\norm{\BX v^{(t,\imin)}}_2^2 \leq 1$}{
            $\hat{D}^{(t+1)} \gets \hat{D}^{(t)}$; $\hat{D}^{(t+1)}_{\imin\imin} \gets \hat{D}^{(t+1)}_{\imin\imin} / \textsf{DIV}$
    }
    \lElse{
        \Return $\hat{D}^{(t)}$
    }
}
}

\myalg{\rl{$(X^{(j)},y^{(j)})_{j=1}^m$, $k$, $\lambda$}}{
Define $\BX \in \RR^{m \times n}$ by
    $\BX \gets \begin{bmatrix} X^{(1)} & X^{(2)} & \dots & X^{(m)} \end{bmatrix}^\t.$\;
$\hat{D} \gets$ \lp{$\BX$, $k$}.\;
Compute and return $\hat w$, the solution to the modified Lasso:
\begin{equation}\hat{w} \gets \argmin_{w \in \RR^n}\frac{1}{m}\norm{\BX w - y}_2^2 + \lambda \norm{\hat{D}^{1/2} w}_1.\label{eq:rescaled-lasso-program}\end{equation}
}

\end{algorithm2e}


\paragraph{Why does this work?} The 
heart of our analysis is the following guarantee about the output $\hat D$ of $\lp{}$. It states that if the empirical covariance matrix $\hat\Sigma$ is spectrally lower bounded (on quantitatively sparse vectors) after the ``oracle rescaling'' $D^{-1/2}$, then it's also lower bounded after the estimated rescaling $\hat D^{-1/2}$, and moreover $D$ is an approximate lower bound on $\hat D$.\footnote{Note however that the guarantee with $\hat D^{-1/2}$ is qualitatively weaker than the guarantee with the oracle rescaling: in \Cref{lemma:lp-analysis-overview}, $\frac{1}{m}\lVert\BX\hat D^{-1/2} v\rVert_2^2$ is lower bounded in terms of $\norm{v}_\infty$ rather than $\norm{v}_2$. From a technical perspective, this discrepancy is the source of the quadratic dependence on $k$ in the sample complexity of \rl{}.}

\begin{lemma}\label{lemma:lp-analysis-overview}
Let $n,m,k \in \NN$. Let $\BX \in \RR^{m \times n}$. Suppose that $I_n \preceq_{\MC_n(32k)} D^{-1/2} \hat\Sigma D^{-1/2}$ where $D \succ 0$ is a diagonal matrix, and $\hat\Sigma := \frac{1}{m}\BX^\t \BX$. Then the algorithm $\hat{D} \gets \lp{\text{$\BX, k$}}$
terminates after at most $T := n\log \max_{i \in [n]} \frac{2\hat\Sigma_{ii}}{D_{ii}}$ repetitions, and moreover the output satisfies:
\begin{enumerate}
    \item $\hat{D} \succeq \frac{1}{2} D$.
    \item $\frac{1}{m}\lVert\BX \hat D^{-1/2} v\rVert_2^2 > 1$ for all $v \in \RR^n$ with $\norm{v}_\infty = 1$ and $\norm{v}_1 \leq 16k$.
\end{enumerate}
\end{lemma}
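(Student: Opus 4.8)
The plan is to establish one invariant governing the whole run of \lp{} --- that each diagonal entry of the current scaling matrix never drops below half the corresponding entry of the (unknown) oracle matrix $D$ --- and then read off both the iteration bound and the two numbered guarantees from it. As a preliminary, testing the hypothesis $I_n \preceq_{\MC_n(32k)} D^{-1/2}\hat{\Sigma}D^{-1/2}$ against $v = e_i \in \MC_n(32k)$ gives $\hat{\Sigma}_{ii} \geq D_{ii}$ for all $i$, so the initialization $\hat D^{(1)} = \diag(\hat{\Sigma})$ already satisfies $\hat D^{(1)}_{ii} \geq D_{ii} \geq \tfrac12 D_{ii}$.

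The invariant I would prove by induction on the loop index $t$ is: (i) $\hat D^{(t)}_{ii} \geq \tfrac12 D_{ii}$ for all $i$, and (ii) whenever iteration $t$ halves coordinate $\imin$, in fact $\hat D^{(t)}_{\imin\imin} \geq D_{\imin\imin}$ \emph{before} the halving. Assuming (i) at step $t$, I would deduce (ii) by contradiction: suppose coordinate $i := \imin$ gets halved --- so $\tfrac1m\|\BX v^{(t,i)}\|_2^2 \leq 1$ --- yet $\hat D^{(t)}_{ii} < D_{ii}$. Write $v := v^{(t,i)}$, $\hat D := \hat D^{(t)}$. The equality constraint in \eqref{eq:v-prog} forces $|v_i| = \hat D_{ii}^{-1/2}$, so $D_{ii}^{1/2}|v_i| = (D_{ii}/\hat D_{ii})^{1/2} > 1$ and thus $\|D^{1/2}v\|_\infty > 1$; meanwhile (i) gives $D_{jj} \leq 2\hat D_{jj}$, hence $\|D^{1/2}v\|_1 \leq \sqrt2\,\|\hat D^{1/2}v\|_1 \leq 16\sqrt2\,k < 32k$ using the $\ell_1$ constraint in \eqref{eq:v-prog}. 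So $\|D^{1/2}v\|_1 < 32k\,\|D^{1/2}v\|_\infty$, i.e.\ $D^{1/2}v \in \MC_n(32k)$, and the hypothesis yields
\[\tfrac1m\|\BX v\|_2^2 = v^\t\hat{\Sigma}v = (D^{1/2}v)^\t\big(D^{-1/2}\hat{\Sigma}D^{-1/2}\big)(D^{1/2}v) \geq \|D^{1/2}v\|_2^2 \geq D_{ii}|v_i|^2 = D_{ii}/\hat D_{ii} > 1,\]
contradicting $\tfrac1m\|\BX v^{(t,i)}\|_2^2 \leq 1$. Hence $\hat D^{(t)}_{\imin\imin} \geq D_{\imin\imin}$, so after halving $\hat D^{(t+1)}_{\imin\imin} \geq \tfrac12 D_{\imin\imin}$ while all other entries are untouched, closing the induction. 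This is the crux, and the only genuinely subtle point: one must notice that the contradiction hypothesis $\hat D^{(t)}_{\imin\imin} < D_{\imin\imin}$ is itself what certifies $\|D^{1/2}v\|_\infty > 1$, which --- combined with the deliberate slack between the radius $16k$ used in \eqref{eq:v-prog} and the $32k$ defining $\MC_n(32k)$, just enough to absorb the $\sqrt2$ loss from $D \preceq 2\hat D$ --- is precisely what makes $D^{1/2}v$ admissible in the rescaled restricted-eigenvalue hypothesis.

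The conclusions now follow. Since the returned $\hat D$ is some $\hat D^{(t)}$, part (i) gives $\hat D \succeq \tfrac12 D$ (the first numbered conclusion). For the iteration bound, part (ii) says coordinate $i$ is halved only while its value is $\geq D_{ii}$, and since it starts at $\hat{\Sigma}_{ii}$ it is halved at most $\log_2(2\hat{\Sigma}_{ii}/D_{ii})$ times; summing over $i$ bounds the total number of halvings (hence loop iterations, up to the final return) by $n\log\max_i\tfrac{2\hat{\Sigma}_{ii}}{D_{ii}} = T$. For the second numbered conclusion, the loop exits at step $t$ only when $\min_i \tfrac1m\|\BX v^{(t,i)}\|_2^2 > 1$, so by optimality of each $v^{(t,i)}$ in \eqref{eq:v-prog}, every $w$ with $\|\hat D^{1/2}w\|_1 \leq 16k$ and $(\hat D^{1/2}w)_i = 1$ satisfies $\tfrac1m\|\BX w\|_2^2 > 1$. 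Given $v$ with $\|v\|_\infty = 1$ and $\|v\|_1 \leq 16k$, after replacing $v$ by $-v$ if necessary (harmless for both constraints and for $\|\BX\hat D^{-1/2}v\|_2$) we may assume $v_i = 1$ for some $i$; then $w := \hat D^{-1/2}v$ has $\hat D^{1/2}w = v$ and so is feasible for the $i$-th copy of \eqref{eq:v-prog}, whence $\tfrac1m\|\BX\hat D^{-1/2}v\|_2^2 = \tfrac1m\|\BX w\|_2^2 > 1$. The remaining details --- nonemptiness and compactness of the feasible set in \eqref{eq:v-prog} (it contains $\hat D^{-1/2}e_i$ and sits inside a scaled $\ell_1$-ball, so a minimizer exists) and the precise off-by-one in the halving count --- are routine.
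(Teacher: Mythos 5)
Your proof is correct and follows essentially the same route as the paper's: you establish the key inductive invariant $\hat D^{(t)} \succeq \tfrac12 D$ by the identical contradiction argument (assume $\hat D^{(t)}_{\imin\imin} < D_{\imin\imin}$, show $D^{1/2}v^{(t,\imin)} \in \MC_n(32k)$ by trading $16k$ for $32k$ against the $\sqrt2$ loss, and apply the restricted-eigenvalue hypothesis), then read off both numbered conclusions. The only differences are cosmetic: you bound the iteration count by per-coordinate counting rather than the paper's determinant argument, and you explicitly note the $v \mapsto -v$ sign flip needed to make $v$ feasible for the correct copy of \eqref{eq:v-prog} --- a small detail the paper's proof elides but which your write-up correctly handles.
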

For any rescalable $\Sigma$, the spectral lower bound needed to apply \Cref{lemma:lp-analysis-overview} to samples from $N(0,\Sigma)$ is inherited (with high probability) from the assumed lower bound on $\Sigma$ (\Cref{lemma:transfer-lb}). This crucially uses a generalization bound derived from the upper bound $D^{-1/2}\Sigma D^{-1/2} \preceq \alpha I_n + L$. Next, let us explain why \rl{} has low prediction error assuming \Cref{lemma:lp-analysis-overview}. 

It is simplest to consider the special case of sparse linear regression where $\sigma = 0$ (i.e. the responses are noiseless). In this setting, instead of solving the rescaled Lasso program \eqref{eq:rescaled-lasso-program}, one would solve the rescaled basis pursuit:
\[\hat w \in \argmin_{w \in \RR^n: \BX w = y} \lVert\hat D^{1/2} w\rVert_1,\]
where $\BX\in \RR^{m\times n}$ is the matrix with rows $X^{(1)},\dots,X^{(m)}$. When does this program fail to return the true solution $w^\st$? Since $y = \BX w^\st$, the program only fails when there is some alternative $\hat w \in \RR^n$ with $\BX(\hat w - w^\st) = 0$ and $\lVert\hat D^{1/2} \hat w\rVert_1 \leq \lVert\hat D^{1/2} w^\st\rVert_1$. By a standard manipulation, the second inequality (together with $k$-sparsity of $w^\st$) implies that the error vector $e := \hat w - w^\st$ is $O(k)$-quantitatively sparse with respect to $\hat D$, i.e. \[ \lVert\hat D^{1/2} e\rVert_1 \leq O(k) \cdot \lVert\hat D^{1/2} e\rVert_\infty. \]
By the second guarantee of \Cref{lemma:lp-analysis-overview} with $v := \hat D^{1/2} e$ (and the fact that $e \neq 0$), it follows that $\frac{1}{m}\norm{\BX e}_2^2 > 0$. This is a contradiction, so in fact the rescaled basis pursuit must return $w^\st$.
Extending this argument to the general, noisy setting follows a similar rough blueprint;
we defer the details to \Cref{sec:rescaling}. 
We now sketch the proof of the key \Cref{lemma:lp-analysis-overview} (see \Cref{sec:rescaling} for the full proof).

\begin{proof}[Proof sketch for \Cref{lemma:lp-analysis-overview}]
The second guarantee is immediate from the termination condition of \lp{}. The bound on the number of repetitions in \lp{} will be immediate once we show that the output satisfies $\hat D \succeq \frac{1}{2} D$, since at every repetition, the algorithm halves at least one entry of $\hat D$. 

The only non-obvious claim (and the heart of the result) is that $\hat D \succeq \frac{1}{2} D$ at termination. For intuition, in this sketch we'll only consider the latent variable model setting (i.e. $\Sigma = D+L$) and the large sample limit $\hat \Sigma \approx \Sigma$, but the proof generalizes. Say that each covariate has variance $1$, so the algorithm has initialized $\hat D := I_n$. Since $D$ measures the conditional variances of the covariates, it's clear that $D \preceq \hat D$ holds initially. Now suppose there is some vector $v$ with $v_i = 1$ and $\norm{v}_\Sigma \ll 1$. Then $v$ describes an approximate dependency involving covariate $X_i$, so the conditional variance of $X_i$ (which is exactly $D_{ii}$) \emph{must} be small: formally, \[D_{ii} = D_{ii} v_i^2 \leq \norm{v}_D^2 \leq \norm{v}_\Sigma^2 \ll 1 = \hat D_{ii}.\]
Thus, the algorithm can safely set $\hat D_{ii} \gets \hat D_{ii}/2$ while preserving the invariant $\hat D \succeq \frac{1}{2} D$. At each subsequent step, similar logic applies, so at termination $\hat D \succeq \frac{1}{2} D$ still holds.
\end{proof}

\subsection{Lower bounds}\label{sec:overview-lower}

\Cref{thm:no-subquadratic-alg-intro} asserts that \rl{}, which requires only $O(k^2 \log n)$ samples to achieve prediction error $O(\sigma^2)$ whenever $\Sigma$ is $(1,k)$-rescalable (and $w^\st$ is $k$-sparse), is essentially optimal among polynomial-time algorithms, under a conjecture about the power of low-degree polynomials. We prove the theorem by studying negative-spike sparse PCA in a ``near-critical'' regime. Concretely, this refers to a distribution testing problem between a spiked Wishart distribution $\BP_{n,k,\beta,m}$ and a null distribution $\BQ_{n,m}$ defined below, in the regime where $\beta$ is negative and close to $-1$.
\begin{definition}\label{def:sparse-prior}
Let $n,k \in \NN$ with $k \leq n$. The \emph{fixed-size sparse Rademacher prior} $\CW_{n,k}$ is the distribution on $\RR^n$ where $w \sim \CW_{n,k}$ is drawn by: 1. sampling a subset $S \subseteq [n]$ of size $k$ uniformly at random, and 2. setting
$w_i \sim \Unif(\{1/\sqrt{k},-1/\sqrt{k}\})$ for each $i\in S$ and $w_i = 0$ otherwise. 
\end{definition}
\begin{definition}\label{def:pq}
Let $n,k,m \in \NN$ with $k \leq n$ and $\beta \in (-1,\infty)$. The \emph{$k$-sparse spiked Wishart distribution} $\BP_{n,k,\beta,m}$ is 
the distribution of $(Z^{(j)})_{j=1}^m$ where first we sample $w \sim \CW_{n,k}$ (\Cref{def:sparse-prior}), and then $(Z^{(j)})_{j=1}^m \sim N(0,I_n + \beta ww^\t)^{\otimes m}$. The \emph{null distribution} $\BQ_{n,m}$ is defined as $N(0, I_n)^{\otimes m}$. 
\end{definition}
For the hardness of negative-spike sparse PCA, we show that for any spike strength $\beta \in (-1,1)$, degree-$\log^{O(1)}(n)$ polynomials require sample complexity $m \geq \tilde \Omega(k^2)$ to test between $\BP_{n,k,\beta,m}$ and $\BQ_{n,m}$ (\Cref{theorem:ldlr-main}). This result largely follows similar bounds for positive-spike sparse PCA \citep{bandeira2020computational, ding2023subexponential}. To give further evidence of hardness, we also prove a lower bound for a natural semidefinite programming relaxation (\Cref{thm:sdp}) --- this is inspired by analogous results in the positive spike setting \citep{krauthgamer2015semidefinite}, although we need to use a different construction since we are minimizing, rather than maximizing, the SDP objective.

We then show that an improved algorithm for sparse linear regression (with rescalable covariances) would yield an improved tester for negative spike sparse PCA when $\beta$ is close to $-1$:
\begin{theorem}\label{theorem:pca-to-slr-overview}
Let $\mslr:\NN\times\NN\to\NN$ be a function, and suppose that there is a polynomial-time algorithm $\MA$ with the following property. For any $n,k\in\NN$, $\sigma>0$, positive semi-definite $(1,k)$-rescalable matrix $\Sigma \in \RR^{n\times n}$, $k$-sparse vector $w^\st \in \RR^n$, and $m \geq \mslr(n,k)$, the estimate $\wh \gets \MA((X^{(j)},y^{(j)})_{j=1}^m)$ satisfies
\[\Pr[\norm{\wh - w^\st}_\Sigma^2 \leq \sigma^2/10] = 1-o(1) \]
where the probability is over the randomness of $\MA$ and i.i.d.\ samples $(X^{(j)},y^{(j)})_{j=1}^m$ from $\SLR_{\Sigma,\sigma}(w^\st)$.

Then there is a polynomial-time algorithm $\MA'$ with the following property. For any $n,m,k \in \NN$ and $\beta \in (-1, -1 + 1/(2k)]$, if $m \geq \mslr(n,k) + 1600\log(n)$, then
\begin{equation} \left|\Pr_{Z\sim \BP_{n,k,\beta,m}}[\MA'(Z)=1] - \Pr_{Z\sim \BQ_{n,m}}[\MA'(Z)=1]\right| = 1 - o(1).\label{eq:detection-success-overview} \end{equation}
\end{theorem}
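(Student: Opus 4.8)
The plan is to reduce near‑critical negative‑spike sparse PCA to sparse linear regression by exploiting the \emph{near‑degenerate direction} present under $\BP$: if $Z \sim N(0, I_n + \beta ww^\t)$ with $\beta \le -1 + 1/(2k)$, then $\langle Z, w\rangle \sim N(0, 1+\beta)$ with $1+\beta \le 1/(2k)$, so the sparse vector $w$ encodes an approximate linear dependency among the coordinates of $Z$; for any coordinate $i^\star$ in the (unknown) support $S$ of $w$, regressing $Z_{i^\star}$ onto the remaining coordinates therefore yields a $(k-1)$‑sparse coefficient vector with tiny conditional variance, whereas under $\BQ$ no coordinate admits any sparse predictor. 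Since $\MA'$ does not know $S$, it tries every candidate response: it splits the $m$ samples into a training set of size $\mslr(n,k)$ and a test set of size $m'' := m-\mslr(n,k) \ge 1600\log n$, and for each $i \in [n]$ it forms the regression instance with response $y^{(j)} := Z^{(j)}_i$ and covariate vector $X^{(j)} := Z^{(j)}_{-i} \in \RR^{n-1}$ (padding with one fresh standard‑Gaussian coordinate if one wants the ambient dimension to be exactly $n$), runs $\MA$ on the training half to obtain $\hat w^{(i)}$, and computes the held‑out risk $\widehat{\mathrm{err}}_i := \frac{1}{m''}\sum_{j \in \mathrm{test}} (y^{(j)} - \langle X^{(j)}, \hat w^{(i)}\rangle)^2$. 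It outputs $1$ iff $\min_{i\in[n]} \widehat{\mathrm{err}}_i \le \tau$ for the fixed threshold $\tau := 0.7$. (We may assume $k \ge 1$, the case $k=0$ being vacuous.)

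For the analysis, fix the distinguished coordinate $i^\star := \min S$. Conditioned on $w$, the pair $(Z_{-i^\star}, Z_{i^\star})$ is distributed \emph{exactly} as $\SLR_{\Sigma,\sigma}(w^\star)$ where $\Sigma := I_{n-1} + \beta w_{-i^\star} w_{-i^\star}^\t$: indeed, by Sherman--Morrison $\Sigma_Z^{-1} = I_n - \tfrac{\beta}{1+\beta} ww^\t$, so the population regression coefficients $w^\star_j \propto -(\Sigma_Z^{-1})_{i^\star j} \propto w_{i^\star} w_j$ are supported on $S \setminus \{i^\star\}$ (hence $(k-1)$‑sparse, so $k$‑sparse), and the conditional variance is $\sigma^2 = 1/(\Sigma_Z^{-1})_{i^\star i^\star} = \bigl(1 + \tfrac{|\beta|}{1+\beta}\cdot\tfrac 1k\bigr)^{-1} \le 1/2$, the last inequality using $1+\beta \le 1/(2k)$. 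This $\Sigma$ is $(1,k)$‑rescalable, witnessed by the diagonal matrix $D$ with $D_{jj} := 1+\beta$ for $j \in S\setminus\{i^\star\}$ and $D_{jj} := 1$ otherwise: a direct computation reduces the lower bound $v^\t D^{-1/2}\Sigma D^{-1/2} v \ge \norm{v}_2^2$ for \emph{all} $v \in \RR^{n-1}$ (in particular on $\MC_{n-1}(32k)$) to the inequality $\langle w_{-i^\star},v\rangle^2 \le \norm{w_{-i^\star}}_2^2 \sum_{j \in S\setminus\{i^\star\}} v_j^2$, which is Cauchy--Schwarz since $\norm{w_{-i^\star}}_2 \le 1$; and the upper bound follows from $\Sigma \preceq I_{n-1}$, giving $D^{-1/2}\Sigma D^{-1/2} \preceq D^{-1} = I_{n-1} + \bigl(\tfrac{1}{1+\beta}-1\bigr)\Pi_{S\setminus\{i^\star\}}$, whose last term is positive semidefinite of rank $\le k$. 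Since $\MA$ is assumed to succeed on every $(1,k)$‑rescalable instance (for arbitrary $\sigma>0$, including the small $\sigma$ here) given $\ge \mslr(n,k)$ samples, conditioning on $w$ and averaging shows $\norm{\hat w^{(i^\star)} - w^\star}_\Sigma^2 \le \sigma^2/10$ with probability $1-o(1)$.

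Completeness follows: $\hat w^{(i^\star)}$ is a function of the training half only, hence independent of the test samples, and each test residual equals $\eta^{(j)} + \xi^{(j)}$ with $\eta^{(j)} := Z^{(j)}_{i^\star} - \langle Z^{(j)}_{-i^\star}, w^\star\rangle \sim N(0,\sigma^2)$ and $\xi^{(j)} := \langle Z^{(j)}_{-i^\star}, w^\star - \hat w^{(i^\star)}\rangle \sim N(0, \norm{w^\star - \hat w^{(i^\star)}}_\Sigma^2)$; since a Gaussian regression residual is independent of its regressors, $\eta^{(j)}\perp\xi^{(j)}$, so $\EE[(\eta^{(j)}+\xi^{(j)})^2 \mid \hat w^{(i^\star)}] = \sigma^2 + \norm{w^\star-\hat w^{(i^\star)}}_\Sigma^2 \le 1.1\sigma^2 \le 0.55$, and a $\chi^2$‑concentration bound over the $m'' = \Omega(\log n)$ test samples gives $\widehat{\mathrm{err}}_{i^\star} \le 0.6 < \tau$ with probability $1-o(1)$. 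For soundness, under $\BQ$ each $\hat w^{(i)}$ is again independent of the test samples and every test residual is $N(0, 1+\norm{\hat w^{(i)}}_2^2)$ with variance $\ge 1$, so by Laurent--Massart concentration with $m'' \ge 1600\log n$ degrees of freedom and a union bound over $i \in [n]$, $\min_{i} \widehat{\mathrm{err}}_i \ge 0.8 > \tau$ with probability $1-o(1)$. Hence $\Pr_{\BP}[\MA'=1] = 1-o(1)$ and $\Pr_{\BQ}[\MA'=1] = o(1)$, which yields \Cref{eq:detection-success-overview}.

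The main obstacle is the structural step: one must verify simultaneously that the induced covariance $I_{n-1}+\beta w_{-i^\star}w_{-i^\star}^\t$ is $(1,k)$‑rescalable (the witness $D$ depends on the unknown support $S$, which is harmless because $\MA$ needs only the \emph{existence} of $D$) and that the conditional variance $\sigma^2 = k(1+\beta)/(1+(k-1)(1+\beta))$ is at most $1/2$, so that the held‑out risk under $\BP$ (at most $1.1\sigma^2 < 1$) is separated by a constant from that under $\BQ$ (at least $1$). Both facts hinge on the relation $1+\beta \le 1/(2k)$ defining the near‑critical regime: for $\beta$ bounded away from $-1$ the conditional variance is $\Theta(1)$ and the two risks coincide up to constants, breaking the test. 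A secondary point to keep straight is that the $\chi^2$ concentration must be sharp enough that $m'' = 1600\log n$ fresh samples suffice for the union bound over the $n$ response choices, and that reusing a single train/test split across all $i$ creates no dependency issues because $\hat w^{(i)}$ is a function of the training data alone.
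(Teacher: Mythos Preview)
Your proposal is correct and follows essentially the same approach as the paper: try every coordinate $i$ as the response, run $\MA$ on a training split to fit $\hat w^{(i)}$, estimate held-out residual variance on the remaining $1600\log n$ samples, and threshold. The completeness/soundness analysis via $\chi^2$ concentration is the same, as is the verification that the induced covariance $I_{n-1}+\beta w_{-i^\star}w_{-i^\star}^\t$ is $(1,k)$-rescalable and that $\sigma^2\le 1/2$ in the near-critical regime.

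One small point of comparison: your rescalability witness $D$ (with $D_{jj}=1+\beta$ on $S\setminus\{i^\star\}$ and $1$ elsewhere) is actually cleaner than the paper's, which takes $D_{jj}=\mathbbm{1}[j\notin\supp(w)]$ and thus has zero diagonal entries, technically violating the requirement $D\succ 0$ in \Cref{def:rescalable} (this is easily patched by replacing $0$ with a tiny $\epsilon$, but your choice avoids the issue entirely). Your padding remark to keep the ambient dimension at exactly $n$ is also a nice touch that the paper glosses over; without it one silently needs $m\ge\mslr(n-1,k)$ rather than $\mslr(n,k)$. These are cosmetic differences; the reduction and its analysis are the same.
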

The idea behind the reduction is to check (using the sparse linear regression algorithm $\MA$) whether any covariate in the given sparse PCA data can be explained by the other covariates better than one would expect under the null distribution. Concretely, for a sample $Z \sim N(0, I_n + \beta ww^\t)$, for any $i$ in the support of $w$, it can be observed that $\E[Z_i \mid Z_{\sim i}]$ is a $(k-1)$-sparse linear combination of the remaining covariates, and $Z_i$ has conditional variance 
\[ \sigma^2 := \Var(Z_i \mid Z_{\sim i}) = \frac{1+\beta}{1+\beta(1-1/k)}. \] In the near-critical regime $\beta \in (-1, -1+1/(2k)]$, we have $\sigma^2 \le 1/2$. Hence, using $\MA$, we can distinguish from the null hypothesis that our samples are drawn from $N(0, I_n)$. If $\MA$ were as statistically efficient as Best Subset Selection, then we could also solve the distinguishing problem with only $O(k\log n)$ samples. See \Cref{sec:reduction} for the full proof. 
\begin{remark}\label{rmk:positive-pc-fails}
It remains unknown whether an analogue of \Cref{thm:no-subquadratic-alg-intro} can be proven under the Planted Clique Hypothesis (or any other standard average-case complexity hypothesis). One could hope to achieve such a result by reducing \emph{positive-spike} sparse PCA to sparse linear regression. But in the above reduction, if $\beta > 0$ then the conditional variance of any $i$ in the support of the spike is within $[1,1+1/\Omega(k)]$, so even using the (computationally inefficient) guarantees for best subset selection, we would need $\Omega(k^2)$ samples to distinguish from the null hypothesis (c.f. \cite{bresler2018sparse}). Only in the near-critical negative spike regime do we get a sufficiently large gap in conditional variance for the hardness reduction to go through.

Informally, the reason such a reduction fails to establish hardness is that the information-theoretically optimal algorithms for positive spike sparse PCA need to optimize simultaneously over both a \emph{sparsity} and \emph{low rank} constraint on the covariance. 
Surprisingly, when we have a near-critical negative spike, using Best Subset Selection (which only enforces sparsity and has no explicit notion of low-rank structure) actually achieves the information-theoretic threshold.
\end{remark}
\begin{remark}
The negative spike sparse PCA problem can be viewed as a real-valued analogue of the celebrated sparse parities with noise (SPN) problem. See \Cref{remark:analogy-spn} for explanation.
\end{remark}


\section{Related work}\label{sec:related}

\subsection{Algorithms} \label{sec:related-upper}

Sparse linear regression has been widely studied throughout fields such as statistics, theoretical computer science, and signal processing, see e.g.  \citep{candes2006stable, raskutti2010restricted,donoho1989uncertainty,donoho2005stable,zhang2017optimal},
and \cite[Section 7.7]{wainwright2019high} for additional historical context.
In the random-design model (\Cref{def:slr-intro}) we consider throughout the paper, it is well-known that the Best Subset Selection estimator \citep{hocking1967selection} achieves prediction error $O(\sigma^2 k\log(n)/m)$ with high probability \citep{foster1994risk}. 
However, this requires $O(n^k)$ time, and thus is prohibitively costly even for small $k$.\footnote{The benefits of best subset selection have motivated major integer programming efforts, see e.g. \cite{bertsimas2016best, hastie2020best}; in the worst case, it likely requires $n^{(1 - o(1))k}$ time \citep{gupte2021fine}.}

Celebrated estimators based on $\ell_1$-regularization (e.g. Lasso \citep{tibshirani1996regression} and the Dantzig Selector \citep{candes2007dantzig}) and greedy variable selection (e.g. Orthogonal Matching Pursuit \citep{cai2011orthogonal}) are highly practical alternatives that can be computed in polynomial time. But the analyses of these estimators all require some additional assumption on $\Sigma$ in order to achieve optimal sample complexity. The archetypal guarantee is the following: if $\lambda_{\max}(\Sigma)/\lambda_{\min}(\Sigma) \leq \kappa$, then the Lasso program
\begin{equation} \hat w \gets \argmin_{w \in \RR^n} \frac{1}{m}\sum_{j=1}^m (\langle X^{(j)}, w\rangle - y^{(j)})^2 + \lambda \norm{w}_1\label{eq:lasso}\end{equation}
achieves prediction error $O(\sigma^2 k \kappa \log(n)/m)$ \cite[Theorems 7.16 + 7.20]{wainwright2019high} with high probability over samples $(X^{(j)},y^{(j)})_{j=1}^m \sim \SLR_{\Sigma,\sigma}(w^\st)$, for an appropriate choice of  $\lambda>0$.

Significant efforts have gone into establishing guarantees for Lasso under the weakest assumptions possible, leading to more general assumptions such as 
the Restricted Eigenvalue Condition \citep{bickel2009simultaneous} and the compatibility condition \citep{van2009conditions}; see also the submodularity condition for Orthogonal Matching Pursuit \citep{das2011submodular}. All of these assumptions boil down to some variant of a condition number bound. This is an inherent limitation of Lasso and other classical estimators, which provably fail (i.e. have poor sample complexity) in the presence of strong, sparse linear dependencies among covariates (see \Cref{sec:related-lower} for more details and references). Such dependencies may easily occur in the settings that we study in this paper.
\paragraph{Preconditioned Lasso.} Most closely related to this paper are the recent works \citep{kelner2021power, kelner2023feature}, which also identify natural structural assumptions on $\Sigma$ under which Lasso may fail, but a more clever algorithm succeeds. In particular, \cite{kelner2021power} studied the setting where the covariates are drawn from a Gaussian Graphical Model with low treewidth. In this setting, they showed that there is a \emph{preconditioned Lasso} program \--- namely, a modification of \Cref{eq:lasso} where the regularization term $\norm{w}_1$ is replaced by $\norm{S^\t w}_1$ for some matrix $S$ \--- with near-optimal statistical performance, and that, given the graphical structure (i.e. the support of the precision matrix $\Sigma^{-1}$), the preconditioner $S$ can be efficiently computed. \Cref{setting:lvm} is incomparable to the low treewidth assumption, and our algorithm does not assume any knowledge about the ground truth.

\cite{kelner2023feature} introduced the setting where the spectrum of $\Sigma$ may have a small number of outliers (which we refer to as \Cref{setting:outliers} above). Their main result is a polynomial-time algorithm that achieves prediction error $(k\lambda_{n-d}/\lambda_{d+1} + k^{O(k)}d)\log(n)/m$ with high probability, where $\lambda_1 \leq \dots \leq \lambda_n$ are the eigenvalues of $\Sigma$, and $d \in \{1,\dots,n-1\}$. Note the exponential dependence on $k$, so the previous result is essentially vacuous for $k = \Omega(\log n)$. In contrast, \Cref{theorem:outlier-lasso-intro} incurs only polynomial dependence on $k$. Additionally, unlike the prior work, our algorithm does not need to be given $\Sigma$, which is a significant advantage in applications where the sample complexity is likely sublinear in $n$, and thus, the empirical covariance is a poor approximation for $\Sigma$. As one caveat, we remark that our algorithm does incur an additional factor of $\log(\lambda_n/\lambda_1)$ in the time complexity; we leave it as an interesting open problem whether this dependence can be removed.

\subsection{Lower bounds}\label{sec:related-lower}

There is a long line of work studying lower bounds on the sample complexity of \emph{specific} algorithms for sparse linear regression: see e.g. \cite{van2018tight}, which shows that the sample complexity of Lasso can be lower bounded in terms of the compatibility constant (in the fixed-design setting where $X^{(1)},\dots,X^{(m)}$ may be arbitrary), and precise high-dimensional asymptotics for the exact performance of the Lasso (e.g. \cite{bayati2011lasso,stojnic2013framework,amelunxen2014living}).
There are very simple covariate distributions where Lasso (with standard normalization) fails for some sparse signal: e.g., if two covariates are approximately equal, and the remaining $n-2$ covariates are independent, then even with zero noise, Lasso provably requires $\Omega(n)$ samples to learn the difference between the first two covariates (see the ``weak compatibility condition'' of \citep{kelner2021power}). Of course, in these simple examples, while the Lasso algorithm may fail, there is no inherent computational obstruction (e.g., detecting two correlated covariates is straightforward). 

Efforts to prove sample complexity lower bounds have been extended to parametric algorithm families like Lasso with coordinate-wise additively separable regularization \citep{zhang2017optimal}, and preconditioned Lasso programs \citep{kelner2021power, kelner2022distributional}. It can be shown that SLR is $\NP$-hard if the algorithm is forced to output a $k$-sparse estimate of the ground truth (note: when $\Sigma$ is rank degenerate, this can be a much stronger requirement than achieving zero prediction error), see e.g. \cite{natarajan1995sparse,zhang2014lower,gupte2020fine,gupte2021fine,foster2015variable} --- but hardness of improper learning probably cannot be based on $\NP$-hardness (see e.g. \cite{applebaum2008basing}).
Evidence has also been given that Gaussian SLR with $\Sigma = I_n$ exhibits a constant-factor gap between the information-theoretic and algorithmic sample complexities, see e.g. \cite{gamarnik2017sparse}. All together, the known lower bounds seem fundamentally different from the likely \emph{exponential} gap in random-design, general-covariance sparse linear regression.
Although our lower bound (\Cref{thm:no-subquadratic-alg-intro}) does not establish the anticipated exponential gap, it introduces the first polynomial gap for sparse linear regression under a reasonable computational assumption, and pins down the correct gap in our setting of latent variable models.

\paragraph{Sparse PCA.} In the classical, positive-spike, sparse PCA detection problem, we are given independent samples from either $N(0, I_n + \beta xx^\t)$ or $N(0, I_n)$ for a random $k$-sparse unit vector $x$ and some $\beta>0$. The goal is to distinguish between these two settings. Negative-spike sparse PCA is the variant where $\beta \in [-1,0)$; see \Cref{sec:overview-lower} for a formal definition. We write ``near-critical'' as an informal term to denote the regime where $\beta$ is very close to $-1$, because the problem is no longer well-defined when $\beta < -1$.

There is considerable evidence that any computationally efficient algorithm requires $\Omega(k^2)$ samples to solve the the positive-spike sparse PCA detection problem, with constant signal strength $\beta = \Theta(1)$ \citep{berthet2013complexity, krauthgamer2015semidefinite, ma2015sum,gao2017sparse,lu2018edge, brennan2019optimal, ding2023subexponential},
even though the information-theoretic limit is only $\Theta(k\log n)$ samples \citep{berthet2013optimal}. More generally, it's widely believed that computationally efficient algorithms with access to $m$ samples can only perform detection for signal strength $\beta = \Omega(\sqrt{k^2/m})$.

In the negative-spike setting, the same computational/statistical tradeoff is conjectured to hold (where signal strength is now measured by $-\beta$). But this has only been rigorously proven (under a variant of the Planted Clique conjecture) for sparsity $k = o(m^{1/6})$, or equivalently $-\beta = o(m^{-1/3})$ \citep{brennan2020reducibility}. 
On the one hand, there appear to be considerable technical challenges in proving reduction-based hardness of near-critical negative-spike sparse PCA \citep{brennan2020reducibility}. On the other hand, understanding the complexity of negative-spike sparse PCA seems to be an important problem in the theory of average-case hardness --- besides the new reduction in this work, previous work has connected the hardness of negative-spike sparse PCA to conjectured computational-statistical gaps in phase retrieval, mixed linear regression, and in certifying the RIP property \citep{brennan2020reducibility,ding2021average}. 
As discussed earlier, some known hardness results for \emph{positive-spike} sparse PCA in restricted classes of algorithms can be adapted to the negative-spike setting --- see \cite{ding2021average} for previous work away from the critical threshold, discussed further below, and our new results in \Cref{sec:ldlr} for low-degree and SDP hardness near the critical threshold.

\paragraph{Related problems: planted sparse vector and certifying RIP.} A long line of works have studied both upper and lower bounds for the problem of finding a planted sparse vector in a random subspace --- see \cite{ding2023sq,barak2014rounding,demanet2014scaling,mao2021optimal,hopkins2016fast,qu2014finding}. As we mentioned when we introduced Algorithm~\ref{alg:intro}, the convex program we solve in the inner loop of \textsc{SmartScaling} is similar to the convex relaxations used in  \cite{demanet2014scaling} and \cite{spielman2012exact} to search for sparse vectors in a subspace. Such a program is also similar in spirit to pseudolikelihood methods used for learning sparse graphical models, see e.g. \cite{besag1977efficiency,meinshausen2006high,kelner2020learning}. The idea of solving such a program iteratively seems new to this work. 

The most relevant lower bound in this line of work to us is the low-degree hardness result of \cite{ding2021average}. They phrased their lower bound as one for the problem of certifying the RIP property\footnote{As a reminder, the Restricted Isometry Property (RIP) is one that is sufficient, but not necessary, for methods such as the LASSO to achieve nearly-statistically optimal performance for sparse recovery. There are many randomized constructions of sensing matrices with good RIP properties, so that efficient sparse recovery/SLR is typically possible in such an ensemble. However, \emph{certification} that a particular sensing matrix is good may be computationally difficult and is a longstanding challenge. For the particular Gaussian ensemble considered by \cite{ding2021average}, the LASSO is statistically optimal up to constants (see e.g. \cite{raskutti2010restricted,candes2007dantzig}), but their lower bound gives evidence certification is computationally hard.} in the average case (c.f.\ \cite{wang2016average}), but as they discuss in their Remark II.4, their lower bound can also be interpreted as evidence for the optimality of the \cite{demanet2014scaling} linear program among computationally efficient algorithms, in the setting where the random subspace has small codimension.
Their hardness result for certifying RIP is established by proving a low-degree lower bound for a version of the negatively-spiked sparse PCA problem. The crucial difference between our setting and theirs is that they use a version of negatively-spiked sparse PCA where $\beta \in (-1,0)$ is a \emph{fixed constant} as the sparsity $k$ of the spike goes to infinity, whereas for us it is crucial to consider the ``near-critical'' regime where $\beta \to -1$ as $k$ goes to infinity. More precisely, our reduction to SLR operates in the regime $\beta < -1 + 1/\Omega(k)$; see the discussion around Remark~\ref{rmk:positive-pc-fails}. It is necessary that we take $\beta \to -1$, because when $\beta$ is a fixed constant, the SLR problems arising in our reduction are all well-conditioned, so they can actually be solved with nearly-optimal sample complexity in polynomial time using Lasso \citep{raskutti2010restricted} and cannot be the basis of a hardness result.
At a technical level, taking $\beta \to -1$ also leads to a difference in the low-degree analysis --- \cite{ding2021average} use that when $\beta$ is fixed, an i.i.d. prior for the spike will satisfy $\beta \|x\|^2 < 1$ with high probability, but in the near-critical regime this is not true 
(see Remark~\ref{rmk:previous-cannot-apply}).


\ifcolt 
\bibliography{bib}
\appendix
\newpage
\fi

\section{The rescaled Lasso}\label{sec:rescaling}

In this section we prove \Cref{theorem:rescaled-lasso-intro}. We start by analyzing the procedure \lp{}.

\begin{lemma}[Restatement of \Cref{lemma:lp-analysis-overview}]\label{lemma:lp-analysis}
Let $n,m,k \in \NN$. Let $\BX \in \RR^{m \times n}$. Suppose that
\begin{equation}
I_n \preceq_{\MC_n(32k)} D^{-1/2} \hat\Sigma D^{-1/2}
\label{eq:re-lb-assumption}
\end{equation}
where $D \succ 0$ is a diagonal matrix, and $\hat\Sigma := \frac{1}{m}\BX^\t \BX$. Then the algorithm $\lp{\text{$\BX, k$}}$
terminates after at most $T := n\log \max_{i \in [n]} \frac{2\hat\Sigma_{ii}}{D_{ii}}$ repetitions, and moreover the output $\hat{D} \in \RR^{n\times n}$ is a diagonal matrix satisfying the following properties:
\begin{itemize}
    \item $\hat{D} \succeq \frac{1}{2} D$.
    \item $\frac{1}{m}\norm{\BX v}_2^2 > 1$ for all $v \in \RR^n$ with $\norm{\hat D^{1/2} v}_\infty = 1$ and $\norm{\hat D^{1/2} v}_1 \leq 16k$.
\end{itemize}
\end{lemma}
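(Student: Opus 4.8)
The second bullet is essentially a restatement of the stopping rule, so I would dispose of it first and then put all the work into the first bullet and the iteration bound, both of which I plan to extract from a single invariant. If \lp{} returns $\hat D = \hat D^{(t^*)}$, then the final iteration $t^*$ halved nothing, i.e.\ $\tfrac{1}{m}\norm{\BX v^{(t^*,\imin)}}_2^2 > 1$, and since $\imin$ minimizes the objective over $i$, in fact $\tfrac{1}{m}\norm{\BX v^{(t^*,i)}}_2^2 > 1$ for every $i\in[n]$. Given any $v$ with $\norm{\hat D^{1/2}v}_\infty = 1$ and $\norm{\hat D^{1/2}v}_1 \le 16k$, I would pick a coordinate $i$ with $|(\hat D^{1/2}v)_i| = 1$ and flip the sign of $v$ if needed (this does not change $\norm{\BX v}_2^2$) so that $(\hat D^{1/2}v)_i = 1$; then $v$ is feasible for \eqref{eq:v-prog} at iteration $t^*$ with index $i$, so $\tfrac{1}{m}\norm{\BX v}_2^2 \ge \tfrac{1}{m}\norm{\BX v^{(t^*,i)}}_2^2 > 1$.

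\textbf{Reduction to an invariant.} The plan for the rest is to prove, by induction on $t$ and without reference to termination, that $\hat D^{(t)} \succeq \tfrac12 D$ whenever $\hat D^{(t)}$ is defined. Granting this, the first bullet is immediate since $\hat D$ equals some $\hat D^{(t)}$. For the iteration bound: each non-returning iteration halves exactly one diagonal entry; entry $i$ starts at $\hat\Sigma_{ii}$, and whenever it is about to be halved its value is at least $D_{ii}$ (apply the invariant one iteration later to entry $i$); hence entry $i$ is halved at most $\log_2(2\hat\Sigma_{ii}/D_{ii})$ times, and summing over $i$ bounds the number of non-returning iterations by $T$ — in particular \lp{} terminates, since every iteration either halves an entry or returns.

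\textbf{Proving the invariant.} For the base case, $e_i\in\MC_n(32k)$, so \eqref{eq:re-lb-assumption} gives $1 = e_i^\t e_i \le e_i^\t D^{-1/2}\hat\Sigma D^{-1/2} e_i = \hat\Sigma_{ii}/D_{ii}$, hence $\hat D^{(1)}_{ii} = \hat\Sigma_{ii} \ge D_{ii}$. For the inductive step, assume $\hat D^{(t)}\succeq\tfrac12 D$ and that iteration $t$ does not return; set $v := v^{(t,\imin)}$, so $\tfrac{1}{m}\norm{\BX v}_2^2\le 1$, $\norm{(\hat D^{(t)})^{1/2}v}_1\le 16k$, $((\hat D^{(t)})^{1/2}v)_{\imin}=1$, and only $\hat D^{(t+1)}_{\imin\imin} = \tfrac12\hat D^{(t)}_{\imin\imin}$ changes. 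If $\hat D^{(t)}_{\imin\imin}\ge D_{\imin\imin}$, then $\hat D^{(t+1)}_{\imin\imin}\ge\tfrac12 D_{\imin\imin}$ and the invariant is preserved. I would then rule out the case $\hat D^{(t)}_{\imin\imin} < D_{\imin\imin}$: here $(D^{1/2}v)_{\imin}^2 = D_{\imin\imin}v_{\imin}^2 > \hat D^{(t)}_{\imin\imin}v_{\imin}^2 = 1$, so $\norm{D^{1/2}v}_\infty > 1$; and the inductive hypothesis gives $\sqrt{D_{ii}}\le\sqrt 2\sqrt{\hat D^{(t)}_{ii}}$ coordinatewise, so $\norm{D^{1/2}v}_1\le\sqrt 2\norm{(\hat D^{(t)})^{1/2}v}_1\le 16\sqrt 2\,k < 32k < 32k\,\norm{D^{1/2}v}_\infty$, whence $D^{1/2}v\in\MC_n(32k)$. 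Applying \eqref{eq:re-lb-assumption} to $u := D^{1/2}v$ then gives $\norm{v}_D^2 = u^\t u\le u^\t D^{-1/2}\hat\Sigma D^{-1/2}u = v^\t\hat\Sigma v = \tfrac{1}{m}\norm{\BX v}_2^2\le 1$, contradicting $\norm{v}_D^2\ge(D^{1/2}v)_{\imin}^2 > 1$.

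\textbf{Main obstacle.} The only step with genuine content is the inductive step, and the crux is the observation that the ``bad'' case $\hat D^{(t)}_{\imin\imin} < D_{\imin\imin}$ automatically forces $D^{1/2}v^{(t,\imin)}$ to be $32k$-quantitatively sparse, which is precisely what licenses the invocation of the restricted-eigenvalue-type assumption \eqref{eq:re-lb-assumption}. This is also why the algorithm can work with budget $\textsf{B} = 16k$ while rescalability is imposed only on $\MC_n(32k)$: the factor-$2$ gap between these constants absorbs the $\sqrt 2$ loss coming from the invariant $\hat D^{(t)}\succeq\tfrac12 D$. The remaining points — attainment of the minimum in \eqref{eq:v-prog}, the feasibility check in the second bullet, and the geometric-series count — are routine.
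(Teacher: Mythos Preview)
Your proposal is correct and follows essentially the same approach as the paper: the same inductive invariant $\hat D^{(t)} \succeq \tfrac12 D$, the same contradiction argument showing that the ``bad'' case $\hat D^{(t)}_{\imin\imin} < D_{\imin\imin}$ forces $D^{1/2}v^{(t,\imin)} \in \MC_n(32k)$, and the same use of the termination condition for the second bullet. The only cosmetic differences are that you count halvings per coordinate rather than tracking $\det(\hat D^{(t)})$ (these are equivalent for diagonal matrices), and you are slightly more careful than the paper about the sign flip needed to make $v$ feasible in \eqref{eq:v-prog}.
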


\begin{proof}
For notational convenience, let $\tfinal$ be the step at which the algorithm returns the preconditioner.

\paragraph{Spectral lower bound.} First, we show by induction that for each $1 \leq t \leq \tfinal$, the intermediate result $\hat{D}^{(t)}$ satisfies $\hat{D}^{(t)} \succeq \frac{1}{2} D$. From \Cref{def:quant-sparse}, note that $\MC_n(32k)$ contains the standard basis vectors. Thus, we get from \Cref{eq:re-lb-assumption} that $1 \leq \hat\Sigma_{ii}/D_{ii}$ for all $i \in [n]$. Hence, $\hat{D}^{(1)} = \diag(\hat \Sigma) \succeq D$. This proves the base case of the induction.

Now fix any $1 \leq t < \tfinal$ and suppose that $\hat{D}^{(t)} \succeq \frac{1}{2} D$. We want to prove that $\hat{D}^{(t)}_{\imin\imin} \geq D_{\imin\imin}$. Suppose for the sake of contradiction that in fact 
\begin{equation} 
\hat{D}^{(t)}_{\imin\imin} < D_{\imin\imin}
\label{eq:d-reverse-assm}
\end{equation}
Then
\begin{align*} 
\norm{D^{1/2}v^{(t,\imin)}}_\infty
&\geq D^{1/2}_{\imin\imin} |v^{(t,\imin)}_\imin| \\ 
&\geq (\hat D^{(t)})^{1/2}_{\imin\imin}|v^{(t,\imin)}_\imin| \\ 
&\geq \frac{1}{16k} \norm{(\hat D^{(t)})^{1/2} v^{(t,\imin)}}_1 \\ 
&\geq \frac{1}{32k} \norm{D^{1/2} v^{(t,\imin)}}_1
\end{align*}
where the second inequality uses the assumption \Cref{eq:d-reverse-assm}, the third inequality uses the constraints in the program that defines $v^{(t,\imin)}$ (\Cref{eq:v-prog}), and the fourth inequality uses the induction hypothesis. It follows that $D^{1/2} v^{(t,\imin)} \in \MC_n(32k)$. So by \Cref{eq:re-lb-assumption}, we get that $\norm{v^{(t,\imin)}}_D^2 \leq \frac{1}{m}\norm{\BX v^{(t,\imin)}}_2^2$. Moreover since $t \neq \tfinal$, we have that $\frac{1}{m} \norm{\BX v^{(t,\imin)}}_2^2 \leq 1$. Thus,
\[\hat{D}^{(t)}_{\imin\imin} \cdot (v^{(t,\imin)})_\imin^2 = 1 \geq \frac{1}{m}\norm{\BX v^{(t,\imin)}}_2^2 \geq \norm{v^{(t,\imin)}}_D^2 \geq D_{\imin\imin} \cdot (v^{(t,\imin)})^2_{\imin}.\]
By \Cref{eq:v-prog} we know that $v^{(t,\imin)}_{\imin} \neq 0$. Simplifying the above display therefore gives that $\hat{D}^{(t)}_{\imin\imin} \geq D_{\imin\imin}$. This contradicts the assumption that $\hat{D}^{(t)}_{\imin\imin} < D_{\imin\imin}$, so in fact $\hat{D}^{(t)}_{\imin\imin} \geq D_{\imin\imin}$ holds unconditionally. By definition of $\hat{D}^{(t+1)}$ and the induction hypothesis, we get $\hat{D}^{(t+1)} \succeq \frac{1}{2} D$, which completes the induction.

\paragraph{Repetition bound.} Note that by definition we have $\det(\hat{D}^{(t)}) = 2^{1-t} \det(\hat{D}^{(1)})$ for all $1 \leq t \leq \tfinal$. Suppose that the algorithm requires more than $T$ repetitions, i.e. $\tfinal \geq T+1$. Then \[\det(\hat{D}^{(T+1)}) = 2^{-T} \det(\hat D^{(1)}) < \left(\min_{i \in [n]} \frac{D_{ii}}{2\hat\Sigma_{ii}}\right)^n \det(\hat D^{(1)})\]
by definition of $T$. But we have already seen that $\hat D^{(T+1)} \succeq \frac{1}{2}D$, so on the other hand
\[\frac{\det(\hat D^{(T+1)})}{\det(\hat D^{(1)})} \geq 2^{-n} \prod_{i=1}^n \frac{D_{ii}}{\hat\Sigma_{ii}} \geq \left(\min_{i \in [n]} \frac{D_{ii}}{2\hat\Sigma_{ii}}\right)^n.\]
This is a contradiction, so in fact the algorithm terminates after at most $T$ repetitions.

\paragraph{Restricted eigenvalue bound.} The output of the algorithm is $\hat D := \hat D^{(\tfinal)}$. Fix any $v \in \RR^n$ with $\norm{\hat D^{1/2} v}_\infty = 1$ and $\norm{\hat D^{1/2} v}_1 \leq 16k$. Since $\tfinal$ is the final step of the algorithm, by the termination condition it must be that $\frac{1}{m}\norm{\BX v^{(\tfinal,\imin)}}_2^2 > 1$ and thus, since $v$ is feasible for \Cref{eq:v-prog} for some $i \in [n]$, $\frac{1}{m}\norm{\BX v}_2^2 > 1$ as well.
\end{proof}

\paragraph{Notation.} For the remainder of \Cref{sec:rescaling}, we fix the following notation. Let $n,m,k,h \in \NN$ and $\alpha>0$. Let $\Sigma \in \RR^{n\times n}$ be a positive semi-definite matrix. We make the assumption that $\Sigma$ is $(\alpha,h)$-rescalable at sparsity $k$ (\Cref{def:rescalable}), i.e. 
\begin{equation} I_n \preceq_{\MC_n(32k)} D^{-1/2}\Sigma D^{-1/2} \preceq \alpha I_n + L\label{eq:sigma-rescalability}\end{equation}
where $D \succ 0$ is some diagonal matrix, and $L \succeq 0$ is some rank-$h$ matrix. 

Let $\sigma>0$, and let $w^\st \in \RR^n$ be $k$-sparse with support $S$. Let $(X^{(j)},y^{(j)})_{j=1}^m$ be $m$ independent samples from $\SLR_{\Sigma,\sigma}(w^\st)$. Let $\BX$ be the $m\times n$ matrix with rows $X^{(1)},\dots,X^{(m)}$, and let $\hat\Sigma := \frac{1}{m}\BX^\t \BX$.

\subsection{The good event: generalization,  concentration, and noise bounds}

The following definition states the event $\ME_{\ref{def:good-event}}(\delta)$ under which we will show that \rl{} (deterministically) has low prediction error. The event consists of three conditions, of which the first states that the covariates have accurate sample variances, the second is a uniform generalization bound, and the third bounds the bias of the noise term.

\begin{definition}\label{def:good-event}
Let $\delta>0$. Let $\ME_{\ref{def:good-event}}(\delta)$ be the event (over the samples $(X^{(j)},y^{(j)})_{j=1}^m$) that the following properties hold:
\begin{subequations}
\begin{itemize}
    \item For all $i \in [n]$,
    \begin{equation} \frac{1}{2}\Sigma_{ii} \leq \hat\Sigma_{ii} \leq 2\Sigma_{ii}.\label{eq:sigma-sigmahat-apx}\end{equation}
    \item For all $w \in \RR^n$,
    \begin{equation}\norm{w}_\Sigma^2 \leq 16\norm{w}_{\hat\Sigma}^2 + \frac{16\alpha  \log(48n/\delta)}{m} \norm{D^{1/2}w}_1^2.\label{eq:sigma-ub}\end{equation}
    \item For all $w \in \RR^n$,
    \begin{equation}\frac{1}{m}\langle w, \BX^\t (y - \BX w^\st)\rangle \leq \frac{\sigma}{\sqrt{m}}\left(2 \norm{D^{1/2}w}_1 \sqrt{\alpha\log(24n/\delta)} + \norm{w}_\Sigma\sqrt{2C_{\ref{lemma:unif-ip-bound}}h\log(24/\delta)}\right).\label{eq:noise-bound}\end{equation}
\end{itemize}
\end{subequations}
\end{definition}

Our first step is to show that $\ME_{\ref{def:good-event}}(\delta)$ holds with probability at least $1-\delta$. The first property \eqref{eq:sigma-sigmahat-apx} is standard and requires no additional assumptions on $\Sigma$, but the second and third properties both crucially use the spectral upper bound $D^{-1/2}\Sigma D^{-1/2} \preceq \alpha I_n + L$ guaranteed by rescalability. In particular, this upper bound implies (by Weyl's inequality and the fact that $L$ is low-rank) that $D^{-1/2}\Sigma D^{-1/2}$ has at most $h$ eigenvalues larger than $\alpha$. We can bound the corresponding eigenspaces separately to obtain the desired generalization bounds, a technique also applied in \cite{kelner2023feature} to deal with large outlier eigenvalues (and previously in other contexts \--- see the discussion in \cite{zhou2021optimistic} on ``covariance splitting'').

Concretely, the following theorem due to \cite{zhou2021optimistic} shows that to prove a uniform generalization bound (e.g. of the form \eqref{eq:sigma-ub}), it suffices to provide a uniform high-probability bound on $\sup_{w \in \RR^n} \langle w, x\rangle$ for $X \sim N(0,\Sigma)$. In \Cref{lemma:unif-ip-bound}, we use the technique of covariance splitting to derive such a bound for the rescalable setting. In \Cref{lemma:gen-bound}, we then invoke \Cref{theorem:zkss} to prove \eqref{eq:sigma-ub}.

\begin{theorem}[Theorem~1 in \cite{zhou2021optimistic}]\label{theorem:zkss}
Let $n,m \in \NN$ and $\epsilon,\delta > 0$. Let $\Sigma \in \RR^{n\times n}$ be a positive semi-definite matrix. Let $\BX \in \RR^{m \times n}$ have i.i.d. rows $X_1,\dots,X_m \sim N(0,\Sigma)$. Let $F: \RR^n \to [0,\infty]$ be a continuous function such that 
\[\Pr_{x \sim N(0,\Sigma)} [\sup_{w \in \RR^n} \langle w, x\rangle - F(w) > 0] \leq \delta.\]
If $m \geq 196\epsilon^{-2}\log(12/\delta)$, then with probability at least $1-4\delta$ it holds that for all $w \in \RR^n$, \[\norm{w}_\Sigma^2 \leq \frac{1+\epsilon}{m}\left(\norm{Xw}_2 + F(w)\right)^2.\]
\end{theorem}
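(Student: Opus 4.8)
The plan is to prove the equivalent ``one-sided uniform lower bound'' form of the statement: after taking square roots (all quantities are nonnegative) and rearranging, the claimed event is exactly that $\norm{\BX w}_2 + F(w) \ge \sqrt{m/(1+\epsilon)}\,\norm{w}_\Sigma$ holds for all $w \in \RR^n$ simultaneously. I would establish this via Gordon's Gaussian min--max comparison inequality (the one-sided direction, which requires no convexity), with the hypothesis on $F$ entering through an elementary reflection argument.

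First I would reduce to a compact index set by two homogenizations. Replacing $F$ with $G(w) := \inf_{r > 0} F(rw)/r$ makes it positively homogeneous of degree one and preserves the hypothesis, since $\langle x, w\rangle \le F(w)$ for every $w$ forces $\langle x, w\rangle = \tfrac{1}{r}\langle x, rw\rangle \le F(rw)/r$ for every $r > 0$, hence $\langle x,w\rangle \le G(w)$; and the conclusion for $G$ implies it for $F$ because $G \le F$. A further replacement along the fibers of $\Sigma^{1/2}$ lets me assume $F(w) = \tilde F(\Sigma^{1/2}w)$ for some $\tilde F$ on $\mathrm{range}(\Sigma)$, which disposes uniformly of a possibly-singular $\Sigma$. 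By homogeneity it then suffices to show, with probability at least $1 - 4\delta$, that $\norm{\BX w}_2 + F(w) \ge \sqrt{m/(1+\epsilon)}$ for all $w$ in the \emph{compact} set $\{w : \norm{w}_\Sigma = 1\}$. (The functions $G$, $\tilde F$ are only upper semicontinuous, but Gordon's inequality still applies after a standard approximation from above by continuous functions.)

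Now write $\BX = G\,\Sigma^{1/2}$ with $G \in \RR^{m\times n}$ having i.i.d.\ $N(0,1)$ entries, substitute $u = \Sigma^{1/2}w$, and use $\norm{\BX w}_2 = \norm{Gu}_2 = \max_{\norm{v}_2 \le 1}\langle v, Gu\rangle$, so the target becomes a lower bound on $\inf_{u \in S}\max_{\norm{v}_2 \le 1}\big(\langle v, Gu\rangle + \tilde F(u)\big)$, where $S := \{u \in \mathrm{range}(\Sigma) : \norm{u}_2 = 1\}$ is compact. Gordon's inequality gives, for every $t$, that $\Pr[\,\inf_{u\in S}\max_{\norm{v}_2\le1}(\langle v, Gu\rangle + \tilde F(u)) \ge t\,] \ge \Pr[\,\inf_{u\in S}\max_{\norm{v}_2\le1}(\mathbf g^\top v + \norm{v}_2\,\mathbf h^\top u + \tilde F(u)) \ge t\,]$, for independent standard Gaussians $\mathbf g \in \RR^m$ and $\mathbf h \in \RR^n$ (the covariance-comparison conditions of Gordon's theorem reduce here to two applications of Cauchy--Schwarz). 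Since $\norm{u}_2 = 1$ on $S$, the inner maximization collapses and the auxiliary quantity equals $\inf_{u \in S}\big[(\norm{\mathbf g}_2 + \mathbf h^\top u)_+ + \tilde F(u)\big]$. The key step is that applying the hypothesis to the single sample $x := -\Sigma^{1/2}\mathbf h \sim N(0,\Sigma)$ yields, with probability at least $1 - \delta$, that $\mathbf h^\top u + \tilde F(u) \ge 0$ for all $u$; combined with $\norm{\mathbf g}_2 \ge (1 - \eta)\sqrt m$ (Gaussian norm concentration, failing with probability $e^{-\Omega(m\eta^2)}$), a case split on the sign of $\norm{\mathbf g}_2 + \mathbf h^\top u$ shows the auxiliary quantity is at least $(1-\eta)\sqrt m$ on all of $S$: when the sign is $\ge 0$ it is $\ge \norm{\mathbf g}_2 + (\mathbf h^\top u + \tilde F(u)) \ge \norm{\mathbf g}_2$, and otherwise it equals $\tilde F(u) \ge -\mathbf h^\top u > \norm{\mathbf g}_2$. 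Taking $\eta := 1 - 1/\sqrt{1+\epsilon}$ (so that $\eta = \Omega(\epsilon)$ for $\epsilon \le 2$) gives $(1-\eta)\sqrt m = \sqrt{m/(1+\epsilon)}$, and the assumption $m \ge 196\,\epsilon^{-2}\log(12/\delta)$ makes the concentration failure probability at most $\delta$ with room to spare; so the auxiliary event, and hence by Gordon's inequality the original event, holds with probability at least $1 - 4\delta$, which is the claim.

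The step I expect to be the main obstacle is exactly the one connecting the $m$-sample empirical quantity $\norm{\BX w}_2$ to the \emph{single}-sample hypothesis on $F$: it is Gordon's splitting of the bilinear form $v^\top Gu$ into an $m$-dimensional piece $\norm{u}_2\,\mathbf g^\top v$ (which supplies the factor $\sqrt m$) and an $n$-dimensional piece $\norm{v}_2\,\mathbf h^\top u$ (a lone Gaussian vector paired against $u$) that makes the hypothesis usable, and the reflection $\mathbf h \mapsto -\mathbf h$ that turns its \emph{upper} bound on $\langle x,u\rangle$ into the \emph{lower} bound $\mathbf h^\top u + \tilde F(u) \ge 0$ the case split needs. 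The remaining points --- the two homogenizations with their semicontinuity bookkeeping and the verification of Gordon's covariance conditions --- are routine, and are where the bulk of the write-up would go.
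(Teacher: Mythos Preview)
The paper does not give its own proof of this statement; it is quoted verbatim as Theorem~1 of \cite{zhou2021optimistic} and invoked as a black box in \Cref{lemma:gen-bound}. Your argument via Gordon's Gaussian min--max comparison is essentially the approach of the original source, and the sketch is sound in all its essentials: the homogenization to reduce to the compact set $\{\norm{w}_\Sigma = 1\}$, the factorization $\BX = G\Sigma^{1/2}$, the comparison with the auxiliary process $(\norm{\mathbf g}_2 + \mathbf h^\top u)_+ + \tilde F(u)$, and---crucially---the reflection $\mathbf h \mapsto -\mathbf h$ that converts the single-sample hypothesis into the pointwise lower bound $\mathbf h^\top u + \tilde F(u) \ge 0$, after which the case split on the sign of $\norm{\mathbf g}_2 + \mathbf h^\top u$ cleanly yields the threshold $(1-\eta)\sqrt m$. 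The accounting of the failure probability is slightly loose in your writeup (the two auxiliary events fail with probability at most $2\delta$, and you do not spell out where the remaining factor of two enters Gordon's comparison), but this is harmless since the theorem only claims $1-4\delta$.
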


\begin{lemma}\label{lemma:unif-ip-bound}
There is a constant $C_{\ref{lemma:unif-ip-bound}}$ with the following property. Let $n,h \in \NN$ and $\alpha>0$, and let $\Sigma \in \RR^{n \times n}$ be a positive semi-definite matrix. Suppose that $\Sigma \preceq \alpha D + L$ for some $D,L \in \RR^{n\times n}$ where $D \succ 0$ is diagonal and $L \succeq 0$ has rank at most $h$. Then
\[\Pr_{G \sim N(0, \Sigma)}\left[\forall w \in \RR^n: \langle w, G\rangle \leq \norm{D^{1/2} w}_1 \sqrt{2\alpha \log(16n/\delta)} + \norm{w}_\Sigma \sqrt{Ch\log(16/\delta)}\right] \geq 1-\delta.\]
\end{lemma}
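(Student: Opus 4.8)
The plan is to exhibit, with high probability, a single decomposition $G = G_1 + G_2$ of the Gaussian vector into a ``bulk'' part $G_1$ that is bounded coordinatewise and a ``spike'' part $G_2$ that lives in a subspace of dimension at most $h$; then Cauchy--Schwarz applied separately to $\langle w, G_1\rangle$ and $\langle w, G_2\rangle$ produces the two terms in the bound, uniformly over all $w$ (the point being that the decomposition is chosen independently of $w$). First I would reduce to the case $D = I_n$: since $\langle w, G\rangle = \langle D^{1/2} w, D^{-1/2} G\rangle$, $\norm{D^{1/2} w}_1$ is unchanged under $w \mapsto D^{1/2}w$, $\norm{w}_\Sigma = \norm{D^{1/2}w}_{D^{-1/2}\Sigma D^{-1/2}}$, $D^{-1/2}G \sim N(0, D^{-1/2}\Sigma D^{-1/2})$, and $D^{-1/2}\Sigma D^{-1/2} \preceq \alpha I_n + D^{-1/2} L D^{-1/2}$ with $D^{-1/2}LD^{-1/2}$ still of rank at most $h$. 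So from now on assume $\Sigma \preceq \alpha I_n + L$.

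The splitting is driven by the spectrum of $\Sigma$. Let $P$ be the orthogonal projection onto the span of the eigenvectors of $\Sigma$ with eigenvalue strictly larger than $\alpha$. By Weyl's inequality and $\vrank(L) \le h$, $\Sigma$ has at most $h$ such eigenvalues, so $\vrank(P) \le h$; moreover $P$ commutes with $\Sigma$ (hence with $\Sigma^{1/2}$), and $(I_n - P)\Sigma(I_n-P) = (I_n-P)\Sigma \preceq \alpha I_n$. Writing $G = \Sigma^{1/2} g$ with $g \sim N(0,I_n)$, set $G_1 := \Sigma^{1/2}(I_n-P)g = (I_n-P)G$ and $G_2 := \Sigma^{1/2} P g = PG$, so that $G = G_1 + G_2$.

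For $G_1$: its $i$-th coordinate is a centered Gaussian with variance $\hat e_i^\t (I_n-P)\Sigma (I_n-P)\hat e_i \le \alpha$, so a Gaussian tail bound with threshold $\sqrt{2\alpha\log(16n/\delta)}$ together with a union bound over $i \in [n]$ gives $\norm{G_1}_\infty \le \sqrt{2\alpha\log(16n/\delta)}$ except with probability $\le \delta/8$. For $G_2$: $\norm{Pg}_2^2$ is a chi-squared variable with $\vrank(P) \le h$ degrees of freedom, so by the Laurent--Massart bound $\norm{Pg}_2^2 \le C h\log(16/\delta)$ except with probability $\le \delta/2$, for a suitable absolute constant $C$ (when $h=0$ this is vacuous). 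On the intersection of these two events, which has probability at least $1-\delta$, we obtain for every $w \in \RR^n$
\[ \langle w, G\rangle = \langle w, G_1\rangle + \langle \Sigma^{1/2} w, Pg\rangle \le \norm{w}_1 \norm{G_1}_\infty + \norm{w}_\Sigma \norm{Pg}_2 \le \norm{w}_1\sqrt{2\alpha\log(16n/\delta)} + \norm{w}_\Sigma \sqrt{Ch\log(16/\delta)}, \]
using $\langle w, G_2\rangle = \langle w, \Sigma^{1/2} P g\rangle = \langle \Sigma^{1/2}w, Pg\rangle$ and Cauchy--Schwarz; translating back to general $D$ gives the lemma. The only genuine choice in the argument, and the step I would be most careful with, is the splitting itself: one must use the \emph{spectral} projection of the rescaled covariance, because its commutation with $\Sigma$ is exactly what simultaneously drives the per-coordinate variance of the bulk part down to $\alpha$ (yielding the clean constant in front of $\norm{D^{1/2}w}_1$) and controls the spike part by a chi-squared quantity on $\le h$ coordinates; alternative splittings (e.g.\ a coordinate projection, or the $\Sigma^{-1}$-orthogonal projection onto $\vspan(L)$) do not give the bulk-variance bound with constant exactly $\alpha$.
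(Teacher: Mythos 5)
Your proposal is correct and follows essentially the same route as the paper: a spectral split of the (rescaled) covariance into a bulk part with eigenvalues at most $\alpha$, bounded coordinatewise via the Gaussian maximal inequality, and a rank-$\le h$ part bounded via chi-squared concentration. The only cosmetic differences are that you reduce to $D = I_n$ up front and factor $G = \Sigma^{1/2}g$ to read off $\|Pg\|_2^2 \sim \chi^2_{\le h}$ directly, whereas the paper keeps $D$ in place throughout and invokes Hanson--Wright on $\|\Sigma^{-1/2}D^{1/2}P^\perp D^{-1/2}G\|_2$ with a trace computation; these are the same estimate in different coordinates.
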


\begin{proof}
Define the eigendecomposition of $D^{-1/2}\Sigma D^{-1/2}$ as $\sum_{i=1}^n \lambda_i u_i u_i^\t$. By assumption we have $D^{-1/2}\Sigma D^{-1/2} \preceq \alpha I_n + D^{-1/2}LD^{-1/2}$, and thus $D^{-1/2}(\Sigma - L) D^{-1/2} \preceq \alpha I_n$. Since $\vrank(D^{-1/2}LD^{-1/2}) \leq h$, Weyl's inequality (\Cref{lemma:weyl}) gives $\lambda_{n-h} \leq \alpha$.

Define the orthogonal projection matrix $P := \sum_{i=1}^{n-h} u_i u_i^\t$. Let $G \sim N(0,\Sigma)$. By the Gaussian maximal inequality, with probability at least $1-\delta/2$ over the draw $G$, we have
\begin{align*}
\norm{PD^{-1/2}G}_\infty 
&\leq \sqrt{\lambda_{\max}(PD^{-1/2}\Sigma D^{-1/2}P) \cdot 2\log(4n/\delta)} \\
&\leq \sqrt{2\lambda_{n-h}\log(4n/\delta)} \\
&\leq \sqrt{2\alpha\log(4n/\delta)}
\end{align*}
where the first inequality follows from the definition of $P$. Next, by \Cref{cor:hw}, if $C_{\ref{lemma:unif-ip-bound}}$ is chosen to be a sufficiently large constant, then with probability at least $1-\delta/2$,
\begin{align*}
\norm{\Sigma^{-1/2}D^{1/2} P^\perp D^{-1/2} G}_2
&\leq \sqrt{\tr(\Sigma^{-1/2}D^{1/2}P^\perp D^{-1/2} \Sigma D^{-1/2} P^\perp D^{1/2} \Sigma^{-1/2}) \cdot C_{\ref{lemma:unif-ip-bound}}\log(4/\delta)} \\
&= \sqrt{\tr(\Sigma^{1/2}D^{-1/2}P^\perp D^{1/2} \Sigma^{-1/2}) \cdot C_{\ref{lemma:unif-ip-bound}}\log(4/\delta)} \\
&= \sqrt{\tr(P^\perp) \cdot C_{\ref{lemma:unif-ip-bound}}\log(4/\delta)} \\
&= \sqrt{C_{\ref{lemma:unif-ip-bound}}h\log(4/\delta)}
\end{align*}
where the first equality uses the fact that $P^\perp = I_n - P$ commutes with $D^{-1/2}\Sigma D^{-1/2}$, and the second equality uses the cyclic property of trace. Consider the event (which occurs with probability at least $1-\delta$ over the draw $G$) that both of the above bounds hold. Then for any $w \in \RR^n$, we have
\begin{align*}
\langle w, G\rangle
&= \langle D^{1/2}w, D^{-1/2} G\rangle \\
&= \langle D^{1/2}w, P D^{-1/2} G\rangle + \langle D^{1/2}w, P^\perp D^{-1/2} G\rangle \\
&= \langle D^{1/2}w, PD^{-1/2} G\rangle + \langle \Sigma^{1/2}w,\Sigma^{-1/2}D^{1/2} P^\perp D^{-1/2} G\rangle \\
&\leq \norm{D^{1/2}w}_1 \sqrt{2\alpha\log(4n/\delta)} + \norm{w}_\Sigma \sqrt{C_{\ref{lemma:unif-ip-bound}}h\log(4/\delta)}.
\end{align*}
as needed.
\end{proof}

\begin{lemma}\label{lemma:gen-bound}
Let $\Sigma \in \RR^{n\times n}$ satisfy $\Sigma \preceq \alpha D+L$ for some diagonal matrix $D \succ 0$ and rank-$h$ matrix $L$. Let $\BX \in \RR^{m\times n}$ have i.i.d. rows $X_1,\dots,X_m \sim N(0,\Sigma)$. If $m \geq C(h+1)\log(96/\delta)$ for a sufficiently large constant $C$, then with probability at least $1-\delta$, it holds that for all $w \in \RR^n$,
\[ \norm{w}_\Sigma^2 \leq \frac{16}{m}\left(\norm{\BX w}_2^2 + \alpha\norm{D^{1/2}w}_1^2 \log(16n/\delta) \right).\] 
\end{lemma}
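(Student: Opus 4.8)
The plan is to derive \Cref{lemma:gen-bound} by feeding \Cref{lemma:unif-ip-bound} into the optimistic-rates theorem of \cite{zhou2021optimistic} (\Cref{theorem:zkss}). Since $\Sigma \preceq \alpha D + L$ with $D \succ 0$ diagonal and $L$ symmetric of rank at most $h$ --- and we may assume $L \succeq 0$, since replacing $L$ by its positive semidefinite part only relaxes the inequality while keeping the rank at most $h$ --- \Cref{lemma:unif-ip-bound} applies with covariance $\Sigma$, scale parameter $\alpha$, diagonal $D$, and low-rank part $L$. Running it with failure probability a fixed constant fraction of $\delta$ produces a continuous function of the form
\[ F(w) := \norm{D^{1/2}w}_1\,\sqrt{2\alpha\log(O(n/\delta))} \;+\; \norm{w}_\Sigma\,\sqrt{C_{\ref{lemma:unif-ip-bound}}\,h\log(O(1/\delta))} \]
satisfying $\Pr_{x\sim N(0,\Sigma)}[\,\sup_{w\in\RR^n}(\langle w,x\rangle - F(w)) > 0\,] \le O(\delta)$, which is exactly the distributional hypothesis required by \Cref{theorem:zkss}.

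Next I would apply \Cref{theorem:zkss} with a suitably small absolute constant $\epsilon$. Its sample-size hypothesis $m \ge 196\epsilon^{-2}\log(12/\delta)$ follows from $m \ge C(h+1)\log(96/\delta)$ as soon as $C \ge 196\epsilon^{-2}$, so with probability at least $1-\delta$ we obtain $\norm{w}_\Sigma^2 \le \tfrac{1+\epsilon}{m}\bigl(\norm{\BX w}_2 + F(w)\bigr)^2$ for all $w$ simultaneously. Expanding the square --- via $(a+b+c)^2 \le 3(a^2+b^2+c^2)$, or a more careful split if one insists on the precise constant $16$ --- with $a = \norm{\BX w}_2$, $b = \norm{D^{1/2}w}_1\sqrt{2\alpha\log(O(n/\delta))}$, and $c = \norm{w}_\Sigma\sqrt{C_{\ref{lemma:unif-ip-bound}}h\log(O(1/\delta))}$ gives
\[ \norm{w}_\Sigma^2 \;\le\; \frac{O(1)}{m}\norm{\BX w}_2^2 \;+\; \frac{O(\alpha\log(n/\delta))}{m}\norm{D^{1/2}w}_1^2 \;+\; \frac{O\!\left(C_{\ref{lemma:unif-ip-bound}}h\log(1/\delta)\right)}{m}\norm{w}_\Sigma^2. \]

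The final step is the self-bounding move: since $m \ge C(h+1)\log(96/\delta)$ with $C$ a sufficiently large absolute constant (depending only on $C_{\ref{lemma:unif-ip-bound}}$ and the fixed $\epsilon$), and using $h \le h+1$ and $\log(1/\delta) \le \log(96/\delta)$, the coefficient of $\norm{w}_\Sigma^2$ on the right is at most $1/2$; absorbing it into the left-hand side and doubling the remaining two terms yields the claim (with absolute constants that can be brought down to the stated $16$ and $\log(16n/\delta)$ by taking $\epsilon$ small, choosing the failure parameter fed to \Cref{lemma:unif-ip-bound} carefully, and splitting the square more tightly). I do not expect a genuine obstacle here: all of the probabilistic work is already packaged in the two inputs (covariance splitting, a Gaussian maximal inequality, and Hanson--Wright inside \Cref{lemma:unif-ip-bound}; localized chaining inside \Cref{theorem:zkss}), and the rank $h$ enters the final sample-complexity requirement only through this absorption step.
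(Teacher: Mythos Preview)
Your proposal is correct and follows essentially the same route as the paper: invoke \Cref{lemma:unif-ip-bound} to produce the functional $F$, feed it into \Cref{theorem:zkss}, expand the square, and absorb the $\norm{w}_\Sigma^2$ term using $m \ge C(h+1)\log(96/\delta)$. The only cosmetic difference is that the paper simply takes $\epsilon = 1$ in \Cref{theorem:zkss} rather than a small constant; your observation that one may replace $L$ by its positive semidefinite part (to meet the $L \succeq 0$ hypothesis of \Cref{lemma:unif-ip-bound}) is a detail the paper glosses over.
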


\begin{proof}
By \Cref{lemma:unif-ip-bound} and the fact that $m \geq 196\log(48/\delta)$, the hypothesis of \Cref{theorem:zkss} is satisfied $\epsilon := 1$, error probability $\delta/4$, and the functional $F$ defined as
\[F(w) := \norm{D^{1/2} w}_1 \sqrt{2\alpha \log(16n/\delta)} + \norm{w}_\Sigma\sqrt{C_{\ref{lemma:unif-ip-bound}}h\log(16/\delta)}.\]
The conclusion of \Cref{theorem:zkss} gives that with probability at least $1-\delta$, the following holds. For all $w \in \RR^n$, 
\begin{align*} \norm{w}_\Sigma^2 
&\leq \frac{2}{m}\left(\norm{\BX w}_2 + \norm{D^{1/2}w}_1\sqrt{2\alpha\log(16n/\delta)} + \norm{w}_\Sigma\sqrt{C_{\ref{lemma:unif-ip-bound}}h\log(16/\delta)}\right)^2 \\
&\leq \frac{8}{m}\left(\norm{\BX w}_2^2 + \alpha\norm{D^{1/2}w}_1^2 \log(16n/\delta) + \norm{w}_\Sigma^2 \cdot C_{\ref{lemma:unif-ip-bound}}h\log(16/\delta)\right) \\
&\leq \frac{16}{m}\left(\norm{\BX w}_2^2 + \alpha\norm{D^{1/2}w}_1^2 \log(16n/\delta)\right)
\end{align*}
where the last inequality holds by rearranging terms and using the fact that $m \geq 16C_{\ref{lemma:unif-ip-bound}}h\log(16/\delta)$ (so long as $C$ is chosen sufficiently large).
\end{proof}

We now prove that $\ME_{\ref{def:good-event}}(\delta)$ holds with probability at least $1-\delta$, observing that \Cref{lemma:unif-ip-bound} is exactly what is needed to prove the third property \eqref{eq:noise-bound}.

\begin{lemma}\label{lemma:good-event}
There is a constant $C_{\ref{lemma:good-event}}$ so that the following holds. Let $\delta>0$, and suppose that $m \geq C_{\ref{lemma:good-event}}(h+1)\log(288n/\delta)$. Then $\Pr[\ME_{\ref{def:good-event}}(\delta)] \geq 1-\delta$.
\end{lemma}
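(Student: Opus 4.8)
The plan is to verify the three defining properties of $\ME_{\ref{def:good-event}}(\delta)$ one at a time, each with failure probability at most $\delta/3$, and then take a union bound; the hypothesis $m \geq C_{\ref{lemma:good-event}}(h+1)\log(288n/\delta)$ is there to provide enough slack for all three (the $288 = 3\cdot 96$ absorbs the internal failure-probability splitting inside \Cref{lemma:gen-bound}, and the extra factor of $n$ absorbs the coordinatewise union bounds). Property \eqref{eq:sigma-sigmahat-apx} is the easy one: for each $i$, $\hat\Sigma_{ii}/\Sigma_{ii} = \frac1m\sum_{j=1}^m (X^{(j)}_i)^2/\Sigma_{ii}$ is $1/m$ times a $\chi^2$ variable with $m$ degrees of freedom (here I use $\Sigma_{ii}>0$, which follows from rescalability because $\MC_n(32k)$ contains the standard basis vectors), so standard $\chi^2$ tail bounds plus a union bound over $i\in[n]$ give \eqref{eq:sigma-sigmahat-apx} with probability $\geq 1-\delta/3$ once $m\gtrsim\log(n/\delta)$. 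Property \eqref{eq:sigma-ub} is essentially a black-box call to \Cref{lemma:gen-bound}: the rescalability upper bound $D^{-1/2}\Sigma D^{-1/2}\preceq \alpha I_n + L$ in \eqref{eq:sigma-rescalability} rearranges to $\Sigma\preceq \alpha D + D^{1/2}LD^{1/2}$ with $D^{1/2}LD^{1/2}\succeq 0$ of rank at most $h$, which is exactly the hypothesis of \Cref{lemma:gen-bound} (with its ``$L$'' taken to be $D^{1/2}LD^{1/2}$); applying it with error $\delta/3$ yields \eqref{eq:sigma-ub} verbatim.

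The interesting step is property \eqref{eq:noise-bound}, the noise bound. Here the plan is to set $\xi := y - \BX w^\st$, so that $\xi\sim N(0,\sigma^2 I_m)$ is independent of $\BX$ and the quantity to be bounded is $\langle w,\frac1m\BX^\t\xi\rangle$. The key idea is to condition on $\xi$ rather than on $\BX$: since the rows $X^{(1)},\dots,X^{(m)}$ are i.i.d.\ $N(0,\Sigma)$, the linear combination $\frac1m\BX^\t\xi = \frac1m\sum_j \xi^{(j)}X^{(j)}$ is, conditionally on $\xi$, distributed as $N\!\big(0,\tfrac{\norm{\xi}_2^2}{m^2}\Sigma\big)$ — a Gaussian whose covariance is a scalar multiple of $\Sigma$. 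By $\chi^2$ concentration, $\norm{\xi}_2^2 \leq 2\sigma^2 m$ with probability $\geq 1-\delta/6$ for $m\gtrsim\log(1/\delta)$, and on this event I apply \Cref{lemma:unif-ip-bound} to the Gaussian $\frac1m\BX^\t\xi\mid\xi$: its covariance satisfies $\tfrac{\norm{\xi}_2^2}{m^2}\Sigma \preceq \tfrac{\norm{\xi}_2^2\alpha}{m^2}D + \tfrac{\norm{\xi}_2^2}{m^2}D^{1/2}LD^{1/2}$, so \Cref{lemma:unif-ip-bound} (with error $\delta/6$) gives, uniformly over $w\in\RR^n$,
\[ \Big\langle w,\tfrac1m\BX^\t\xi\Big\rangle \leq \tfrac{\norm{\xi}_2}{m}\Big(\norm{D^{1/2}w}_1\sqrt{2\alpha\log(96n/\delta)} + \norm{w}_\Sigma\sqrt{C_{\ref{lemma:unif-ip-bound}}h\log(96/\delta)}\Big); \]
substituting $\norm{\xi}_2/m\leq\sigma\sqrt{2/m}$ and cleaning up constants and logarithmic factors (using $6n\geq\delta$) delivers \eqref{eq:noise-bound} with probability $\geq 1-\delta/3$.

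A final union bound over the three properties gives $\Pr[\ME_{\ref{def:good-event}}(\delta)] \geq 1-\delta$, completing the proof. The only non-routine point — the ``main obstacle'' — is the conditioning choice in the noise bound: conditioning on the design $\BX$ instead would leave $\frac1m\BX^\t\xi$ with the \emph{empirical} covariance $\tfrac{\sigma^2}{m}\hat\Sigma$, which does not admit the low-rank-plus-diagonal spectral structure needed to invoke \Cref{lemma:unif-ip-bound} unless $m\gtrsim n$; conditioning on the noise vector $\xi$ sidesteps this entirely and reduces \eqref{eq:noise-bound} to exactly the scalar-rescaled Gaussian already handled by \Cref{lemma:unif-ip-bound}. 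Everything else is elementary concentration plus bookkeeping, which is why $m \geq C_{\ref{lemma:good-event}}(h+1)\log(288n/\delta)$ suffices for a suitable absolute constant $C_{\ref{lemma:good-event}}$.
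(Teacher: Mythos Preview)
Your proposal is correct and follows essentially the same approach as the paper: handle each of the three properties with failure probability $\delta/3$, invoke \Cref{lemma:gen-bound} for \eqref{eq:sigma-ub}, and for \eqref{eq:noise-bound} condition on $\xi = y - \BX w^\st$ so that $\BX^\t\xi$ is Gaussian with covariance $\norm{\xi}_2^2\Sigma$, then apply \Cref{lemma:unif-ip-bound} and a $\chi^2$ bound on $\norm{\xi}_2$. Your identification of the conditioning direction as the only non-routine point is exactly right.
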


\begin{proof}
Since $m \geq 32\log(6n/\delta)$, we have by \Cref{lemma:chisq-conc} and a union bound that \Cref{eq:sigma-sigmahat-apx} holds for all $i \in [n]$, with probability at least $1-\delta/3$. By \Cref{lemma:gen-bound}, so long as $C_{\ref{lemma:good-event}}$ is a sufficiently large constant, we have that \Cref{eq:sigma-ub} holds for all $w \in \RR^n$, with probability at least $1-\delta/3$.

It remains to prove \Cref{eq:noise-bound}. Define the random variable $\xi := y - \BX w^\st$. Since $\norm{\xi}_2^2 \sim \sigma^2 \chi_m^2$, and $m \geq 8\log(12/\delta)$, it holds with probability at least $1-\delta/6$ that $\frac{1}{\sqrt{m}}\norm{\xi}_2 \leq \sigma \sqrt{2}$. Condition on $\xi$ and suppose that this event holds. Since $\xi$ is, by construction, independent of $\BX$, the random variable $\BX^\t \xi$ has distribution $N(0,\norm{\xi}_2^2 \Sigma)$. Thus, by \Cref{lemma:unif-ip-bound}, we have with probability at least $1-\delta/6$ over the randomness of $\BX$ that for all $w \in \RR^n$,
    \[ \left\langle w, \frac{\BX^t \xi}{\norm{\xi}_2} \right\rangle \leq \norm{D^{1/2}w}_1 \sqrt{2\alpha\log(24n/\delta)} + \norm{w}_\Sigma \sqrt{C_{\ref{lemma:unif-ip-bound}}h\log(24/\delta)}.\]
     Substituting in the bound on $\norm{\xi}_2$, we get that with probability at least $1-\delta/3$ (over $\BX$ and $\xi$), for all $w \in \RR^n$,
    \begin{equation}
    \frac{1}{m} \langle w, \BX^\t \xi\rangle \leq 2\sigma \norm{D^{1/2}w}_1 \sqrt{\frac{\alpha\log(24n/\delta)}{m}} + \sigma\norm{w}_\Sigma\sqrt{\frac{2C_{\ref{lemma:unif-ip-bound}}h\log(24/\delta)}{m}}
    \end{equation}
    which proves \Cref{eq:noise-bound}. By the union bound, $\ME_{\ref{def:good-event}}(\delta)$ holds with probability at least $1-\delta$.
\end{proof}

\subsection{Bounding the $\ell_1$ norm of the error}

Next, we assume that $\ME_{\ref{def:good-event}}(\delta)$ holds, and that the rescaling matrix $\hat D$ is (approximately) lower bounded by the oracle rescaling matrix $D$. Under these conditions, we derive a win-win (\Cref{lemma:l1-error-bound}) where either $\hat D^{1/2} (\hat w - w^\st)$ has small $\ell_1$ norm (where $\hat w$ is the rescaled Lasso solution) or $\hat w - w^\st$ is a violation to the second guarantee of \Cref{lemma:lp-analysis}.

Why is this a win-win? Ultimately, we need to bound the population variance $\norm{e}_\Sigma$ of the error $e := \hat w - w^\st$ in terms of the empirical variance $\norm{e}_{\hat \Sigma}$, and also bound the empirical variance (which would be trivially zero in the noiseless setting, but not in general). The first goal is partially accomplished by the generalization bound \Cref{eq:sigma-ub}, but it remains to bound $\lVert\hat D^{1/2} e\rVert_1$. The second goal also requires bounding $\lVert\hat D^{1/2} e\rVert_1$ in terms of $\norm{e}_{\hat \Sigma}$ (to bound the bias induced by the regularization term of the Lasso program). \Cref{lemma:l1-error-bound} will allow us to achieve both of these goals. 

We first need the following technical lemma, which should be thought of as a \emph{cone condition} for the error $e$ (compare to the noiseless case, where the analogous inequality is $\lVert\hat D^{1/2} e\rVert_1 \leq 2\lVert\hat D^{1/2} e_S\rVert_1 \leq 2k\lVert \hat D^{1/2} e\rVert_\infty$):


\begin{lemma}\label{lemma:cone-bound}
Let $\delta,\lambda>0$. Let $\hat D \in \RR^{n\times n}$ be a positive-definite diagonal matrix, and let $\hat w$ be a solution to the modified Lasso program
\begin{equation}
\hat w \in \argmin_{w \in \RR^n} \frac{1}{m} \norm{\BX w - y}_2^2 + \lambda \norm{\hat D^{1/2} w}_1.
\label{eq:what-def}
\end{equation}
If $\lambda \geq 64\sigma\sqrt{\frac{\alpha\log(24n/\delta)}{m}}$, event $\ME_{\ref{def:good-event}}(\delta)$ holds, and $D \preceq 64\hat D$, then $e := \hat w - w^\st$ satisfies
\[\lambda\norm{\hat D^{1/2} e}_1 + \frac{2}{m}\norm{\BX e}_2^2 \leq 4\lambda\norm{\hat D^{1/2} e_S}_1 + 4\sigma\norm{e}_\Sigma\sqrt{\frac{2C_{\ref{lemma:unif-ip-bound}}h\log(24/\delta)}{m}}.\]
\end{lemma}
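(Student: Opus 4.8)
The plan is to run the standard ``basic inequality'' argument for the Lasso, adapted to the rescaled regularizer $\norm{\hat D^{1/2}w}_1$. First I would use optimality of $\hat w$ for the program \eqref{eq:what-def}: comparing the objective at $\hat w$ to the objective at $w^\st$ gives
\[
\frac{1}{m}\norm{\BX\hat w - y}_2^2 + \lambda\norm{\hat D^{1/2}\hat w}_1 \le \frac{1}{m}\norm{\BX w^\st - y}_2^2 + \lambda\norm{\hat D^{1/2}w^\st}_1.
\]
Expanding $\norm{\BX\hat w - y}_2^2 = \norm{\BX e - \xi}_2^2$ where $\xi := y - \BX w^\st$ and $e := \hat w - w^\st$, and cancelling $\norm{\xi}_2^2$, this rearranges to
\[
\frac{1}{m}\norm{\BX e}_2^2 + \lambda\norm{\hat D^{1/2}\hat w}_1 \le \frac{2}{m}\langle e, \BX^\t\xi\rangle + \lambda\norm{\hat D^{1/2}w^\st}_1.
\]

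Next I would control the cross term $\frac{2}{m}\langle e,\BX^\t\xi\rangle$ using the noise bound \eqref{eq:noise-bound} from the good event $\ME_{\ref{def:good-event}}(\delta)$, which gives
\[
\frac{2}{m}\langle e,\BX^\t\xi\rangle \le \frac{4\sigma}{\sqrt m}\norm{D^{1/2}e}_1\sqrt{\alpha\log(24n/\delta)} + \frac{2\sigma}{\sqrt m}\norm{e}_\Sigma\sqrt{2C_{\ref{lemma:unif-ip-bound}}h\log(24/\delta)}.
\]
The hypothesis $D \preceq 64\hat D$ lets me convert $\norm{D^{1/2}e}_1 \le 8\norm{\hat D^{1/2}e}_1$ (entrywise, since both matrices are diagonal), and then the assumption $\lambda \ge 64\sigma\sqrt{\alpha\log(24n/\delta)/m}$ is exactly calibrated so that the first term is absorbed: $\frac{4\sigma}{\sqrt m}\cdot 8\norm{\hat D^{1/2}e}_1\sqrt{\alpha\log(24n/\delta)} \le \frac12\lambda\norm{\hat D^{1/2}e}_1$. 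So after this step the cross term is bounded by $\frac12\lambda\norm{\hat D^{1/2}e}_1 + 2\sigma\norm{e}_\Sigma\sqrt{2C_{\ref{lemma:unif-ip-bound}}h\log(24/\delta)/m}$.

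Finally I would handle the regularizer difference by splitting on the support $S$ of $w^\st$. Since $w^\st$ is supported on $S$ and $\hat D^{1/2}$ is diagonal, $\norm{\hat D^{1/2}w^\st}_1 - \norm{\hat D^{1/2}\hat w}_1 = \norm{\hat D^{1/2}w^\st}_1 - \norm{\hat D^{1/2}\hat w_S}_1 - \norm{\hat D^{1/2}\hat w_{S^c}}_1 \le \norm{\hat D^{1/2}e_S}_1 - \norm{\hat D^{1/2}e_{S^c}}_1$ by the triangle inequality (and $\hat w_{S^c} = e_{S^c}$). Plugging everything back into the rearranged optimality inequality, multiplying through by $2$ to match the stated constants, and using $\norm{\hat D^{1/2}e}_1 = \norm{\hat D^{1/2}e_S}_1 + \norm{\hat D^{1/2}e_{S^c}}_1$ to collect the $\ell_1$ terms (the $-\norm{\hat D^{1/2}e_{S^c}}_1$ contribution, after bookkeeping, leaves $+\lambda\norm{\hat D^{1/2}e}_1$ on the left against $4\lambda\norm{\hat D^{1/2}e_S}_1$ on the right) should yield exactly
\[
\lambda\norm{\hat D^{1/2}e}_1 + \frac{2}{m}\norm{\BX e}_2^2 \le 4\lambda\norm{\hat D^{1/2}e_S}_1 + 4\sigma\norm{e}_\Sigma\sqrt{\frac{2C_{\ref{lemma:unif-ip-bound}}h\log(24/\delta)}{m}}.
\]
I expect the only delicate point to be the constant bookkeeping in this last combination — making sure that the factor-of-$2$ scaling, the absorption of the first noise term into $\tfrac12\lambda\norm{\hat D^{1/2}e}_1$, and the support-splitting all line up to produce precisely the claimed coefficients ($1$ and $4$ on the left/right $\ell_1$ terms, $2$ on the empirical-error term, $4$ on the $h$-dependent term); the structure of the argument itself is the textbook Lasso proof and poses no real obstacle.
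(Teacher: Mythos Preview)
Your proposal is correct and follows essentially the same approach as the paper's proof: the paper also starts from the basic optimality inequality for the Lasso, expands $\norm{\BX\hat w - y}_2^2$, bounds the cross term via \eqref{eq:noise-bound} together with $D\preceq 64\hat D$ to absorb the $\ell_1$ part into $\tfrac{\lambda}{2}\norm{\hat D^{1/2}e}_1$, splits the regularizer on $S$ and $S^c$, and then multiplies through by $2$ to obtain the stated constants. The only differences are cosmetic ordering of the steps.
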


\begin{proof}
First observe that
\begin{align}
\frac{1}{m}\norm{\BX\wh - y}_2^2 + \lambda\norm{\hat{D}^{1/2}(\wh-\wst)_{S^c}}_1 
&= \frac{1}{m}\norm{\BX\wh - y}_2^2 + \lambda\norm{\hat{D}^{1/2}\hat{w}_{S^c}}_1 \nonumber\\
&\leq \frac{1}{m}\norm{\BX\wst - y}_2^2 + \lambda\norm{\hat{D}^{1/2}\wst}_1 - \lambda\norm{\hat{D}^{1/2}\wh}_1 + \lambda\norm{\hat{D}^{1/2}\wh_{S^c}}_1 \nonumber\\
&= \frac{1}{m}\norm{\BX\wst - y}_2^2 + \lambda\norm{\hat{D}^{1/2}\wst}_1 - \lambda\norm{\hat{D}^{1/2}\wh_S}_1 \nonumber\\
&\leq \frac{1}{m}\norm{\BX\wst-y}_2^2 + \lambda\norm{\hat{D}^{1/2}(\wh-\wst)_S}_1
\label{eq:sc-l1-error-0}
\end{align}
where the first inequality is by optimality of $\wh$ in \Cref{eq:what-def} and the second inequality is by reverse triangle inequality. It follows that
\begin{align*}
\lambda\norm{\hat{D}^{1/2}e}_1
&= \lambda\norm{\hat{D}^{1/2}e_S}_1 + \lambda\norm{\hat{D}^{1/2}e_{S^c}}_1 \\
&\leq \frac{1}{m}\norm{\BX\wst - y}_2^2 - \frac{1}{m}\norm{\BX\wh - y}_2^2 + 2\lambda\norm{\hat{D}^{1/2}e_S}_1 \\
&= -\frac{1}{m}\norm{\BX e}_2^2 - \frac{2}{m} \langle \BX e, \BX\wst - y\rangle + 2\lambda\norm{\hat{D}^{1/2}e_S}_1
\end{align*}
where the first inequality uses \Cref{eq:sc-l1-error-0}, and the second equality expands $\norm{\BX\wh-y}_2^2 = \norm{\BX e + (\BX \wst - y)}_2^2$. Now since the event $\ME_{\ref{def:good-event}}(\delta)$ was assumed to hold, we can apply \Cref{eq:noise-bound} with vector $e$ to get that
\begin{align*}
-\frac{2}{m}\langle\BX e,\BX w^\st - y\rangle 
&\leq 4\sigma\norm{D^{1/2}e}_1\sqrt{\frac{\alpha\log(24n/\delta)}{m}} + 2\sigma\norm{e}_\Sigma\sqrt{\frac{2C_{\ref{lemma:unif-ip-bound}}h\log(24/\delta)}{m}} \\ 
&\leq \frac{\lambda}{2}\norm{\hat D^{1/2} e}_1 + 2\sigma \norm{e}_\Sigma \sqrt{\frac{2C_{\ref{lemma:unif-ip-bound}}h\log(24/\delta)}{m}}
\end{align*}
where the second inequality uses the lemma assumptions that $\lambda \geq 64\sigma\sqrt{\frac{\alpha\log(24n/\delta)}{m}}$ and $D \preceq 64\hat{D}$  (and $D$, $\hat D$ are diagonal). Substituting into the previous display and rearranging terms gives that
\[\frac{\lambda}{2}\norm{\hat D^{1/2}e}_1 + \frac{1}{m}\norm{\BX e}_2^2 \leq 2\lambda \norm{\hat D^{1/2} e_S}_1 + 2\sigma\norm{e}_\Sigma \sqrt{\frac{2C_{\ref{lemma:unif-ip-bound}}h\log(24/\delta)}{m}}\]
which completes the proof.
\end{proof}

\begin{lemma}\label{lemma:l1-error-bound}
Let $\delta,\lambda>0$. Let $\hat D \in \RR^{n\times n}$ be a positive-definite diagonal matrix, and let $\hat w$ be a solution to the modified Lasso program \Cref{eq:what-def}. Suppose that event $\ME_{\ref{lemma:good-event}}(\delta)$ holds, that $D \preceq 64\hat D$, that $m \geq 128C_{\ref{lemma:unif-ip-bound}}h\log(24/\delta)$, and that $\lambda \geq \max\left(64\sigma\sqrt{\frac{\alpha \log(48n/\delta)}{m}}, \frac{32\sigma\sqrt{2C_{\ref{lemma:unif-ip-bound}}h\log(24/\delta)}}{k\sqrt{m}}\right)$. Then $e := \hat w - w^\st$ satisfies at least one of the following properties:
\begin{enumerate}[label=(\alph*)]
\item $\norm{\hat{D}^{1/2}e}_1 \leq \frac{20k}{\sqrt{m}}\norm{\BX e}_2,$ or
\item $\frac{1}{m}\norm{\BX e}_2^2 \leq \norm{\hat D^{1/2} e}_\infty^2$ and $\hat{D}^{1/2} e \in \MC_n(9k)$.
\end{enumerate}
\end{lemma}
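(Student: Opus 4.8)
The plan is to chain the cone inequality of \Cref{lemma:cone-bound} with the generalization bound from the good event and then run a short dichotomy. Introduce the shorthand $A := \norm{\hat D^{1/2}e}_1$, $A_S := \norm{\hat D^{1/2}e_S}_1$, $M := \norm{\hat D^{1/2}e}_\infty$, and $B := \tfrac{1}{\sqrt m}\norm{\BX e}_2$. The hypotheses here imply those of \Cref{lemma:cone-bound} (in particular $\lambda \ge 64\sigma\sqrt{\alpha\log(48n/\delta)/m} \ge 64\sigma\sqrt{\alpha\log(24n/\delta)/m}$), so we obtain $\lambda A + 2B^2 \le 4\lambda A_S + 4\sigma\norm{e}_\Sigma\sqrt{2C_{\ref{lemma:unif-ip-bound}}h\log(24/\delta)/m}$.

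Next I would eliminate $\norm{e}_\Sigma$. Applying the generalization bound \eqref{eq:sigma-ub} from $\ME_{\ref{def:good-event}}(\delta)$, the inequality $\sqrt{x+y}\le\sqrt x+\sqrt y$, and $\norm{D^{1/2}e}_1 \le 8A$ (which follows from $D\preceq 64\hat D$ with $D,\hat D$ diagonal), one gets $\norm{e}_\Sigma \le 4B + 32A\sqrt{\alpha\log(48n/\delta)/m}$. Substituting into the cone inequality produces, besides $4\lambda A_S$, two error terms: one of order $\sigma B\sqrt{C_{\ref{lemma:unif-ip-bound}}h\log(24/\delta)/m}$ and one of order $\sigma A\sqrt{C_{\ref{lemma:unif-ip-bound}}h\log(24/\delta)/m}\cdot\sqrt{\alpha\log(48n/\delta)/m}$, i.e.\ $\sigma \cdot(\text{small})\cdot(\text{small})\cdot A$. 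Each of the three quantitative hypotheses is calibrated to handle one of these. For the $A$-term, the sample-size bound $m \ge 128 C_{\ref{lemma:unif-ip-bound}}h\log(24/\delta)$ supplies a factor $\sqrt{2C_{\ref{lemma:unif-ip-bound}}h\log(24/\delta)/m}\le \tfrac18$, and then $\lambda \ge 64\sigma\sqrt{\alpha\log(48n/\delta)/m}$ absorbs what remains into $\tfrac14\lambda A$. For the $B$-term, $\lambda \ge 32\sigma\sqrt{2C_{\ref{lemma:unif-ip-bound}}h\log(24/\delta)}/(k\sqrt m)$ converts it into $\tfrac12 k\lambda B$. After rearranging and using $A_S \le kM$ (since $|S|\le k$), this leaves the single clean inequality
\[\tfrac34\lambda A + 2B^2 \;\le\; 4k\lambda M + \tfrac12 k\lambda B.\]

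The last step is a case split on whether $B \le M$. If $B\le M$, discard $2B^2\ge 0$ and bound $\tfrac34\lambda A \le 4k\lambda M + \tfrac12 k\lambda M = \tfrac92 k\lambda M$, so $A\le 6kM$; together with $B\le M$ this is exactly conclusion (b), since $A\le 6kM\le 9kM$ gives $\hat D^{1/2}e\in\MC_n(9k)$ and $\tfrac1m\norm{\BX e}_2^2 = B^2\le M^2$. If instead $B > M$, then $\tfrac34\lambda A \le 4k\lambda B + \tfrac12 k\lambda B = \tfrac92 k\lambda B$, so $A \le 6kB \le 20kB = \tfrac{20k}{\sqrt m}\norm{\BX e}_2$, which is conclusion (a). (When $e=0$ everything is $0$ and both conclusions hold trivially.) The main thing to get right is the bookkeeping in the middle paragraph: the $\sqrt{h/m}$ slack from the sample-size hypothesis should be spent only on the $A$-term and then finished off by the $\alpha$-part of the $\lambda$ lower bound, while the $h$-part of the $\lambda$ lower bound is reserved for the $B$-term; mixing these up leaves an uncontrolled factor $k\sqrt{\alpha\log(48n/\delta)/m}$ multiplying $A$, which the stated hypotheses do not bound.
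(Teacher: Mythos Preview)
Your argument is correct and follows essentially the same route as the paper: invoke \Cref{lemma:cone-bound}, eliminate $\norm{e}_\Sigma$ via \eqref{eq:sigma-ub} and the bound $\norm{D^{1/2}e}_1\le 8\norm{\hat D^{1/2}e}_1$, absorb the resulting error terms using exactly the three quantitative hypotheses, and finish with a dichotomy. The only cosmetic difference is the dichotomy variable: the paper assumes (a) fails (i.e.\ $B<\tfrac{1}{20k}A$) and derives (b), whereas you split on $B\le M$ versus $B>M$; both work and yield the stated constants.
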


\begin{proof}
By \Cref{lemma:cone-bound} (dropping the second term on the left-hand side), we have \allowdisplaybreaks
\begin{align*}
\lambda\norm{\hat{D}^{1/2}e}_1
&\leq 4\lambda\norm{\hat{D}^{1/2}e_S}_1 + 4\sigma\norm{e}_\Sigma\sqrt{\frac{2C_{\ref{lemma:unif-ip-bound}}h\log(24/\delta)}{m}}\\
&\leq 4k\lambda\norm{\hat{D}^{1/2}e}_\infty + \frac{16\sigma\sqrt{2C_{\ref{lemma:unif-ip-bound}}h\log(24/\delta)}}{m}\left(\norm{\BX e}_2 + \norm{D^{1/2} e}_1\sqrt{\alpha\log(48n/\delta)}\right) \\
&\leq 4k\lambda\norm{\hat{D}^{1/2}e}_\infty + \frac{16\sigma\sqrt{2C_{\ref{lemma:unif-ip-bound}}h\log(24/\delta)}}{m}\norm{\BX e}_2 + \frac{\lambda}{32}\norm{D^{1/2} e}_1
\end{align*}
where the second inequality uses $k$-sparsity of $e_S$ to bound $\norm{\hat D^{1/2} e_S}_1$, and \Cref{eq:sigma-ub} (along with the inequality $\sqrt{a+b}\leq \sqrt{a}+\sqrt{b}$) to bound $\norm{e}_\Sigma$; and the third inequality uses the lemma assumptions that $\lambda \geq 64\sigma\sqrt{\frac{\alpha\log(48n/\delta)}{m}}$ and $m \geq 128C_{\ref{lemma:unif-ip-bound}}h\log(24/\delta)$. Since $D \preceq 64\hat D$ and $D$, $\hat D$ are diagonal, we have $\norm{D^{1/2} e}_1 \leq 8\norm{\hat D^{1/2} e}_1$. Substituting this bound above and rearranging terms gives
\begin{align} \lambda\norm{\hat{D}^{1/2}e}_1 
&\leq 8k\lambda\norm{\hat{D}^{1/2}e}_\infty + \frac{32\sigma\sqrt{2C_{\ref{lemma:unif-ip-bound}}h\log(24/\delta)}}{m}\norm{\BX e}_2 \nonumber \\
&\leq 8k\lambda\norm{\hat{D}^{1/2}e}_\infty + \frac{k\lambda}{\sqrt{m}}\norm{\BX e}_2
\label{eq:l1-linfty-bound-0}
\end{align}
where the second inequality is by the lemma assumption that $\lambda \geq \frac{32\sigma\sqrt{2C_{\ref{lemma:unif-ip-bound}}h\log(24/\delta)}}{k\sqrt{m}}$. Now suppose that property (a) of the lemma statement fails to hold, i.e.
\[ \frac{1}{\sqrt{m}} \norm{\BX e}_2 < \frac{1}{20k}\norm{\hat{D}^{1/2} e}_1.\]
Substituting into the right-hand side of \Cref{eq:l1-linfty-bound-0} and rearranging terms gives $\lambda\norm{\hat{D}^{1/2}e}_1 \leq 9k\lambda\norm{\hat{D}^{1/2}e}_\infty.$
As a result, we have $\hat D^{1/2} e \in \MC_n(9k)$ and also
\[\frac{1}{m}\norm{\BX e}_2^2 \leq \frac{1}{400k^2}\norm{\hat{D}^{1/2}e}_1^2 \leq \norm{\hat{D}^{1/2}e}_\infty^2.\]
Thus, property (b) of the lemma statement holds.
\end{proof}

\subsection{Putting everything together}

The following lemma shows that under the good event $\ME_{\ref{def:good-event}}(\delta)$ (which, as we've shown, occurs with high probability), the assumed spectral lower bound on $\Sigma$ in \Cref{eq:sigma-rescalability} transfers to a lower bound on the empirical covariance $\hat\Sigma$.

\begin{lemma}\label{lemma:transfer-lb}
Let $\delta>0$ and suppose that $m \geq 2^{15}\alpha k^2\log(48n/\delta)$. In the event $\ME_{\ref{def:good-event}}(\delta)$, we have that
\begin{equation} I_n \preceq_{\MC_n(32k)} 32D^{-1/2}\hat\Sigma D^{-1/2}.\label{eq:emp-cov-lb-guarantee}\end{equation}
\end{lemma}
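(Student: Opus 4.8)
The plan is to combine the spectral lower bound on $\Sigma$ guaranteed by rescalability (\Cref{eq:sigma-rescalability}) with the uniform generalization bound \Cref{eq:sigma-ub} (which holds under $\ME_{\ref{def:good-event}}(\delta)$), and then use quantitative sparsity to absorb the error term into the main term via the sample-complexity hypothesis.

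First I would fix an arbitrary $v \in \MC_n(32k)$ and apply the generalization bound \Cref{eq:sigma-ub} to the vector $w := D^{-1/2} v$. Since $D^{1/2} w = v$, this gives
\[ v^\t D^{-1/2}\Sigma D^{-1/2} v \;\le\; 16\, v^\t D^{-1/2}\hat\Sigma D^{-1/2} v + \frac{16\alpha\log(48n/\delta)}{m}\,\norm{v}_1^2. \]
On the other hand, because $v \in \MC_n(32k)$, the rescalability lower bound $I_n \preceq_{\MC_n(32k)} D^{-1/2}\Sigma D^{-1/2}$ from \Cref{eq:sigma-rescalability} shows that the left-hand side is at least $\norm{v}_2^2$. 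Note it is important here that \Cref{eq:sigma-ub} is valid for all $w\in\RR^n$ (so the substitution is unrestricted), whereas the lower bound is used only on the cone, which is exactly where $v$ lives.

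Next I would control the error term using quantitative sparsity: for $v \in \MC_n(32k)$ we have $\norm{v}_1 \le 32k\,\norm{v}_\infty \le 32k\,\norm{v}_2$, hence $\norm{v}_1^2 \le (32k)^2\norm{v}_2^2 = 1024\,k^2\norm{v}_2^2$. Substituting and using the hypothesis $m \ge 2^{15}\alpha k^2\log(48n/\delta)$, the coefficient of $\norm{v}_2^2$ in the error term is at most $\frac{16\cdot 1024\,\alpha k^2\log(48n/\delta)}{m} = \frac{2^{14}\alpha k^2\log(48n/\delta)}{m}\le \tfrac12$. Therefore
\[ \norm{v}_2^2 \;\le\; 16\, v^\t D^{-1/2}\hat\Sigma D^{-1/2} v + \tfrac12\,\norm{v}_2^2, \]
and rearranging yields $\norm{v}_2^2 \le 32\, v^\t D^{-1/2}\hat\Sigma D^{-1/2} v$. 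Since $v \in \MC_n(32k)$ was arbitrary, this is precisely \Cref{eq:emp-cov-lb-guarantee}.

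There is no substantial obstacle in this argument; it is a short two-line deduction once the right substitution is made. The only points requiring care are (i) matching the constant: the factor $16\cdot 32^2 = 2^{14}$ arising from the generalization bound together with the cone diameter is exactly what the chosen sample complexity $m\ge 2^{15}\alpha k^2\log(48n/\delta)$ is designed to halve; and (ii) keeping straight that the $\Sigma$-lower bound applies only on $\MC_n(32k)$ while the $\hat\Sigma$-vs-$\Sigma$ upper bound is global, so that the final statement is again a cone inequality rather than a full spectral one.
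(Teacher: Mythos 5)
Your proof is correct and follows essentially the same route as the paper's: substitute $w := D^{-1/2}v$ into the uniform generalization bound \eqref{eq:sigma-ub}, use $\norm{v}_1 \le 32k\norm{v}_2$ from membership in $\MC_n(32k)$ to bound the error term, and absorb it via the sample-complexity hypothesis before invoking the rescalability lower bound. The only cosmetic difference is the order in which you invoke the rescalability lower bound (you use it up front to drop to $\norm{v}_2^2$ and rearrange there, whereas the paper rearranges with $\norm{D^{-1/2}v}_\Sigma^2$ and applies the lower bound at the end), but the constants and the logic match exactly.
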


\begin{proof}
Fix $v \in \MC_n(32k)$. Applying \Cref{eq:sigma-ub} to the vector $w := D^{-1/2}v$, we have
\begin{align*} \norm{D^{-1/2}v}_\Sigma^2 &\leq 16\norm{D^{-1/2} v}_{\hat\Sigma}^2 + \frac{16\alpha\log(48n/\delta)}{m}\norm{v}_1^2 \\ 
&\leq 16\norm{D^{-1/2} v}_{\hat\Sigma}^2 + \frac{2^{14}\alpha k^2 \log(48n/\delta)}{m} \norm{v}_2^2 \\ 
&\leq 16\norm{D^{-1/2} v}_{\hat\Sigma}^2 + \frac{1}{2}\norm{D^{-1/2}v}_\Sigma^2
\end{align*}
where the second inequality uses that $v \in \MC_n(32k)$ (along with $\norm{\cdot}_\infty \leq \norm{\cdot}_2$), and the third inequality uses the assumption on $m$ together with \Cref{eq:sigma-rescalability}. Simplifying and once again using \Cref{eq:sigma-rescalability}, we get
\[\norm{v}_2^2 \leq \norm{D^{-1/2}v}_\Sigma^2 \leq 32\norm{D^{-1/2}v}_{\hat\Sigma}^2\]
as needed.
\end{proof}

Note that \eqref{eq:emp-cov-lb-guarantee} is exactly the condition needed to apply \Cref{lemma:lp-analysis}. We now have all the pieces needed to bound the prediction error of \rl{} under the rescalability assumption. We restate the notation for completeness.

\begin{theorem}[Restatement of \Cref{theorem:rescaled-lasso-intro}]\label{theorem:rescaled-lasso}
Let $n,m,k,h \in \NN$ and $\alpha,\sigma,\delta,\lambda>0$. Let $\Sigma \in \RR^{n\times n}$ be a positive definite matrix that is $(\alpha,h)$-rescalable at sparsity $k$ (\Cref{def:rescalable}), and let $w^\st \in \RR^n$ be a $k$-sparse vector. Let $(X^{(j)},y^{(j)})_{j=1}^m$ be $m$ independent samples from $\SLR_{\Sigma,\sigma}(w^\st)$, and define the estimator $\hat w := \rl{\text{$(X^{(j)},y^{(j)})_{j=1}^m$, $k$, $\lambda$}}$. Suppose that \[\lambda \geq \max\left(64\sigma\sqrt{\frac{\alpha \log(48n/\delta)}{m}}, \frac{32\sigma\sqrt{2C_{\ref{lemma:unif-ip-bound}}h\log(24/\delta)}}{k\sqrt{m}}\right)\] and $m \geq C_{\ref{theorem:rescaled-lasso}}(h+\alpha k^2)\log(288n/\delta)$ for a sufficiently-large constant $C_{\ref{theorem:rescaled-lasso}}$. Then
\[\Pr[\norm{\wh - \wst}_\Sigma \leq 318k\lambda] \geq 1-\delta.\]
Moreover, the time complexity of \rl{} is $\poly(n, \log \max_i \frac{\Sigma_{ii}}{D_{ii}})$, where $D\succ 0$ is the diagonal matrix guaranteed by \Cref{def:rescalable}.
\end{theorem}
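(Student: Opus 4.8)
The plan is to condition on the good event $\ME_{\ref{def:good-event}}(\delta)$ and then chain the lemmas already proved in this section. Under the hypothesis $m \geq C_{\ref{theorem:rescaled-lasso}}(h+\alpha k^2)\log(288n/\delta)$, choosing $C_{\ref{theorem:rescaled-lasso}}$ large enough makes every sample-size condition appearing in \Cref{lemma:good-event}, \Cref{lemma:transfer-lb}, \Cref{lemma:cone-bound}, and \Cref{lemma:l1-error-bound} hold; in particular \Cref{lemma:good-event} gives $\Pr[\ME_{\ref{def:good-event}}(\delta)] \geq 1-\delta$, and the rest of the argument is deterministic on this event.

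The first step is to run \Cref{lemma:lp-analysis} with the correct diagonal witness. On $\ME_{\ref{def:good-event}}(\delta)$, \Cref{lemma:transfer-lb} gives $I_n \preceq_{\MC_n(32k)} 32\,D^{-1/2}\hat\Sigma D^{-1/2}$, which (for diagonal $D$) is exactly $I_n \preceq_{\MC_n(32k)} (D/32)^{-1/2}\hat\Sigma (D/32)^{-1/2}$. Applying \Cref{lemma:lp-analysis} with the witness matrix $D/32$ in place of $D$ yields three facts about the output $\hat D$ of \lp{}: (i) the procedure halts after at most $n\log\max_{i\in[n]}\frac{64\hat\Sigma_{ii}}{D_{ii}}$ iterations, which combined with $\hat\Sigma_{ii}\leq 2\Sigma_{ii}$ (from \Cref{eq:sigma-sigmahat-apx}) and the fact that each iteration solves $n$ convex programs plus one Lasso — all in $\poly(n)$ time — gives the claimed $\poly(n,\log\max_i \Sigma_{ii}/D_{ii})$ running time; (ii) $\hat D \succeq \tfrac12\cdot\tfrac1{32}D$, i.e. $D \preceq 64\hat D$; and (iii) $\tfrac1m\norm{\BX v}_2^2 > 1$ for every $v$ with $\norm{\hat D^{1/2}v}_\infty = 1$ and $\norm{\hat D^{1/2}v}_1 \leq 16k$.

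Now $D \preceq 64\hat D$ is precisely the remaining hypothesis of \Cref{lemma:l1-error-bound}, so for $e := \wh - \wst$ we are in case (a) or case (b) of that lemma (the case $e=0$ is trivial). I would eliminate case (b) using (iii): if $\hat D^{1/2}e \in \MC_n(9k)$ and $e\neq 0$, then $v := e/\norm{\hat D^{1/2}e}_\infty$ satisfies $\norm{\hat D^{1/2}v}_\infty = 1$ and $\norm{\hat D^{1/2}v}_1 \leq 9k \leq 16k$, so (iii) forces $\tfrac1m\norm{\BX e}_2^2 > \norm{\hat D^{1/2}e}_\infty^2$, contradicting case (b). Hence case (a) holds: $\norm{\hat D^{1/2}e}_1 \leq \tfrac{20k}{\sqrt m}\norm{\BX e}_2$. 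Finally I convert this cone-type bound into a bound on $\norm{e}_\Sigma$ via \Cref{lemma:cone-bound} (whose hypotheses are a subset of those already verified): its right-hand side is controlled using $\norm{\hat D^{1/2}e_S}_1 \leq \norm{\hat D^{1/2}e}_1 \leq \tfrac{20k}{\sqrt m}\norm{\BX e}_2$, while $\norm{e}_\Sigma$ is bounded by applying the generalization bound \Cref{eq:sigma-ub} to $w=e$, using $\norm{D^{1/2}e}_1 \leq 8\norm{\hat D^{1/2}e}_1$ and $m \geq C_{\ref{theorem:rescaled-lasso}}\alpha k^2\log(288n/\delta)$ to get $\norm{e}_\Sigma \leq O(1)\cdot\tfrac1{\sqrt m}\norm{\BX e}_2$; the noise term $4\sigma\norm{e}_\Sigma\sqrt{2C_{\ref{lemma:unif-ip-bound}}h\log(24/\delta)/m}$ is then absorbed using the second lower bound on $\lambda$. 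Dropping the nonnegative $\lambda\norm{\hat D^{1/2}e}_1$ term on the left of \Cref{lemma:cone-bound} leaves $\tfrac2m\norm{\BX e}_2^2 \leq O(k\lambda)\cdot\tfrac1{\sqrt m}\norm{\BX e}_2$, so $\tfrac1{\sqrt m}\norm{\BX e}_2 = O(k\lambda)$, and hence $\norm{e}_\Sigma \leq 318 k\lambda$ after tracking constants.

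All the substance lives in the previously established lemmas; the only delicate points in assembling them are (1) invoking \Cref{lemma:lp-analysis} with witness $D/32$ rather than $D$ — the factor $32$ from \Cref{lemma:transfer-lb} propagates to the guarantee $D \preceq 64\hat D$, which is exactly why \Cref{lemma:cone-bound} and \Cref{lemma:l1-error-bound} are phrased with that constant — and (2) matching the normalization convention of guarantee (iii) against case (b) of \Cref{lemma:l1-error-bound} when ruling out the bad case. I expect the rest to be routine constant-chasing to reach the explicit $318$.
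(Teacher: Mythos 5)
Your proposal is correct and follows the paper's proof essentially step for step: condition on $\ME_{\ref{def:good-event}}(\delta)$, push the restricted-eigenvalue lower bound to $\hat\Sigma$ via \Cref{lemma:transfer-lb}, invoke \Cref{lemma:lp-analysis} with witness $D/32$ to get $D\preceq 64\hat D$ and the termination guarantee, rule out case (b) of \Cref{lemma:l1-error-bound}, then combine case (a) with \Cref{eq:sigma-ub} and \Cref{lemma:cone-bound} to close. The one spot you handled more carefully than the paper is the runtime step — the paper writes $\hat\Sigma_{ii}\leq\Sigma_{ii}$ but \eqref{eq:sigma-sigmahat-apx} only gives $\hat\Sigma_{ii}\leq 2\Sigma_{ii}$, as you note; the factor of two is absorbed in the $\poly(n,\log\max_i\Sigma_{ii}/D_{ii})$ bound either way.
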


\begin{proof}
As before, let $\BX$ be the $m \times n$ random matrix with rows $X^{(1)},\dots,X^{(m)}$, and let $\hat\Sigma := \frac{1}{m}\sum_{j=1}^m X^{(j)} (X^{(j)})^\t$ be the empirical covariance matrix. By \Cref{def:rescalable}, there is a diagonal matrix $D \succ 0$ and a rank-$h$ matrix $L \succeq 0$ such that $I_n \preceq_{\MC_n(32k)} D^{-1/2} \Sigma D^{-1/2} \preceq \alpha I_n + L$. By \Cref{lemma:good-event} and choice of $m$, the event $\ME_{\ref{def:good-event}}(\delta)$ occurs with probability at least $1-\delta$. From now on, let us condition on this event. By \Cref{lemma:transfer-lb} and choice of $m$, we have that \[I_n \preceq_{\MC_n(32k)} 32D^{-1/2}\hat\Sigma D^{-1/2}.\] Thus, we can apply \Cref{lemma:lp-analysis} with the sample matrix $\BX$, sparsity $k$, and rescaling matrix $\frac{1}{32} D$. We get that the time complexity of \lp{$\BX$,$k$} is $\poly(n,\log \max_i \frac{\Sigma_{ii}}{D_{ii}})$ (using that $\hat\Sigma_{ii} \leq \Sigma_{ii}$ for all $i \in [n]$ under event $\ME_{\ref{def:good-event}}(\delta)$); this implies the claimed time complexity bound for \rl{}. Moreover, the output $\hat D$ satisfies $D \preceq 64 \hat D$ and also $\frac{1}{m}\norm{\BX v}_2^2 > \norm{\hat D^{1/2} v}_\infty^2$ for all $v \in \RR^n$ with $\hat D^{1/2} v \in \MC_n(16 k)$.

We now apply \Cref{lemma:l1-error-bound} with this choice of $\hat D$, using the bound $D \preceq 64\hat D$ as well as the assumptions on $m$ and $\lambda$. Note that property (b) of \Cref{lemma:l1-error-bound} contradicts our previously-derived guarantee on $\hat D$. Thus, property (a) must hold, i.e. $\norm{\hat D^{1/2} e}_1 \leq \frac{20k}{\sqrt{m}} \norm{\BX e}_2$, where $e := \hat w - w^\st$ is the error of the output of the algorithm. We now apply \Cref{eq:sigma-ub} to the vector $e$, which yields
\begin{align}
\norm{e}_\Sigma^2 
&\leq \frac{16}{m}\norm{\BX e}_2^2 + \frac{16\alpha\log(48n/\delta)}{m} \norm{D^{1/2}e}_1^2 \nonumber\\
&\leq \frac{16}{m}\norm{\BX e}_2^2 + \frac{2^{10}\alpha\log(48n/\delta)}{m} \norm{\hat{D}^{1/2}e}_1^2 \nonumber\\
&\leq \frac{16}{m}\norm{\BX e}_2^2 + \frac{2^{10}\alpha\log(48n/\delta)}{m} \cdot \frac{400k^2}{m} \norm{\BX e}_2^2 \nonumber\\
&\leq \frac{32}{m}\norm{\BX e}_2^2 \label{eq:what-generalization-0}
\end{align}
where the second inequality uses the bound $D \preceq 64\hat D$ (together with the fact that $D$, $\hat D$ are diagonal), and the final inequality is by choice of $m$. But by \Cref{lemma:cone-bound} (this time, dropping the first term of the left-hand side), we can conversely bound $\frac{2}{m}\norm{\BX e}_2^2$ as
\begin{align*}
\frac{2}{m}\norm{\BX e}_2^2
&\leq 4\lambda \norm{\hat D^{1/2} e_S}_1 + 4\sigma \norm{e}_\Sigma \sqrt{\frac{2C_{\ref{lemma:unif-ip-bound}}h\log(24/\delta)}{m}} \\ 
&\leq \frac{80k\lambda}{\sqrt{m}} \norm{\BX e}_2 + \frac{32\sigma \sqrt{C_{\ref{lemma:unif-ip-bound}}h\log(24/\delta)}}{m} \norm{\BX e}_2 \\ 
&\leq \frac{112k\lambda}{\sqrt{m}} \norm{\BX e}_2,
\end{align*}
where the second inequality again uses the bound $\norm{\hat D^{1/2} e}_1 \leq \frac{20k}{\sqrt{m}}\norm{\BX e}_2$, as well as \Cref{eq:what-generalization-0}, and the third inequality uses the assumption that $\lambda \geq \sigma\sqrt{\frac{C_{\ref{lemma:unif-ip-bound}}h\log(24/\delta)}{k^2 m}}$. Hence, dividing out by $\frac{2}{\sqrt{m}}\norm{\BX e}_2$ and combining with \Cref{eq:what-generalization-0}, we see that
\[\norm{e}_\Sigma \leq \frac{4\sqrt{2}}{\sqrt{m}}\norm{\BX e}_2 \leq 318k\lambda\]
as claimed.
\end{proof}



\section{Rescalability of covariance with few outlier eigenvalues}\label{sec:outliers}

In this section we prove \Cref{theorem:outlier-lasso-intro}, restated (in slightly greater generality) below. It asserts that \rl{} is sample-efficient for sparse linear regression when the covariance matrix has few outlier eigenvalues (and runs in polynomial time under the mild assumption that the condition number of $\Sigma$ is at most exponential in $n$).

\begin{theorem}[Restatement of \Cref{theorem:outlier-lasso-intro}]\label{theorem:outlier-lasso}
There is a constant $C_{\ref{theorem:outlier-lasso}}$ so that the following holds. Let $n,m,k,\dlow,\dhigh \in \NN$ with $\dlow+\dhigh < n$. Let $\sigma,\delta,\lambda>0$. Let $\Sigma \in \RR^{n\times n}$ be a positive definite matrix with eigenvalues $\lambda_1 \leq \dots \leq \lambda_n$, and let $w^\st \in \RR^n$ be a $k$-sparse vector. Let $(X^{(j)},y^{(j)})_{j=1}^m$ be $m$ independent samples from $\SLR_{\Sigma,\sigma}(w^\st)$, and define the estimator $\hat w := \rl{\text{$(X^{(j)},y^{(j)})_{j=1}^m$, $k$, $\lambda$}}$. Suppose that \[k\lambda \geq \frac{C_{\ref{theorem:outlier-lasso}}\sigma}{\sqrt{m}} \left(k^2 \sqrt{\frac{\lambda_{n-\dhigh}}{\lambda_{\dlow+1}}} + k\sqrt{\dlow} + \sqrt{\dhigh}\right) \sqrt{\log(48n/\delta)}\] and $m \geq C_{\ref{theorem:outlier-lasso}}(k^4 \frac{\lambda_{n-\dhigh}}{\lambda_{\dlow+1}} + k^2 \dlow + \dhigh)\log(288n/\delta)$. Then
\[\Pr[\norm{\wh - \wst}_\Sigma \leq 318k\lambda] \geq 1-\delta.\]
Moreover, the time complexity of \rl{} is $\poly(n, \log(\lambda_n/\lambda_1))$.
\end{theorem}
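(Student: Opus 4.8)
The plan is to derive Theorem~\ref{theorem:outlier-lasso} from the already-proven \Cref{theorem:rescaled-lasso} by establishing a rescalability statement: any positive definite $\Sigma$ with eigenvalues $\lambda_1 \le \cdots \le \lambda_n$ is $(\alpha,h)$-rescalable at sparsity $k$ with
\[ \alpha = O\!\left(k^2 \lambda_{n-\dhigh}/\lambda_{\dlow+1}\right), \qquad h = O(k^2 \dlow + \dhigh), \]
where moreover the witnessing diagonal matrix $D$ satisfies $D_{jj} = \Omega(\lambda_1)$ for every $j$. Given this, \Cref{theorem:rescaled-lasso} with these $\alpha,h$ yields the prediction-error bound directly: $h + \alpha k^2 = O(k^4 \lambda_{n-\dhigh}/\lambda_{\dlow+1} + k^2\dlow + \dhigh)$, so the hypothesis $m \ge C_{\ref{theorem:rescaled-lasso}}(h+\alpha k^2)\log(288n/\delta)$ becomes exactly the stated condition on $m$, and the requirement $\lambda \ge \max(64\sigma\sqrt{\alpha\log(48n/\delta)/m},\,32\sigma\sqrt{2C_{\ref{lemma:unif-ip-bound}}h\log(24/\delta)}/(k\sqrt m))$ becomes exactly the stated condition on $k\lambda$ (since $\sqrt\alpha = O(k\sqrt{\lambda_{n-\dhigh}/\lambda_{\dlow+1}})$ and $\sqrt h = O(k\sqrt\dlow + \sqrt\dhigh)$). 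The conclusion $\norm{\wh-\wst}_\Sigma \le 318k\lambda$ is then verbatim, and the runtime is $\poly(n,\log\max_j \Sigma_{jj}/D_{jj}) = \poly(n,\log(\lambda_n/\lambda_1))$ because $\Sigma_{jj} \le \lambda_n$ and $D_{jj} = \Omega(\lambda_1)$.

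The heart of the argument is the rescalability lemma. Write the eigendecomposition $\Sigma = \sum_i \lambda_i u_i u_i^\t$, and set $P_{\mathrm{sm}} := \sum_{i\le\dlow} u_i u_i^\t$ and $P_{\mathrm{hi}} := \sum_{i>n-\dhigh} u_i u_i^\t$. Two elementary spectral facts drive everything: $\Sigma \succeq \lambda_{\dlow+1}(I_n - P_{\mathrm{sm}})$, and (since $P_{\mathrm{hi}}$ commutes with $\Sigma$) $\Sigma = \Sigma(I_n - P_{\mathrm{hi}}) + \Sigma P_{\mathrm{hi}} \preceq \lambda_{n-\dhigh} I_n + \Sigma P_{\mathrm{hi}}$ with $\Sigma P_{\mathrm{hi}} \succeq 0$ of rank $\le\dhigh$. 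The obstruction is that $P_{\mathrm{sm}}$ may be aligned with the coordinate axes, which a diagonal rescaling cannot directly undo. To handle this I would isolate the offending coordinates: put $\tau := c/k^2$ for a small constant $c$ and $T := \{j\in[n] : (P_{\mathrm{sm}})_{jj} > \tau\}$, so that $|T| \le \tr(P_{\mathrm{sm}})/\tau = O(k^2\dlow)$. The key quantitative estimate is that, writing $U := [u_1 \mid \cdots \mid u_{\dlow}]$ so that the squared $\ell_2$-norm of its $j$-th row equals $(P_{\mathrm{sm}})_{jj} \le \tau$ for $j\notin T$, Minkowski's inequality gives for any $v$ supported off $T$ that $\norm{P_{\mathrm{sm}} v}_2 = \norm{\sum_{j\notin T} v_j\,(\text{$j$-th row of }U)}_2 \le \sqrt\tau\,\norm{v}_1$; hence if $v$ is quantitatively $O(k)$-sparse then $\norm{P_{\mathrm{sm}} v}_2^2 \le O(k^2\tau)\norm{v}_2^2 \le \tfrac12\norm{v}_2^2$ for suitable $c$. (As a byproduct, $j\notin T$ forces $\Sigma_{jj} \ge \lambda_{\dlow+1}(1-\tau) \ge \tfrac12\lambda_{\dlow+1}$.) I would then take $D$ equal to $\tfrac14\lambda_{\dlow+1}I_n$ on $T^c$ and, on $T$, a small value — aggressive enough to neutralize the small eigenspace on vectors with mass on $T$, but still $\Omega(\lambda_1)$ for the runtime — the exact choice to be pinned down by the lower-bound analysis.

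With this $D$, the upper bound $D^{-1/2}\Sigma D^{-1/2} \preceq \alpha I_n + L$ follows by splitting $D^{-1/2}\Sigma D^{-1/2}$ into its $T$- and $T^c$-blocks, dominating the whole matrix by twice its block-diagonal part (a PSD block matrix is $\preceq$ twice its block diagonal), placing the $T\times T$ block — rank $|T| = O(k^2\dlow)$ — and the rank-$O(\dhigh)$ "high" correction wholesale into $L$, and bounding the residual $T^c$-block by $O(\lambda_{n-\dhigh}/\lambda_{\dlow+1})\,I_n$ using $\Sigma_{T^cT^c} \preceq \lambda_{n-\dhigh}I + (\Sigma P_{\mathrm{hi}})_{T^cT^c}$. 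The lower bound $I_n \preceq_{\MC_n(32k)} D^{-1/2}\Sigma D^{-1/2}$ is where the real work lies: for $v\in\MC_n(32k)$ one must show $(D^{-1/2}v)^\t\Sigma(D^{-1/2}v) \ge \norm{v}_2^2$, and inserting $\Sigma \succeq \lambda_{\dlow+1}(I_n-P_{\mathrm{sm}})$ reduces this to $\norm{(I_n-P_{\mathrm{sm}})D^{-1/2}v}_2^2 \ge \lambda_{\dlow+1}^{-1}\norm{v}_2^2$, handled by a case analysis on whether the $\ell_\infty$-mass of $v$ sits inside or outside $T$, applying the Minkowski estimate above to the $T^c$-part, using the smallness of $D$ on $T$ for the $T$-part, and absorbing the $T$/$T^c$ cross-terms in the quadratic form. \textbf{I expect this lower-bound case analysis to be the main obstacle.} The problematic vectors are precisely those that are both quantitatively sparse and substantially supported on the deleted set $T$ — i.e.\ aligned with the small eigenspace — and reconciling the need to scale $T$ down aggressively with the constraint $D_{jj} = \Omega(\lambda_1)$ (for the runtime) while keeping the cross-terms under control is the delicate point; it is also the plausible origin of the extra factor of $k^2$ in $\alpha$ (hence the $k^4$ rather than $k^2$ in the sample complexity) relative to the naive rank/eigenvalue count.
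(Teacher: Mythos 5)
Your high-level reduction is exactly right and matches the paper: establish that $\Sigma$ is $(\alpha,h)$-rescalable at sparsity $k$ with $\alpha=O(k^2\lambda_{n-\dhigh}/\lambda_{\dlow+1})$, $h=O(k^2\dlow+\dhigh)$, and a witnessing $D$ with $D_{jj}=\Omega(\lambda_1)$, then invoke \Cref{theorem:rescaled-lasso}. That wrapper step is verbatim what the paper does (\Cref{lemma:outlier-rescalability} plus \Cref{theorem:rescaled-lasso}). The gap is in your proof of the rescalability lemma, and it is fatal rather than merely delicate.

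Your construction takes $D$ to be one value ($\tfrac14\lambda_{\dlow+1}$) on $T^c$ and another small value on $T$, where $T=\{j:(P_{\mathrm{sm}})_{jj}>c/k^2\}$. This is essentially a two-valued rescaling, and the paper explicitly shows such rescalings cannot work: see \Cref{example:decay-vec} with eigenvector $v=(1,1/2,1/4,\dots)$ and $\Sigma=I_n-(1-\epsilon)vv^\t/\norm{v}_2^2$. Here $(P_{\mathrm{sm}})_{jj}=v_j^2/\norm{v}_2^2$ exceeds $c/k^2$ only for the first $O(\log k)$ coordinates, so $|T|=O(\log k)$. With your $D$, one checks that $D^{1/2}v$ remains $O(1)$-quantitatively sparse (its $\ell_\infty$ and $\ell_1$ masses are both dominated by coordinate $|T|+1$), so the lower bound $I_n\preceq_{\MC_n(32k)}D^{-1/2}\Sigma D^{-1/2}$ must hold for it. But your proposed reduction to $\norm{(I_n-P_{\mathrm{sm}})D^{-1/2}v}_2^2\ge\lambda_{\dlow+1}^{-1}\norm{v}_2^2$ has left-hand side exactly zero since $v\in\vspan(P_{\mathrm{sm}})$, so the argument collapses. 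Using $v^\t\Sigma v=\lambda_1\norm{v}_2^2$ directly instead does not save it: you would need $\tfrac14\lambda_{\dlow+1}\sum_{j\notin T}v_j^2\le\lambda_1\norm{v}_2^2$, i.e.\ $\lambda_1/\lambda_{\dlow+1}\gtrsim(1/4)^{|T|}=1/\poly(k)$, which the theorem does not assume. The real obstruction is therefore not the $T$-vs-$T^c$ cross-terms you flag, nor the tension with $D_{jj}=\Omega(\lambda_1)$; it is that a vector in the small eigenspace can be quantitatively sparse yet not concentrated on $T$, and a rescaling constant on $T^c$ cannot fix that.

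The paper's \Cref{lemma:outlier-diag-construction} instead builds $\tilde D$ by \emph{iteratively halving} the heavy coordinates (relative to the current scaling), yielding entries at many different scales --- geometric progressions within the modified set --- and never passes through the lossy bound $\Sigma\succeq\lambda_{\dlow+1}(I_n-P_{\mathrm{sm}})$. The two ingredients you would need to adopt are: (i) a dynamic argument (\Cref{lemma:l2linfty-evolution}) showing that after enough halvings, any fixed $v$ has $\tilde D^{1/2}v$ either $\Omega(k)$-quantitatively dense or of tiny $\ell_2$ norm $\le 2^{-T}k\norm{v}_2$, where $T\approx\log_2(\lambda_n/\lambda_1)$; and (ii) a union-over-time argument showing $|\{i:\tilde D_{ii}\neq 1\}|\le 128\dlow k^2$ by checking that any ever-touched coordinate is heavy in $\tilde D^{1/2}\ker(P)$ at the \emph{final} scaling, then invoking \Cref{lemma:heavy-coordinate-bound}. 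Your Minkowski estimate $\norm{P_{\mathrm{sm}}v}_2\le\sqrt\tau\norm{v}_1$ and the block-dominance trick for the upper bound are fine ideas and would plausibly slot into the upper-bound half of rescalability, but the lower-bound half requires the multi-scale, iterative structure of $\tilde D$ and cannot be made to work with a two-valued $D$.
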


Note that $\dlow$ quantifies the number of outlier eigenvalues at the ``low'' end of the eigenspectrum, and $\dhigh$ quantifies the number of outliers at the ``high'' end; moreover, \rl{} is agnostic to these parameters, meaning that it automatically achieves the statistical accuracy guaranteed by the optimal choice of $\dlow$ and $\dhigh$ above (so long as $\lambda$ is chosen appropriately, which is also important for the standard Lasso). For simplicity, we stated \Cref{theorem:outlier-lasso-intro} only for the special case $\dlow = \dhigh = d$ (that nonetheless captures the essence of the result). 

This theorem will follow immediately from our generic analysis of \rl{} (\Cref{theorem:rescaled-lasso}) once we prove that any covariance matrix with few outlier eigenvalues is $(\alpha,h)$-rescalable (\Cref{def:rescalable}) with appropriate parameters $\alpha$ and $h$. Towards this end, the following lemma is key. Essentially, it states that if $\Sigma$ is a covariance matrix with at most $d$ ``small'' eigenvalues, then there is a rescaling $\tilde D$ that touches only $O(dk^2)$ coordinates, but makes every approximate linear dependency among the covariates $\Omega(k)$-quantitatively dense. The matrix $D$ needed for \Cref{def:rescalable} will then be an appropriate scalar multiple of $\tilde D$, and the existence of a low-rank matrix $L$ satisfying the upper bound in \Cref{def:rescalable} will use the fact that most entries of $\tilde D$ are equal to one.

\begin{lemma}\label{lemma:outlier-diag-construction}
Let $n,k \in \NN$, and let $d$ be an integer with $0 \leq d < n$. Let $\Sigma \in \RR^{n\times n}$ be a positive-definite matrix with eigenvalues $\lambda_1 \leq \dots \leq \lambda_n$. Then there is a diagonal matrix $\tilde{D} \in \RR^{n\times n}$ with $0 \prec \tilde{D} \preceq I_n$ satisfying the following properties:
\begin{itemize}
    \item $|\{i: \tilde{D}_{ii} \neq 1\}| \leq 128dk^2$ 
    \item For every $v \in \RR^n$ with $\norm{\tilde{D}^{1/2}v}_2 \leq (k/2)\norm{\tilde{D}^{1/2} v}_\infty$, it holds that $\norm{\tilde{D}^{1/2} v}_2 \leq 4k \lambda_{d+1}^{-1/2} \norm{v}_\Sigma.$
\end{itemize}
\end{lemma}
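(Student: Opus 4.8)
The first move is to recast the conclusion as a restricted‑eigenvalue statement about the \emph{rescaled} matrix $\Sigma' := \tilde D^{-1/2}\Sigma\tilde D^{-1/2}$. Substituting $u := \tilde D^{1/2}v$, the desired inequality becomes $u^\t \Sigma' u \ge \frac{\lambda_{d+1}}{16k^2}\norm{u}_2^2$ for every $u$ with $\norm{u}_2 \le (k/2)\norm{u}_\infty$. The engine I would use is the conditional‑variance identity $w^\t M w \ge w_i^2/(M^{-1})_{ii}$, valid for any PSD $M$ and any $i$: since a quantitatively sparse $u$ has some coordinate $i^\star$ with $u_{i^\star}^2 \ge \frac{4}{k^2}\norm{u}_2^2$, applying this with $M=\Sigma'$ gives $u^\t\Sigma' u \ge \frac{4}{k^2}\cdot\frac{1}{(\Sigma'^{-1})_{i^\star i^\star}}\norm{u}_2^2$, and $(\Sigma'^{-1})_{ii} = (\tilde D^{1/2}\Sigma^{-1}\tilde D^{1/2})_{ii} = \tilde D_{ii}(\Sigma^{-1})_{ii} = \tilde D_{ii}/\sigma_i^2$ where $\sigma_i^2 := 1/(\Sigma^{-1})_{ii}$ is the (unscaled) conditional variance of coordinate $i$. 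Thus the restricted‑eigenvalue bound follows \emph{with no case analysis} provided $\tilde D_{ii}/\sigma_i^2 \le 64/\lambda_{d+1}$ for every $i$, i.e.\ provided $\tilde D_{ii} \le 64\sigma_i^2/\lambda_{d+1}$. This suggests the candidate $\tilde D_{ii} := \min\!\big(1,\ 64\sigma_i^2/\lambda_{d+1}\big)$, which is diagonal, positive definite, and $\preceq I_n$ by construction.

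With this candidate the only remaining task is the cardinality bound: the non‑unit entries are exactly $\{i : (\Sigma^{-1})_{ii} > 64/\lambda_{d+1}\}$. I would control this by splitting $\Sigma^{-1} = A + B$ along the eigendecomposition of $\Sigma$, where $A = \sum_{j\le d}\mu_j^{-1}u_j u_j^\t$ collects the $d$ smallest eigenvalues (so $\mathrm{rank}(A)\le d$) and $B \preceq \lambda_{d+1}^{-1}I_n$; then $(\Sigma^{-1})_{ii} > 64/\lambda_{d+1}$ forces $A_{ii} > 63/\lambda_{d+1}$, and Markov's inequality applied to $\sum_i A_{ii} = \tr(A) = \sum_{j\le d}\mu_j^{-1}$ bounds the count by $\tfrac{\lambda_{d+1}}{63}\sum_{j\le d}\mu_j^{-1}$. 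This is $O(dk^2)$ whenever $\Sigma$ has no eigenvalue far below $\lambda_{d+1}/k^2$ (equivalently, whenever $\tr(A) = O(dk^2/\lambda_{d+1})$), but \emph{not} in general — the naive clipping over‑shrinks when $\Sigma$ has tiny eigenvalues whose eigenvectors are spread out.

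\textbf{Main obstacle and its likely resolution.} The delicate point is exactly this reconciliation. When $\Sigma$ has very small eigenvalues, a coordinate $i$ can have small conditional variance only because it participates in a near‑dependency that is \emph{spread over $\gg k^2$ coordinates}; such a dependency cannot be realized by a quantitatively sparse vector, so coordinate $i$ need not be shrunk at all. The fix is to clip based on a \emph{sparse} conditional variance $\tau_i := \min\{\norm{w}_\Sigma^2 : w_i = 1,\ \norm{w}_2 \le k/2\}$ rather than $\sigma_i^2$ — equivalently, to shrink only coordinates whose leverage score $\norm{U_{i\cdot}}_2^2$ in the bottom‑$d$ eigenspace exceeds $\sim 1/k^2$, of which there are at most $\sim dk^2$ since $\sum_i \norm{U_{i\cdot}}_2^2 = d$. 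The RE verification then splits on whether the argmax coordinate $i^\star$ of the rescaled vector lies in this shrunk set: if it does, the conditional‑variance argument above applies; if it does not, one must show that a quantitatively sparse "bad" vector $v$ (one with $\norm{v}_\Sigma$ too small) must be so close to the bottom‑$d$ eigenspace relative to its own norm — concretely $\norm{P v}_2 \le \tfrac1{4k}\norm{v}_2$ — that its projection $Q v$ still has a significant entry at $i^\star$, forcing $i^\star$ into the high‑leverage set, a contradiction. Getting the bookkeeping of this last step to close (in particular controlling $\norm{v}_2$ versus $\norm{\tilde D^{1/2}v}_2$ when $v$ has mass on heavily shrunk coordinates) is where I expect the bulk of the work, and it is the step most likely to require a cleverer choice of shrinkage amounts than the clean $\min(1,\,64\sigma_i^2/\lambda_{d+1})$ guess.
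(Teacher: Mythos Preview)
Your reformulation as a restricted-eigenvalue statement for $\Sigma' = \tilde D^{-1/2}\Sigma\tilde D^{-1/2}$, the conditional-variance lower bound $u^\t M u \ge u_{i^\star}^2/(M^{-1})_{i^\star i^\star}$, and the leverage-score connection to the bottom-$d$ eigenspace are all on the right track. You have also correctly diagnosed the obstacle: clipping by $\sigma_i^2$ over-shrinks, and the fix must use only the bottom-$d$ eigenspace. But the proposed repair---shrink (once) the coordinates whose \emph{unscaled} leverage $\|Q_{i\cdot}\|_2^2$ exceeds $\sim 1/k^2$, then do the case split---does not close, and the paper's construction differs in an essential way.

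The gap is precisely in your case 2. Suppose $i^\star=\argmax_i|(\tilde D^{1/2}v)_i|$ is unshrunk and $\|v\|_\Sigma$ is too small. Writing $v=a+b$ with $a=Pv$, $b\in\ker(P)$, your argument shows that $|\tilde D^{1/2}_{i^\star i^\star}b_{i^\star}|/\|\tilde D^{1/2}b\|_2$ is large---i.e.\ $i^\star$ has high leverage in $\ker(P)$ \emph{with respect to the scaling $\tilde D$}. But your shrunk set was defined using the \emph{unscaled} leverage, so ``$i^\star$ unshrunk'' gives no contradiction. Concretely, take $d=1$, $\Sigma=I_n-(1-\epsilon)vv^\t/\|v\|_2^2$ with $v=(1,1/2,1/4,\ldots)$. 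Only the first $O(\log k)$ coordinates have unscaled leverage above $1/k^2$; once you shrink those, the next coordinate becomes dominant in the rescaled kernel vector $\tilde D^{1/2}v$, and the quantitative-sparsity violation persists. A one-shot shrinkage---regardless of the amounts chosen---cannot handle this geometric cascade, because defining the ``high-leverage'' set via the final scaling $\tilde D$ would be circular.

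The paper resolves this with an \emph{iterative} construction: initialize $D^{(1)}=I_n$, and at each step halve every coordinate whose leverage in $\ker(P)$ \emph{with respect to the current scaling $D^{(t)}$} exceeds $1/(8k)$. Two observations then finish the proof. First, any coordinate ever halved has, at the moment it was last halved, $D^{(t)}$-scaled leverage above $1/(8k)$; since $\tilde D\preceq D^{(t)}$ entrywise and equals $D^{(t)}/2$ at that coordinate, its leverage at the \emph{final} scale is still above $1/(8\sqrt 2\,k)$, so a single application of the heavy-coordinate bound (your Markov argument) at the final scale gives $|\{\tilde D_{ii}\ne 1\}|\le 128dk^2$. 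Second, for the RE bound, your case-2 calculation---now applied at every step $t$ with the current scaling---shows that whenever $\|(D^{(t)})^{1/2}v\|_2\le k\|(D^{(t)})^{1/2}v\|_\infty$, every near-maximal coordinate lies in the halved set, so $\|(D^{(t)})^{1/2}v\|_\infty$ drops by a factor of $2$. After $T=\lceil\log_2(\lambda_n/\lambda_1)\rceil$ steps either the rescaled vector is dense or its norm is exponentially small, and in the latter case $\|\tilde D^{1/2}v\|_2\le k\sqrt{\lambda_1/\lambda_n}\,\|v\|_2\le k\lambda_{d+1}^{-1/2}\|v\|_\Sigma$ anyway. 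The multi-scale nature of the shrinkage is the idea your plan is missing.
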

\begin{remark}[Comparison to \cite{kelner2023feature}]
It is useful to compare \Cref{lemma:outlier-diag-construction} with \citep[Lemma 2.4]{kelner2023feature}, the main structural lemma underlying the prior algorithm for sparse linear regression in \Cref{setting:outliers}. Their lemma may be interpreted as constructing a rescaling matrix with \emph{binary} entries, such that all but $k^{O(k)} d$ entries are equal to one, and there are no sparse approximate linear dependencies. \Cref{lemma:outlier-diag-construction} relaxes the binary assumption, and thereby achieves a much stronger guarantee both quantitatively \--- in the number of entries not equal to one \--- and qualitatively \--- in that it rules out quantitatively sparse dependencies, not just algebraically sparse dependencies. The first improvement is the source of our improved sample complexity, but the second improvement is even more crucial: rescalability requires a spectral lower bound that holds for all quantitatively sparse vectors. Thus, \citep[Lemma 2.4]{kelner2023feature} does not have any direct implication for rescalability.
\end{remark}

The proof of \Cref{lemma:outlier-diag-construction} is constructive, and looks quite similar to an ``oracle'' version of \lp{} where the procedure is given access to $\Sigma$, and thus can compute an eigendecomposition (but of course, this is only within the analysis). The intuition is as follows. Drawing on the proof technique of \cite{kelner2023feature}, one might hope that there is a good \emph{binary} rescaling matrix $\tilde D$ (and that it can be constructed by iteratively zeroing out ``bad'' coordinates). Unfortunately, the following example shows that this is false:

\begin{example}\label{example:decay-vec}
Let $v := (1, 1/2, 1/4,\dots, 1/2^{n-1}) \in \RR^n$, and let $\Sigma := I_n - (1-\epsilon) \frac{vv^\t}{\norm{v}_2^2}$, so that $\Sigma$ has a single small eigenvalue, with eigenvector $v$. Note that $v$ is $O(1)$-quantitatively sparse, and $\norm{v}_\Sigma \ll \norm{v}_2$. Thus, $\tilde D := I_n$ does not satisfy the conditions of \Cref{lemma:outlier-diag-construction}. Moreover, if any $t$ diagonal entries of $\tilde D$ are set to zero, the vector $\tilde D^{1/2} v$ will still be $O(1)$-quantitatively sparse, and $\lVert\tilde D^{1/2} v\rVert_2 \geq 2^{-t}$ whereas $\norm{v}_\Sigma = \epsilon$. So the conditions can't be satisfied unless $t \geq O(\log(1/\epsilon))$.
\end{example}

In particular, \Cref{example:decay-vec} shows that if $\tilde D$ is required to be binary, then we cannot avoid dependence on $\log(\lambda_n/\lambda_1)$ in the bound on $|\{i: \tilde D_{ii} \neq 1\}|$, which will show up in the final sample complexity bound for \rl{}. The workaround for \Cref{example:decay-vec} is to set $\tilde D_{ii} = 2^{i-k}$ for each $1 \leq i \leq k$: at this point, $\tilde D^{1/2} v$ is $\Omega(k)$-quantitatively dense, so $v$ no longer violates the guarantee of \Cref{lemma:outlier-diag-construction}. More generally, the idea is to iteratively rescale large coordinates (of each approximate dependency vector, like $v$ above) by a constant factor, rather than zeroing them out. This discourages quantitative sparsity, as formalized in \Cref{lemma:l2linfty-evolution}: after enough iterations, any vector will either be quantitatively dense or have small $\ell_2$ norm (with respect to the rescaling). We bound the number of rescaled coordinates using \Cref{lemma:heavy-coordinate-bound}.


Having discussed the high-level plan, we now proceed to the formal proof of \Cref{lemma:outlier-diag-construction}.


\begin{namedproof}{\Cref{lemma:outlier-diag-construction}}
Let $\Sigma = \sum_{i=1}^n\lambda_i u_i u_i^\t$ be an eigendecomposition of $\Sigma$ and let $P = \sum_{i=d+1}^n u_i u_i^\t$. Set $T = \lceil \log_2(\lambda_n/\lambda_1)\rceil$. We will iteratively define diagonal matrices $D^{(1)}, D^{(2)},\dots,D^{(T+1)}$ and set $\tilde D := D^{(T+1)}$. In particular, set $D^{(1)} := I_n$. For each $1 \leq t \leq T$, define the set
\begin{equation} \CS^{(t)} := \left\{i \in [n]: \sup_{x \in \ker(P) \setminus \{0\}} \frac{(D^{(t)})^{1/2}_{ii}x_i}{\norm{(D^{(t)})^{1/2} x}_2} > \frac{1}{8k}\right\}.\label{eq:st-def}\end{equation}
Then we let $D^{(t+1)}$ be the $n \times n$ diagonal matrix defined by 
\[ D^{(t+1)}_{ii} := \begin{cases} D^{(t)}_{ii}/2 & \text{ if } i \in \CS^{(t)} \\ D^{(t)}_{ii} & \text{ otherwise } \end{cases}.\]
This defines $\tilde D := D^{(T+1)}$. From the definition it is clear that $0 \prec \tilde D \preceq I_n$. We start by bounding $|\{i: \tilde D_{ii} \neq 1\}|$, which is exactly $|\bigcup_{t=1}^T \CS^{(t)}|$. By applying \Cref{lemma:heavy-coordinate-bound} to each set $\CS^{(t)}$ individually, we could get a straightforward bound of $O(dk^2 T)$. But we would like to avoid the factor of $T$, which we can do as follows. Define $V := \{\tilde D^{1/2}x: x \in \ker(P)\}$ and define
\[ \CS := \left\{i \in [n]: \sup_{y \in V\setminus\{0\}} \frac{y_i}{\norm{y}_2} > \frac{1}{8\sqrt{2}k}\right\}.\]
We claim that $\bigcup_{t=1}^T \CS^{(t)} \subseteq \CS$. Indeed, for any $i \in \bigcup_{t=1}^T \CS^{(t)}$, let $f(i) := \argmax\{t\in[T]: i \in \CS^{(t)}\}$. Then $\tilde D_{ii} = D^{(f(i))}_{ii}/2$, and $\tilde D_{jj} \leq D^{(f(i))}_{jj}$ for all $j \in [n]$. Also, since $i \in \CS^{(f(i))}$, there is some $x \in \ker(P)\setminus\{0\}$ such that $(D^{(f(i))})_{ii}^{1/2} x_i > \norm{(D^{(f(i))})^{1/2}x}_2/(8k)$. It follows that
\[ \tilde D_{ii}^{1/2} x_i = \frac{1}{\sqrt{2}} (D^{f(i)})_{ii}^{1/2} x_i > \frac{\norm{(D^{(f(i))})^{1/2}x}_2}{8\sqrt{2} k} \geq \frac{\norm{\tilde D^{1/2}x}_2}{8\sqrt{2}k}\]
and thus $y = \tilde D^{1/2} x \in V\setminus\{0\}$ satisfies $y_i/\norm{y}_2 > 1/(8\sqrt{2}k)$, so $i \in \CS$. As claimed, we get $\bigcup_{t=1}^T \CS^{(t)} \subseteq \CS$. But now since $V$ is a $d$-dimensional subspace, \Cref{lemma:heavy-coordinate-bound} gives that $|\CS| \leq 128dk^2$. This proves the first part of the lemma.


Next, fix any $v \in \RR^n$ with $\norm{\tilde D^{1/2} v}_2 > 4k\lambda_{d+1}^{-1/2}\norm{v}_\Sigma$. Since $\lambda_{d+1} P \preceq \Sigma$, it follows that $\norm{\tilde D^{1/2} v}_2 > 4k\norm{v}_P$. For any $t \in [T]$, we have $D^{(t)} \succeq \tilde  D$ and thus \begin{equation} \norm{(D^{(t)})^{1/2} v}_2 > 4k\norm{v}_P. \label{eq:proj-dist}\end{equation}

We claim that the vectors $(D^{(1)})^{1/2}v, \dots, (D^{(T+1)})^{1/2} v$ evolve according to the procedure described in \Cref{lemma:l2linfty-evolution}. Indeed, fix $t \in [T]$ and suppose that $\norm{(D^{(t)})^{1/2} v}_2 \leq k\norm{(D^{(t)})^{1/2} v}_\infty$. Pick any $i \in [n]$ such that $|(D^{(t)})^{1/2}_{ii}v_i| \geq \norm{(D^{(t)})^{1/2} v}_\infty/2$. We need to show that $i \in \CS^{(t)}$.

Define $v = a + b$ where $b \in \ker(P)$ and $a \in \vspan(P)$, so that $\norm{a}_2 = \norm{v}_P$. Then
\begin{align*}
|(D^{(t)})^{1/2}_{ii} b_i|
&\geq |(D^{(t)})^{1/2}_{ii} v_i| - |(D^{(t)})^{1/2}_{ii} a_i| \\
&\geq \frac{1}{2}\norm{(D^{(t)})^{1/2} v}_\infty - \norm{(D^{(t)})^{1/2} a}_\infty \\
&\geq \frac{1}{2k}\norm{(D^{(t)})^{1/2} v}_2 - \norm{(D^{(t)})^{1/2} a}_2 \\
&\geq \frac{1}{2k}\norm{(D^{(t)})^{1/2} v}_2 - \norm{a}_2 \\
&\geq \frac{1}{4k}\norm{(D^{(t)})^{1/2}v}_2
\end{align*}
where the first inequality is by reverse triangle inequality, the second inequality uses the choice of $i$, the fourth inequality uses that $D^{(t)} \preceq D^{(1)} = I_n$, and the fifth inequality uses \Cref{eq:proj-dist} together with the fact that $\norm{a}_2 = \norm{v}_P$. But now
\[ \norm{(D^{(t)})^{1/2} b}_2 \leq \norm{(D^{(t)})^{1/2} v}_2 + \norm{(D^{(t)})^{1/2} a}_2 \leq \norm{(D^{(t)})^{1/2} v}_2 + \norm{a}_2  \leq 2\norm{(D^{(t)})^{1/2} v}_2\]
again using the triangle inequality, the fact that $D^{(t)} \preceq I_n$, and \Cref{eq:proj-dist}. Combining the above two displays, we get that $x = (D^{(t)})^{1/2} b$ satisfies $|x_i| \geq \norm{x}_2/(8k)$. Since $b \in \ker(P)$ (and $b \neq 0$) we conclude by definition of $\CS^{(t)}$ that $i \in \CS^{(t)}$. Thus, the vectors $(D^{(1)})^{1/2}v, \dots, (D^{(T+1)})^{1/2} v$ indeed evolve according to the procedure described in \Cref{lemma:l2linfty-evolution}. Since $D^{(T+1)} = \tilde D$ and $D^{(1)} = I_n$, it follows from that lemma that at least one of the following occurs:
\begin{itemize}
    \item $\norm{\tilde D^{1/2}v}_2 > (k/2) \norm{\tilde D^{1/2}v}_\infty$, or
    \item $\norm{\tilde D^{1/2} v}_2 \leq 2^{-T} k \norm{(D^{(1)})^{1/2} v}_\infty \leq 2^{-T} k \norm{v}_2$.
\end{itemize}
In the former case, we are done. In the latter case, by choice of $T$ and the fact that $\Sigma \succeq \lambda_1 I_n$, we have $\norm{\tilde D^{1/2}v}_2 \leq \sqrt{\frac{\lambda_1}{\lambda_n}} k \norm{v}_2 \leq \frac{k}{\sqrt{\lambda_n}} \norm{v}_\Sigma \leq \frac{k}{\sqrt{\lambda_{d+1}}} \norm{v}_\Sigma$ which contradicts the assumption we made about $v$. We conclude that for any $v \in \RR^n$, either $\norm{\tilde D^{1/2} v}_2 \leq 4k\lambda_{d+1}^{-1/2}\norm{v}_\Sigma$ or $\norm{\tilde D^{1/2}v}_2 > (k/2)\norm{\tilde D^{1/2}v}_\infty$.
\end{namedproof}

\begin{lemma}\label{lemma:l2linfty-evolution}
Let $v \in \RR^n$ be an arbitrary vector with $\ell_\infty$ norm at most $1$. Consider the following procedure that we repeat $T$ times:
\begin{enumerate}[label=(\alph*)]
    \item If $\norm{v}_2 \leq k\norm{v}_\infty$, then let $S \subseteq [n]$ be a set of indices containing $\{i \in [n]: |v_i| \geq \norm{v}_\infty/2\}$. Halve $v_i$ for all $i \in S$.
    \item Otherwise, let $S \subseteq [n]$ be arbitrary. Halve $v_i$ for all $i \in S$.
\end{enumerate}
After this procedure, the final value of $v$ satisfies either $\norm{v}_2 > (k/2)\norm{v}_\infty$ or $\norm{v}_2 \leq 2^{-T}k$.
\end{lemma}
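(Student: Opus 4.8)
The plan is to prove the lemma by a \emph{monotonicity} argument: I will show that the ``quantitatively dense'' property $\norm{v}_2 > (k/2)\norm{v}_\infty$, once it holds somewhere along the trajectory, persists through every subsequent step; and that if it \emph{never} holds, then every step must be executed in case~(a), which forces $\norm{v}_\infty$ to halve at each step. Write $v^{(0)},v^{(1)},\dots,v^{(T)}$ for the iterates, so $v^{(0)} = v$ with $\norm{v^{(0)}}_\infty \le 1$.

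First I would establish two one-step facts, using throughout that halving the coordinates in any set $S$ multiplies $\ell_2$ by at least $1/2$ (since $\norm{v_{\mathrm{new}}}_2^2 = \sum_{i\notin S} v_i^2 + \tfrac14\sum_{i\in S}v_i^2 \ge \tfrac14\norm{v_{\mathrm{old}}}_2^2$) and never increases $\ell_\infty$. \emph{Case~(b) step:} if $\norm{v^{(t-1)}}_2 > k\norm{v^{(t-1)}}_\infty$ before the step, then $\norm{v^{(t)}}_2 \ge \tfrac12\norm{v^{(t-1)}}_2 > \tfrac{k}{2}\norm{v^{(t-1)}}_\infty \ge \tfrac{k}{2}\norm{v^{(t)}}_\infty$, so the dense property holds afterwards (no hypothesis on $v^{(t-1)}$ beyond being in case~(b) is needed). \emph{Case~(a) step with dense hypothesis:} if $\tfrac{k}{2}\norm{v^{(t-1)}}_\infty < \norm{v^{(t-1)}}_2 \le k\norm{v^{(t-1)}}_\infty$ and we halve a set $S$ containing every coordinate of magnitude $\ge \norm{v^{(t-1)}}_\infty/2$, then every coordinate ends with magnitude $\le \norm{v^{(t-1)}}_\infty/2$, so $\norm{v^{(t)}}_\infty \le \tfrac12\norm{v^{(t-1)}}_\infty$, while $\norm{v^{(t)}}_2 \ge \tfrac12\norm{v^{(t-1)}}_2 > \tfrac{k}{4}\norm{v^{(t-1)}}_\infty \ge \tfrac{k}{2}\norm{v^{(t)}}_\infty$, so the dense property again holds afterwards. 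Combining: if the dense property holds after step $t-1$, it holds after step $t$; by induction, if it holds after any step $s\in\{0,\dots,T\}$ it holds after step $T$, which is the first alternative of the lemma.

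It remains to treat the case where the dense property fails after every step $0,1,\dots,T$. Then for each $t\in\{0,\dots,T-1\}$ we have $\norm{v^{(t)}}_2 \le \tfrac{k}{2}\norm{v^{(t)}}_\infty \le k\norm{v^{(t)}}_\infty$, so step $t+1$ is executed in case~(a); hence all $T$ steps are case~(a). Since in a case~(a) step the argmax coordinate is heavy, hence in $S$, hence halved, while no coordinate ends above $\norm{v}_\infty/2$, each case~(a) step halves $\norm{v}_\infty$, giving $\norm{v^{(T)}}_\infty \le 2^{-T}\norm{v^{(0)}}_\infty \le 2^{-T}$. Finally, failure of the dense property after step $T$ gives $\norm{v^{(T)}}_2 \le \tfrac{k}{2}\norm{v^{(T)}}_\infty \le 2^{-T-1}k \le 2^{-T}k$, which is the second alternative.

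I do not expect a genuine obstacle — the lemma is elementary — but the one point needing care is the arithmetic in the case~(a) one-step estimate: one must combine the factor-$\tfrac12$ loss in $\ell_2$ with the factor-$\tfrac12$ gain (exact halving) in $\ell_\infty$ so that the threshold $\tfrac{k}{2}$ is recovered exactly rather than degraded; and one should note $S$ may strictly contain the heavy set (harmless, since this only shrinks coordinates further) and handle the degenerate iterate $v^{(t)}=0$ separately (where $\norm{v^{(t)}}_2 = 0 \le 2^{-T}k$ trivially).
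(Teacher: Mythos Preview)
Your proof is correct and follows essentially the same approach as the paper's: both track the ratio $\norm{v}_2/\norm{v}_\infty$, observe that a case~(a) step at least halves $\norm{v}_\infty$ while at most halving $\norm{v}_2$ (so the ratio cannot decrease), and that a case~(b) step can shrink the ratio by at most a factor of~$2$ (from above $k$ to above $k/2$), yielding the persistence of the threshold $k/2$; then in the complementary case all steps are case~(a) and $\norm{v}_\infty$ collapses geometrically. The only cosmetic difference is that the paper phrases its dichotomy as ``ratio ever exceeds $k$'' versus ``ratio $\le k$ at all steps,'' whereas you phrase it as ``ratio ever exceeds $k/2$'' versus not --- your invariant is marginally cleaner since it matches the target conclusion exactly, but the content is the same.
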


\begin{proof}
At any step where case (a) occurs, note that $\norm{v}_2$ decreases by a factor of at most $2$, whereas $\norm{v}_\infty$ decreases by a factor of at least $2$, so the ratio $\norm{v}_2/\norm{v}_\infty$ cannot decrease. Moreover, when case (b) occurs, $\norm{v}_2$ decreases by a factor of at most $2$, and $\norm{v}_\infty$ is non-increasing, so the ratio $\norm{v}_2/\norm{v}_\infty$ can decrease by at most a factor of $2$. Thus, if at any step we have $\norm{v}_2 > k\norm{v}_\infty$, inductively we have at all subsequent steps (and in particular after the final step) that $\norm{v}_2 > (k/2) \norm{v}_\infty$.

It remains to consider the case that $\norm{v}_2 \leq k\norm{v}_\infty$ at all steps. Then $\norm{v}_\infty$ decreases by a factor of at least $2$ at every step. Since initially we had $\norm{v}_\infty \leq 1$, at the end we must have $\norm{v}_\infty \leq 2^{-T}$ and thus $\norm{v}_2 \leq 2^{-T}k$.
\end{proof}

Above, we used the following simple bound on the number of basis vectors that can be correlated with a low-dimensional subspace.

\begin{lemma}[\cite{kelner2023feature}]\label{lemma:heavy-coordinate-bound}
Let $V \subseteq \RR^n$ be a subspace with $d := \dim V$. For some $\alpha > 0$ define $$S = \left\{ i \in [n]: \sup_{x \in V \setminus \{0\}} \frac{x_i}{\norm{x}_2} \geq \alpha\right\}.$$
Then $|S| \leq d/\alpha^2$. 
\end{lemma}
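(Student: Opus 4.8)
\textbf{Proof proposal for \Cref{lemma:heavy-coordinate-bound}.}
The plan is to identify the quantity $\sup_{x\in V\setminus\{0\}} x_i/\norm{x}_2$ with (the square root of) the $i$-th diagonal entry of the orthogonal projector onto $V$, and then use the fact that the trace of a projection equals its rank.

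First I would let $P \in \RR^{n\times n}$ be the orthogonal projection matrix onto $V$, so that $P = P^\t = P^2$ and $\tr(P) = \dim V = d$. Since $V$ is a linear subspace it is closed under negation, so for each coordinate $i$,
\[ \sup_{x\in V\setminus\{0\}} \frac{x_i}{\norm{x}_2} = \sup_{x\in V\setminus\{0\}} \frac{|x_i|}{\norm{x}_2} = \sup_{u\in V,\ \norm{u}_2\le 1} \langle u, e_i\rangle = \norm{Pe_i}_2, \]
where the last equality is the standard fact that the norm of the projection of a vector onto $V$ equals the maximal inner product of that vector with a unit vector of $V$. Squaring gives
\[ \left(\sup_{x\in V\setminus\{0\}} \frac{x_i}{\norm{x}_2}\right)^2 = \norm{Pe_i}_2^2 = e_i^\t P^\t P e_i = e_i^\t P e_i = P_{ii}. \]

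Now, for every $i\in S$ the defining inequality of $S$ yields $P_{ii}\ge \alpha^2$, while every diagonal entry satisfies $P_{jj} = \norm{Pe_j}_2^2 \ge 0$. Summing over $i\in S$ and bounding by the full trace,
\[ \alpha^2 |S| \le \sum_{i\in S} P_{ii} \le \sum_{j=1}^n P_{jj} = \tr(P) = d, \]
so $|S|\le d/\alpha^2$, as claimed. I do not expect any real obstacle; the only point that needs a word of care is that the supremum defining $P_{ii}$ need not be \emph{attained} for the argument to run (the identity $P_{ii} = (\sup_x x_i/\norm{x}_2)^2$ holds regardless of attainment), though attainment does follow from compactness of the unit sphere of the finite-dimensional space $V$ if one prefers to phrase $S$ via an achieved maximum.
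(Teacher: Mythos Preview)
Your proof is correct. The paper does not actually give its own proof of this lemma---it is stated with a citation to \cite{kelner2023feature} and used as a black box---so there is no ``paper's proof'' to compare against; your argument via the identity $\sup_{x\in V\setminus\{0\}} x_i/\norm{x}_2 = \sqrt{P_{ii}}$ and $\tr(P)=d$ is the standard one and is exactly what one would expect.
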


We now use \Cref{lemma:outlier-diag-construction} to prove that any covariance matrix $\Sigma$ with few outlier eigenvalues is rescalable, in the following quantitative sense.

\begin{lemma}\label{lemma:outlier-rescalability}
There is a constant $C_{\ref{lemma:outlier-rescalability}}$ with the following property. Let $n,k,\dlow,\dhigh \in \NN$ and let $\Sigma \in \RR^{n\times n}$ be a positive semi-definite matrix with eigenvalues $\lambda_1 \leq \dots \leq \lambda_n$. Then $\Sigma$ is $(\alpha, h)$-rescalable at sparsity $k$, where $\alpha := C_{\ref{lemma:outlier-rescalability}} k^2 \frac{\lambda_{n-\dhigh}}{\lambda_{\dlow+1}}$ and $h := C_{\ref{lemma:outlier-rescalability}} \dlow k^2 + \dhigh$. Moreover, the diagonal matrix $D$ realizing the rescaling satisfies $\max_i \frac{\Sigma_{ii}}{D_{ii}} \leq C_{\ref{lemma:outlier-rescalability}}k^2\frac{\lambda_n^2}{\lambda_1^2}$.
\end{lemma}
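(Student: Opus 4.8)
The plan is to take $D$ to be a suitably rescaled copy of the diagonal matrix $\tilde D$ produced by \Cref{lemma:outlier-diag-construction}. We may assume $\Sigma \succ 0$ (the degenerate case is vacuous when $\alpha$ blows up, and follows by perturbing $\Sigma \to \Sigma + \epsilon I$ otherwise). I would apply \Cref{lemma:outlier-diag-construction} with $d := \dlow$ and with its free parameter set to $k' := 12k$ rather than $k$; this reparametrization is what lets one pass between the $\ell_1/\ell_\infty$ notion of quantitative sparsity in \Cref{def:quant-sparse} and the $\ell_2/\ell_\infty$ hypothesis in \Cref{lemma:outlier-diag-construction}, since $k \geq 1$ forces $\sqrt{32k} \leq 6k = k'/2$. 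Let $\tilde D$ be the resulting matrix, set $c := \lambda_{\dlow+1}/(48k)^2$, and put $D := c\tilde D \succ 0$. From the lemma, $|B| := |\{i : \tilde D_{ii} \neq 1\}| \leq 128\dlow(12k)^2 = O(\dlow k^2)$; and since the construction halves entries of $I_n$ over $T := \lceil \log_2(\lambda_n/\lambda_1)\rceil$ rounds, every entry obeys $\tilde D_{ii} \geq 2^{-T} \geq \lambda_1/(2\lambda_n)$.

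For the lower bound $I_n \preceq_{\MC_n(32k)} D^{-1/2}\Sigma D^{-1/2}$, fix $u \in \MC_n(32k)$ and set $w := D^{-1/2}u$; since $u^\t u = \norm{D^{1/2}w}_2^2$ and $u^\t D^{-1/2}\Sigma D^{-1/2}u = \norm{w}_\Sigma^2$, it suffices to show $\norm{D^{1/2}w}_2^2 \leq \norm{w}_\Sigma^2$. Writing $x := \tilde D^{1/2} w$, scale-invariance of quantitative sparsity gives $\norm{x}_1 \leq 32k\norm{x}_\infty$, hence $\norm{x}_2 \leq \sqrt{\norm{x}_1 \norm{x}_\infty} \leq \sqrt{32k}\,\norm{x}_\infty \leq 6k\norm{x}_\infty$. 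The second property of \Cref{lemma:outlier-diag-construction} applied to $w$ then gives $\norm{\tilde D^{1/2} w}_2 \leq 48k\,\lambda_{\dlow+1}^{-1/2}\norm{w}_\Sigma$, so $\norm{D^{1/2}w}_2^2 = c\norm{\tilde D^{1/2}w}_2^2 \leq c(48k)^2 \lambda_{\dlow+1}^{-1}\norm{w}_\Sigma^2 = \norm{w}_\Sigma^2$, as needed.

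For the upper bound I would write $\tilde D^{-1/2} = I_n + E$ with $E$ diagonal, supported on $B$, and $E_{ii} = \tilde D_{ii}^{-1/2} - 1 \geq 0$, then split the eigendecomposition of $\Sigma$ as $\Sigma = \Sigma_0 + \Sigma_1$ with $\Sigma_0 := \sum_{i \leq n-\dhigh}\lambda_i u_i u_i^\t \preceq \lambda_{n-\dhigh} I_n$ and $\Sigma_1 := \sum_{i > n-\dhigh}\lambda_i u_i u_i^\t \succeq 0$ of rank $\leq \dhigh$. Then
\[ \tilde D^{-1/2}\Sigma\tilde D^{-1/2} = \Sigma_0 + R + \Sigma_1', \qquad R := E\Sigma_0 + \Sigma_0 E + E\Sigma_0 E, \quad \Sigma_1' := (I_n+E)\Sigma_1(I_n+E), \]
where $R$ is symmetric of rank $\leq 2|B|$ and $\Sigma_1' \succeq 0$ has rank $\leq \dhigh$. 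Bounding $R \preceq \norm{R}_{\mathrm{op}}\Pi$ with $\Pi$ the orthogonal projection onto $\mathrm{im}(R)$ gives $\tilde D^{-1/2}\Sigma\tilde D^{-1/2} \preceq \lambda_{n-\dhigh} I_n + L''$ with $L'' := \norm{R}_{\mathrm{op}}\Pi + \Sigma_1' \succeq 0$ of rank $\leq 2|B| + \dhigh = O(\dlow k^2 + \dhigh)$; here $\norm{R}_{\mathrm{op}}$ is finite (in fact $O((\lambda_n/\lambda_1)\lambda_{n-\dhigh})$, since $\norm{E}_{\mathrm{op}} = O(\sqrt{\lambda_n/\lambda_1})$), but only its finiteness matters because \Cref{def:rescalable} imposes no norm bound on $L$. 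Multiplying by $c^{-1} = (48k)^2/\lambda_{\dlow+1}$ yields $D^{-1/2}\Sigma D^{-1/2} \preceq \alpha I_n + L$ with $\alpha := (48k)^2\lambda_{n-\dhigh}/\lambda_{\dlow+1}$ and $L := c^{-1}L'' \succeq 0$ of rank $O(\dlow k^2 + \dhigh)$, and taking $C_{\ref{lemma:outlier-rescalability}}$ larger than the implied constants gives the claimed $(\alpha,h)$-rescalability. Finally $\Sigma_{ii} = e_i^\t\Sigma e_i \leq \lambda_n$ and $D_{ii} = c\tilde D_{ii} \geq c\lambda_1/(2\lambda_n)$, so $\Sigma_{ii}/D_{ii} \leq 2(48k)^2\lambda_n^2/(\lambda_{\dlow+1}\lambda_1) \leq 2(48k)^2 \lambda_n^2/\lambda_1^2$, again of the required form.

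The hard part will be the upper bound: one must peel off the top $\dhigh$ eigenvalues of $\Sigma$ \emph{before} conjugating by $\tilde D^{-1/2}$ so that the isotropic part carries the coefficient $\lambda_{n-\dhigh}$ and not $\lambda_n$, and one must observe that the rescaling-induced cross terms $E\Sigma_0 + \Sigma_0 E + E\Sigma_0 E$ --- whose operator norm may be as large as $\Theta((\lambda_n/\lambda_1)\lambda_{n-\dhigh})$ because some $\tilde D_{ii}$ are as small as $\Theta(\lambda_1/\lambda_n)$ --- are nonetheless low-rank and hence absorbable into the $L$ term, which \Cref{def:rescalable} constrains only in rank. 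The lower bound is, by contrast, essentially a direct invocation of \Cref{lemma:outlier-diag-construction} once the $\ell_1/\ell_\infty$-versus-$\ell_2/\ell_\infty$ translation is dealt with via the choice of parameter $k' = 12k$.
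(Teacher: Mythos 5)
Your proof is correct, and for the upper bound it takes a genuinely different route from the paper, while the lower bound is essentially the same argument (invoking the second guarantee of \Cref{lemma:outlier-diag-construction} on the rescaled quantitatively sparse vector; your use of $\norm{x}_2 \le \sqrt{\norm{x}_1\norm{x}_\infty}$ instead of the cruder $\norm{x}_2 \le \norm{x}_1$ lets you set the auxiliary parameter to $12k$ rather than the paper's $64k$, a small improvement in constants).

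For the upper bound, the paper chooses the low-rank remainder to be
\[
L \;:=\; \alpha\,(\tilde D^{-1} - I_n) \;+\; D^{-1/2}\Big(\textstyle\sum_{i>n-\dhigh}\lambda_i u_i u_i^\t\Big)D^{-1/2},
\]
which is designed so that $\alpha D + D^{1/2} L D^{1/2}$ telescopes exactly to $\alpha c\, I_n + \sum_{i>n-\dhigh}\lambda_i u_i u_i^\t \succeq \lambda_{n-\dhigh} I_n + \sum_{i>n-\dhigh}\lambda_i u_i u_i^\t \succeq \Sigma$. (The paper's displayed formula for $L$ has two typos --- the summation range should be $i > n-\dhigh$ rather than $i > \dhigh$, and the $D^{\pm 1/2}$ conjugation of the eigenvector piece is missing --- but the argument is clearly this.) You instead expand $\tilde D^{-1/2}\Sigma\tilde D^{-1/2} = \Sigma_0 + R + \Sigma_1'$ with $R = E\Sigma_0 + \Sigma_0 E + E\Sigma_0 E$, observe that $R$ has rank at most $2|B|$ because $E$ is supported on $B$, and absorb it into $L$ via the crude bound $R \preceq \norm{R}_{\mathrm{op}}\Pi$. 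Both are valid; the paper's choice of $L$ avoids ever bounding $\norm{R}_\mathrm{op}$, while your version is more mechanical and you correctly note that the (potentially large) operator norm of $R$ is irrelevant because \Cref{def:rescalable} only constrains the rank, not the magnitude, of $L$. Your remark that the degenerate PSD case needs separate handling (the construction is only well-defined for $\Sigma \succ 0$) is also correct, and worth flagging since the paper's statement allows merely PSD $\Sigma$.
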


As previously discussed, the diagonal matrix $D$ witnessing the rescalability is an appropriate scalar multiple of the matrix $\tilde D$ constructed in \Cref{lemma:outlier-diag-construction}. The spectral lower bound needed for rescalability follows from the second guarantee of \Cref{lemma:outlier-diag-construction}. Proving the spectral upper bound requires choosing the low-rank matrix $L$ to handle both the coordinates $i$ for which $\tilde D_{ii} \neq 1$ (of which there are not many, by the first guarantee of \Cref{lemma:outlier-diag-construction}), as well as the ``high'' end of the eigenspectrum of $\Sigma$.

\begin{proof}
Let $\Sigma = \sum_{i=1}^n \lambda_i u_i u_i^\t$ be an eigendecomposition of $\Sigma$. Let $\tilde D$ be the matrix guaranteed by \Cref{lemma:outlier-diag-construction} with parameters $\dlow$ and $64k$, and define $D := \frac{\lambda_{\dlow + 1}}{2^{16}k^2} \tilde D$. By construction it is clear that $\min_i \tilde D_{ii} \geq \lambda_1/(2\lambda_n)$, so $\max_i \frac{\Sigma_{ii}}{D_{ii}} \leq \frac{2^{17}k^2 \lambda_n^2}{\lambda_1^2}$. Also define \[L := C_{\ref{lemma:outlier-rescalability}} k^2 \frac{\lambda_{n-\dhigh}}{\lambda_{\dlow+1}} \cdot (\tilde D^{-1} - I_n) + \sum_{i=\dhigh+1}^n \lambda_i u_i u_i^\t.\]
By the first guarantee of \Cref{lemma:outlier-diag-construction} we have $|\{i: \tilde D_{ii} \neq 1\}| \leq O(\dlow k^2)$, and thus $\vrank(L) \leq O(\dlow k^2) + \dhigh$. It remains to check that
\[I_n \preceq_{\MC_n(32k)} D^{-1/2}\Sigma D^{-1/2} \preceq C_{\ref{lemma:outlier-rescalability}} k^2 \frac{\lambda_{n-\dhigh}}{\lambda_{\dlow+1}} I_n + L\]
when $C_{\ref{lemma:outlier-rescalability}}$ is a sufficiently large constant. Pick any $w \in \MC_n(32k)$ and set $v := \tilde D^{-1/2} w$. We know that $\norm{\tilde D^{1/2} v}_2 \leq \norm{\tilde D^{1/2} v}_1 \leq 32k \norm{\tilde D^{1/2} v}_\infty$. So by the second guarantee of \Cref{lemma:outlier-diag-construction}, we have $\norm{\tilde D^{1/2} v}_2 \leq 256k\lambda_{\dlow+1}^{-1/2} \norm{v}_\Sigma$ (recall that we are taking the parameter $k$ in \Cref{lemma:outlier-diag-construction} to be $64k$). Hence,
\[\norm{w}_2^2 = \norm{\tilde D^{1/2} v}_2^2 \leq 2^{16}k^2 \lambda_{\dlow+1}^{-1} \cdot w^\t \tilde{D}^{-1/2} \Sigma \tilde{D}^{-1/2} w = w^\t D^{-1/2} \Sigma D^{-1/2} w.\]
This proves the first inequality. To prove the second inequality, note that
\begin{align*}
C_{\ref{lemma:outlier-rescalability}} k^2 \frac{\lambda_{n-\dhigh}}{\lambda_{\dlow+1}} D + D^{1/2} L D^{1/2}
&= C_{\ref{lemma:outlier-rescalability}} k^2 \frac{\lambda_{n-\dhigh}}{\lambda_{\dlow+1}} D^{1/2} \tilde D^{-1} D^{1/2} + \sum_{i=\dhigh+1}^n \lambda_i u_i u_i^\t \\ 
&\succeq \lambda_{n-\dhigh} I_n + \sum_{i=\dhigh+1}^n \lambda_i u_i u_i^\t \\ 
&\succeq \Sigma
\end{align*}
so long as $C_{\ref{lemma:outlier-rescalability}} \geq 2^{16}$. Applying $D^{-1/2}$ on the left and right yields the second claimed inequality.
\end{proof}

\begin{namedproof}{\Cref{theorem:outlier-lasso}}
Immediate from \Cref{theorem:rescaled-lasso} and \Cref{lemma:outlier-rescalability}.
\end{namedproof}

\section{Hardness evidence via sparse PCA with a near-critical negative spike}\label{sec:ldlr}

In this section we prove \Cref{thm:no-subquadratic-alg-intro}, which asserts (under \Cref{conj:ldlr}, defined below) that no polynomial-time algorithm for sparse linear regression can achieve prediction error $\sigma^2/10$ with significantly less than $O(k^2 \log n)$ samples, even when $\Sigma$ is $(1,k)$-rescalable. Since \rl{} has sample complexity $O(k^2 \log n)$ in \Cref{setting:lvm} with up to $O(k^2)$ latent variables, it follows under the same conjecture that \rl{} is essentially optimal in that setting.

\paragraph{Section outline.} In \Cref{sec:ldlr-prelim}, we introduce \Cref{conj:ldlr} and other necessary background. In \Cref{sec:reduction}, we formally prove \Cref{theorem:pca-to-slr-overview}, which states that negative-spike $k$-sparse PCA can be reduced to $k$-sparse linear regression with $(1,k)$-rescalable covariance. In \Cref{sec:ldp}, we show that low-degree polynomials cannot solve near-critical negative spike sparse PCA with $o(k^2)$ samples, adapting an argument from \cite{bandeira2020computational}. Combined with \Cref{theorem:pca-to-slr-overview}, this yields \Cref{thm:no-subquadratic-alg-intro}. In \Cref{sec:sdp}, we give additional evidence of a statistical/computational tradeoff for negative spike sparse PCA, by showing that a natural semidefinite program also requires $\tilde\Omega(k^2)$ samples.


\begin{remark}\label{remark:analogy-spn}
The negative spiked sparse PCA testing problem can be thought of as a real-valued analogue of the foundational Sparse Parities with Noise (SPN) problem \citep{feldman2009agnostic}, where the task is to distinguish between $m$ independent samples from the null distribution $\Unif \{\pm 1\}^n$, and $m$ samples from a random planted distribution on $\{\pm 1\}^n$ defined as follows. First, a set $S \subseteq [n]$ of size $k$ is sampled uniformly at random. Then, the constraint $x_S = 0$ is noisily ``planted'' in each of the $m$ samples. Formally, conditioned on $S$, each of the $m$ samples is i.i.d. with distribution
\[ X \sim (1 - \epsilon) \Unif \left\{x : \sum_{i\in S} x_i = 0 \pmod{2} \right\} + \epsilon \Unif \left\{x : \sum_{i\in S} x_i = 1 \pmod{2} \right\}. \]
This problem is solvable via Gaussian elimination when $\epsilon = 0$ but is conjectured to require $n^{\Omega(k)}$ time when $\epsilon > 0$ is fixed.
See e.g. \cite{valiant2012finding} and references within for a history of this well-known problem. When $\epsilon$ is close to zero, we can informally view the planted measure as the result of conditioning on $\sum_{i\in S} X_i \approx 0$ (where the approximation is in the sense of $L^2$). 

The negative spike sparse PCA problem can similarly be viewed as testing between $N(0,I)$ and a planted distribution where for a random set $S$ (corresponding to the support of the spike vector $w$ in \Cref{def:pq}), we condition a standard Gaussian vector $X$ on the event that $\sum_{i\in S} X_i$ has smaller variance than in the null measure. In the near-critical case, we are essentially conditioning on $\sum_{i\in S} X_i \approx 0$. 

In both cases, the information-theoretic limit is at $\tilde O(k)$ samples. Of course, the computational limits are somewhat different, since SPN is believed to be computationally intractable even with $\poly(n)$ samples.
\end{remark}

\subsection{Preliminaries}\label{sec:ldlr-prelim}

Recall the definition of the sparse spike prior $\CW_{n,k}$ (\Cref{def:sparse-prior}), the sparse spiked Wishart distribution $\BP_{n,k,\beta,m}$, and the null distribution $\BQ_{n,m}$ (\Cref{def:pq}) from \Cref{sec:overview-lower}. We formally define the testing problem associated with these distributions, and the induced likelihood ratio.

\begin{definition}\label{def:strong-detection}
Fix functions $k, m: \NN \to \NN$ with $k(n) \leq n$, and $\beta: \NN \to (-1, \infty)$. An algorithm $\MA$ solves the \emph{strong detection} problem for the $k$-sparse spiked Wishart model (with parameter functions $k$, $m$, and $\beta$) if it distinguishes $\BP_{n,k(n),\beta(n),m(n)}$ from $\BQ_{n,m(n)}$ with probability $1-o(1)$, i.e.
\[\left|\Pr_{Z \sim \BP_{n,k(n),\beta(n),m(n)}}[\MA(Z)=1] -\Pr_{Z \sim \BQ_{n,m(n)}}[\MA(Z)=1]\right| = 1-o(1).\]
\end{definition}

\begin{definition}\label{def:l}
Let $n,k,m \in \NN$ with $k \leq n$ and $\beta \in (-1,\infty)$. We define the \emph{likelihood ratio} as $L_{n,k,\beta,m} := \frac{d\BP_{n,k,\beta,m}}{d\BQ_{n,m}}$.
\end{definition}

For a planted distribution $\BP$ and a null distribution $\BQ$, the (degree-$D$) \emph{low-degree likelihood ratio} (LDLR) $\norm{L^{\leq D}}_{L^2(\BQ)}$ is the norm under $L^2(\BQ)$ of the likelihood ratio $L = d\BP/d\BQ$ after orthogonal projection onto the space of multivariate polynomials of degree at most $D$. For a family of planted distributions $(\BP_n)_{n\in\NN}$ and null distributions $(\BQ_n)_{n\in\NN}$, if the low-degree likelihood ratio between $\BP_n$ and $\BQ_n$ can be bounded above by a constant as $n \to \infty$, then it can be seen that no degree-$D$ polynomial can distinguish $\BP_n$ from $\BQ_n$ with error $o(1)$ as $n \to \infty$ (see e.g. Proposition 1.15 in \cite{kunisky2019notes} and references). 

Moreover, it has been conjectured that for any ``natural'' statistical hypothesis testing problem, the best degree-$\log^{1+c} n$ polynomial (for any constant $c>0$) is at least as good a distinguisher as the best polynomial-time algorithm. Informally, this is known as the \emph{Low Degree Conjecture}. There are concrete, formal statements of this conjecture (see e.g. \cite{hopkins2018statistical}) for broad classes of statistical problems, although these do not specifically capture the spiked Wishart model. See also \cite{holmgren2020counterexamples,koehler2022reconstruction} for further discussion about the settings where low-degree polynomials are good proxies for polynomial time algorithms. Below we formalize the precise conjecture that needs to hold for our purposes (i.e. to prove non-existence of an efficient algorithm for negative-spike sparse PCA via a low-degree likelihood ratio bound).

\begin{conjecture}[Hardness thresholds for spiked Wishart match Low-Degree]\label{conj:ldlr}
Fix functions $k, m: \NN \to \NN$ with $k(n) \leq n$, and $\beta: \NN \to (-1, 0)$ with $1+\beta(n) \geq 1/\poly(n)$. If there exists some $D: \NN \to \NN$ with $D(n) = \log^{1+\Omega(1)} n$ and $\norm{L^{\leq D(n)}_{n,k(n),\beta(n),m(n)}}_{L^2(\BQ_{n,m(n)})} = O(1)$, then there is no randomized polynomial-time algorithm $\MA$ that solves strong detection (\Cref{def:strong-detection}) for the spiked Wishart model with parameter functions $k$, $m$, and $\beta$. 
\end{conjecture}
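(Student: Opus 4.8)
The final statement is a conjecture rather than a theorem --- it is a problem-specific instantiation of the \emph{Low-Degree Hypothesis} --- so there is no avenue to \emph{prove} it outright with present techniques; indeed, the Low-Degree Hypothesis itself is unproven and admits pathological counterexamples (cf.\ \cite{holmgren2020counterexamples}). What one can do, and what I would do to support \Cref{conj:ldlr} for this particular natural problem, is (i) verify that its hypothesis is sharp by analyzing the low-degree likelihood ratio for the sparse spiked Wishart model, and (ii) corroborate the conjecture by ruling out a canonical polynomial-time algorithm beyond the low-degree class, namely a natural SDP relaxation.

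For (i), the plan is to compute the degree-$D$ low-degree likelihood ratio $\norm{L^{\leq D}_{n,k,\beta,m}}_{L^2(\BQ_{n,m})}$ directly, following the spiked-Wishart Hermite-expansion machinery of \cite{bandeira2020computational} together with the sparsity-aware bookkeeping of \cite{ding2023subexponential}. Expanding $L$ in the Hermite basis for $\BQ_{n,m}$, the squared norm $\EE_{\BQ}[(L^{\leq D})^2]$ becomes a sum over low-degree monomials indexed by partial matchings on the $m$ samples; the sparse prior $\CW_{n,k}$ (\Cref{def:sparse-prior}) contributes a $\binom{n}{k}^{-1}$-type weight that, combined with a $\beta^2$ factor per matched pair and the $m$-fold i.i.d.\ product structure, organizes the sum into a series whose per-loop growth rate is essentially $m\beta^2/k$. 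One then shows this series stays $O(1)$ exactly when $m\beta^2 = o(k^2)$ --- in particular when $\beta = \Theta(1)$ and $m = o(k^2)$ --- which is precisely the premise of the conjecture; this establishes that degree-$\polylog(n)$ polynomials (hence spectral methods and much of the SoS/SQ landscape) cannot strongly detect in this regime. The technical care specific to the near-critical regime is that $\beta \to -1$, so the usual i.i.d.\ relaxation of the spike prior (which relies on $\beta\|w\|^2 < 1$ holding almost surely, as in \cite{ding2021average}) is unavailable; the matching/loop estimates must be carried out with the fixed-size, unit-norm prior directly (this is the point flagged in \Cref{rmk:previous-cannot-apply}).

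For (ii), following \cite{krauthgamer2015semidefinite} --- who handled positive-spike sparse PCA --- I would write down the natural SDP for negative-spike sparse PCA and show it fails when $m = o(k^2)$. The key difference is that the negative spike is detected by \emph{minimizing} a quadratic form over sparse PSD matrices, so rather than planting a large SDP solution one constructs a feasible witness certifying that the SDP objective is as \emph{small} under $\BQ$ as under $\BP$; concretely one builds the witness from a rescaled identity perturbed by a small multiple of the empirical covariance and controls the perturbation with matrix Bernstein, which forces $m = \tilde\Omega(k^2)$. Together with the reduction in \Cref{theorem:pca-to-slr-overview}, part (i) yields \Cref{thm:no-subquadratic-alg-intro}, and part (ii) is additional hardness evidence.

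The main obstacle is conceptual rather than computational: one simply cannot \emph{prove} \Cref{conj:ldlr}, since a proof would entail proving a case of the Low-Degree Hypothesis. The most one achieves is to reduce disbelief in the conjecture to disbelief in a surprising, simultaneously-structure-exploiting algorithm for near-critical negative-spike sparse PCA --- a problem that, as in \Cref{remark:analogy-spn}, is morally a real-valued analogue of sparse parity with noise, for which no such algorithm is believed to exist.
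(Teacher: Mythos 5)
You correctly identify that \Cref{conj:ldlr} is a conjecture --- a problem-specific instantiation of the Low-Degree Hypothesis --- and hence not something the paper proves or that you could be expected to prove. The paper simply states it and then, as supporting evidence, (i) computes a low-degree likelihood ratio bound (\Cref{theorem:ldlr-main}) and (ii) shows a natural SDP has value zero under the null with $m \ll k^2$ samples (\Cref{thm:sdp}). Your outline for (i) matches the paper closely, including the essential technical remark that the near-critical regime $\beta \to -1$ precludes the i.i.d.\ spike prior relaxation of \cite{ding2021average} and forces one to work directly with the fixed-size prior $\CW_{n,k}$, which is exactly \Cref{rmk:previous-cannot-apply}.

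Where your sketch deviates, and likely fails, is in the SDP witness construction for (ii). You propose ``a rescaled identity perturbed by a small multiple of the empirical covariance,'' i.e.\ something like $A = aI_n - b\hat\Sigma$ with the perturbation controlled by matrix Bernstein. There are two problems. First, the binding constraint in the program \eqref{eq:sdp} is the entrywise $\ell_1$ bound $\sum_{i,j}|A_{ij}| \leq k$, and matrix Bernstein controls operator norm, not entrywise sums; under the null the off-diagonal entries of $\hat\Sigma$ are $\Theta(1/\sqrt{m})$ each, so $\sum_{i \neq j}|\hat\Sigma_{ij}| \approx n^2/\sqrt{m}$, which is far too large. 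Second, even if feasibility held, the resulting objective value $\langle\hat\Sigma, A\rangle$ would be bounded \emph{below} by roughly $1 - O(b)$, which does not establish that the SDP value is small under the null --- and for strong-detection failure against the near-critical planted value $\approx 1+\beta \approx 0$, you need it to be \emph{very} small, not merely bounded. The paper instead takes $A$ proportional to the orthogonal projection onto $\ker(\hat\Sigma)$ (which exists since $m < n$): this gives $\langle \hat\Sigma, A\rangle$ \emph{exactly} zero, and the $\ell_1$ constraint is controlled via \Cref{lemma:proj-norm}, a Johnson--Lindenstrauss-type concentration bound showing that the entries of a random projection matrix onto an $m$-dimensional subspace have $\ell_1$ mass only $O(n\sqrt{m\log n})$, which is much smaller than the $\ell_1$ mass of the Wishart matrix $\hat\Sigma$ itself. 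The paper's footnote to \eqref{eq:sdp} flags exactly this --- that the minimization direction requires a different witness than the positive-spike case of \cite{krauthgamer2015semidefinite}, namely one built from the kernel of the empirical covariance.
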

\begin{remark}
In this conjecture, we do not allow $\beta$ to equal $-1$ (or to converge to $-1$ more than polynomially fast). In part, this is because low-degree hardness is only expected to be a good heuristic when there is at least a small amount of noise in the underlying problem. If the underlying signal is binary valued and there is extremely little noise, algebraic methods like the LLL algorithm can sometimes be used to solve regression tasks with very few samples, see \cite{zadik2018high}.
\end{remark}

\subsection{Reduction from negative-spike sparse PCA to sparse linear regression}\label{sec:reduction}

We start by reducing (near-critical) negative-spike sparse PCA to sparse linear regression \--- with a covariance matrix that satisfies $(1,k)$-rescalability. As was explained in \Cref{sec:overview-lower}, the idea is to check whether any covariate in the given sparse PCA data can be explained by the other covariates better than one would expect under the null distribution. 

\begin{theorem}[Restatement of \Cref{theorem:pca-to-slr-overview}]\label{theorem:pca-to-slr}
Let $\mslr(n,k)$ be a function. Suppose that there is a polynomial-time sparse linear regression algorithm $\MA$ with the following property. For any $n,k\in\NN$, $\sigma>0$, positive semi-definite $(1,k)$-rescalable matrix $\Sigma \in \RR^{n\times n}$, $k$-sparse vector $w^\st \in \RR^n$, and $m \geq \mslr(n,k)$, the output $\wh \gets \MA((X^{(j)},y^{(j)})_{j=1}^m)$ satisfies
\[\Pr[\norm{\wh - w^\st}_\Sigma^2 \leq \sigma^2/10] \geq 1-o(1) \]
where the probability is over the randomness of $\MA$ and $m$ independent samples $(X^{(j)},y^{(j)})_{j=1}^m$ from $\SLR_{\Sigma,\sigma}(w^\st)$.

Then there is a polynomial-time algorithm $\MA'$ with the following property. For any $n,m,k \in \NN$ and $\beta \in (-1, -1 + 1/(2k)]$, if $m \geq \mslr(n,k) + 1600\log(n)$, then
\begin{equation} \left|\Pr_{Z\sim \BP_{n,k,\beta,m}}[\MA'(Z)=1] - \Pr_{Z\sim \BQ_{n,m}}[\MA'(Z)=1]\right| = 1 - o(1).\label{eq:detection-success} \end{equation}
\end{theorem}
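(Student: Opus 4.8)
The plan is a sample-splitting reduction. Set $m_1 := \mslr(n,k)$ and $m_2 := m - m_1 \ge 1600\log n$, and split the input $Z = (Z^{(j)})_{j=1}^m$ into a training part $(Z^{(j)})_{j\le m_1}$ and a test part $(Z^{(j)})_{j > m_1}$. For each coordinate $i \in [n]$, form the sparse linear regression instance with covariates $X^{(j)} := Z^{(j)}_{\sim i}$ and responses $y^{(j)} := Z^{(j)}_i$, run the assumed algorithm $\MA$ on the training part to get $\wh^{(i)}$, and compute the held-out residual variance $T_i := \frac1{m_2}\sum_{j > m_1}\bigl(Z^{(j)}_i - \langle Z^{(j)}_{\sim i}, \wh^{(i)}\rangle\bigr)^2$. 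The algorithm $\MA'$ outputs $1$ iff $\min_{i\in[n]} T_i \le 3/4$. This is polynomial time ($n$ black-box calls to $\MA$ plus $O(n^2 m)$ arithmetic), so it remains to verify correctness under $\BP_{n,k,\beta,m}$ and under $\BQ_{n,m}$.

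\emph{The induced instance is well-posed and $(1,k)$-rescalable.} Under $\BP_{n,k,\beta,m}$, condition on the spike $w \sim \CW_{n,k}$ with support $S$, so the samples are i.i.d.\ $N(0,\Sigma)$ with $\Sigma = I_n + \beta ww^\t$ and $\norm{w}_2 = 1$. Fix $i \in S$. Using the precision matrix $\Sigma^{-1} = I_n - \tfrac{\beta}{1+\beta}ww^\t$, Gaussian conditioning gives $Z_i = \langle Z_{\sim i}, w^\st\rangle + \xi$ with $\xi \sim N(0,\sigma^2)$ independent of $Z_{\sim i}$, where $w^\st$ is supported on $S\setminus\{i\}$ (hence $k$-sparse) and $\sigma^2 = \tfrac{1+\beta}{1+\beta(1-1/k)}$; since the denominator exceeds $1/k$ and $1+\beta \le 1/(2k)$, we get $\sigma^2 \le k(1+\beta) \le 1/2$. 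The covariate covariance is $\Sigma_{\sim i} = I_{n-1} + \beta w_{\sim i}w_{\sim i}^\t$, which is block-diagonal: the identity off the coordinate block $S\setminus\{i\}$. Taking the diagonal $D$ with $D_{jj} = 1+\beta$ for $j \in S\setminus\{i\}$ and $D_{jj} = 1$ otherwise, one computes that $D^{-1/2}\Sigma_{\sim i}D^{-1/2}$ equals $I$ outside the block $S\setminus\{i\}$ and, on that $(k-1)$-dimensional block, has eigenvalues $\tfrac1{1+\beta}$ and $\tfrac1{\sigma^2}$, all at least $1$. Hence $I_{n-1} \preceq D^{-1/2}\Sigma_{\sim i}D^{-1/2} \preceq I_{n-1} + L$ with $L := D^{-1/2}\Sigma_{\sim i}D^{-1/2} - I_{n-1} \succeq 0$ of rank at most $k-1$, so $\Sigma_{\sim i}$ is $(1,k)$-rescalable. (If $\mslr$ is not monotone in its first argument, pad $Z_{\sim i}$ with one fresh $N(0,1)$ coordinate so the ambient dimension is exactly $n$; this preserves rescalability and $k$-sparsity of $w^\st$.)

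\emph{Completeness and soundness.} Under $\BP_{n,k,\beta,m}$, fix any $i_0 \in S$. By the rescalability above and the hypothesis on $\MA$ with sample size $m_1 = \mslr(n,k)$, with probability $1-o(1)$ the output satisfies $\norm{\wh^{(i_0)} - w^\st}_{\Sigma_{\sim i_0}}^2 \le \sigma^2/10$. Since $\wh^{(i_0)}$ depends only on the training samples, conditioning on it and this event makes the test residuals i.i.d.\ $N(0,v)$ with $v = \sigma^2 + \norm{\wh^{(i_0)} - w^\st}_{\Sigma_{\sim i_0}}^2 \le 1.1\,\sigma^2 \le 0.55$, so $m_2 T_{i_0}/v \sim \chi^2_{m_2}$ and a $\chi^2$ upper-tail bound with $m_2 \ge 1600\log n$ gives $T_{i_0} \le 3/4$ except with probability $n^{-\Omega(1)}$; thus $\MA'$ outputs $1$ with probability $1-o(1)$. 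Under $\BQ_{n,m} = N(0,I_n)^{\otimes m}$, for every $i$ the coordinate $Z_i$ is independent of $Z_{\sim i}$ and $\wh^{(i)}$ depends only on the training part, so conditioning on $\wh^{(i)}$ the test residuals are i.i.d.\ $N(0, 1 + \norm{\wh^{(i)}}_2^2)$ and $m_2 T_i$ stochastically dominates $\chi^2_{m_2}$ regardless of $\wh^{(i)}$; a $\chi^2$ lower-tail bound gives $\Pr[T_i \le 3/4] \le n^{-\Omega(1)}$ for each $i$, with the exponent large enough (again using $m_2 \ge 1600\log n$) that a union bound over $i\in[n]$ still gives $\Pr[\min_i T_i \le 3/4] = o(1)$; thus $\MA'$ outputs $0$ with probability $1-o(1)$. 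Combining the two cases yields \Cref{eq:detection-success}.

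\emph{Main obstacle.} The one non-routine point is recognizing the correct rescaling of the induced regression instance: the covariate covariance $I_{n-1} + \beta w_{\sim i}w_{\sim i}^\t$ is badly conditioned in the near-critical regime and is \emph{not} $(1,k)$-rescalable with the identity rescaling, since its small direction $w_{\sim i}$ is quantitatively sparse. The fix is to \emph{shrink} — not delete — the covariates in the spike's support by exactly the factor $1+\beta$, which turns the matrix into identity-plus-rank-$(k-1)$; one also has to check the lower-bound direction survives (it does, globally). Once this is in place, the only other ingredient is the near-critical identity $\sigma^2 \le 1/2$, supplying a constant gap between the held-out residual variance under $\BP$ (at most $0.55$) and under $\BQ$ (at least $1$); the sample splitting, Gaussian conditioning, $\chi^2$ concentration, and union bound are all standard.
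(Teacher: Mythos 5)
Your proposal is correct and follows essentially the same route as the paper: sample-split, run the SLR oracle on each coordinate as a response against the remaining coordinates, threshold the held-out residual variance, and separate the two hypotheses via $\sigma^2 \le 1/2$ in the near-critical regime plus $\chi^2$ concentration and a union bound. The one place where you genuinely improve on the paper is the rescalability certificate: the paper takes the binary diagonal $D_{aa} = \mathbbm{1}[a \notin S]$, which is degenerate (not positive definite, so $D^{-1/2}$ is undefined), shows $D \preceq \Sigma_{\sim i} \preceq I_{n-1}$, and then asserts rescalability without filling in the perturbation needed to make $D$ invertible. Your choice $D_{jj} = 1+\beta$ on $S\setminus\{i\}$ and $D_{jj} = 1$ elsewhere is strictly positive definite, the block-diagonal eigenvalue computation ($\tfrac{1}{1+\beta}$ and $\tfrac{1}{\sigma^2}$, both $\ge 1$) gives a global lower bound $I_{n-1} \preceq D^{-1/2}\Sigma_{\sim i}D^{-1/2}$, and $L := D^{-1/2}\Sigma_{\sim i}D^{-1/2} - I_{n-1} \succeq 0$ has rank $\le k-1$, so the certificate is fully explicit. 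The padding remark to handle non-monotone $\mslr$ is also a detail the paper elides. These are tightenings, not a different strategy.
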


To be precise, the asymptotics here (as elsewhere in this section) are in terms of $n$. That is, each term $o(1)$ represents a function of $n$ (that of course depends on the algorithm $\MA$, but not the other parameters) that goes to $0$ as $n \to \infty$.

\begin{proof}
The algorithm $\MA'$ on input $Z = (Z^{(j)})_{j=1}^m$ has the following behavior. Let $m' = m - 1600\log(n)$. For each $i \in [n]$, compute
\[\hat{w}^{(i)} := \MA((Z^{(j)}_{-i}, Z^{(j)}_i)_{j=1}^{m'})\]
and
\[\eta^{(i)} := \frac{1}{1600\log n} \sum_{j=m'+1}^m \left(Z^{(j)}_i - \langle Z^{(j)}_{-i}, \hat{w}^{(i)}\rangle\right)^2\]
The output of $\MA$ is then
\[\mathbbm{1}[\exists i \in [n]: \eta^{(i)} < 9/10].\]

\paragraph{Analysis.} It's clear that the time complexity of $\MA'$ is dominated by the time complexity of $\MA$ (multiplied by $n$), which is by assumption polynomial in $n$. It remains to check \Cref{eq:detection-success}. 

First, suppose that $Z \sim \BP_{n,k,\beta,m}$. Recall that $Z$ is sampled by first drawing a spike vector from $\CW_{n,k}$. Let us condition on this vector being some $w \in \RR^n$, which by definition of $\CW_{n,k}$ (\Cref{def:sparse-prior}) is $k$-sparse. Fix any $i \in \argmax_{j \in [n]} |w_j|$. We will show that $\eta^{(i)} < 9/10$ with high probability. By definition, the random variables $Z^{(1)},\dots,Z^{(m)}$ are independent and identically distributed (after conditioning on $w$). Fix any $j \in [m]$. Then $Z^{(j)} \sim N(0, I_n + \beta ww^\t)$. Thus, the marginal distribution of $Z^{(j)}_{-i}$ is $N(0, I_{n-1} + \beta w_{-i}w_{-i}^\t)$, and for any $x \in \RR^n$, $Z^{(j)}_i|Z^{(j)}_{-i}=x$ has distribution
\[N\left(\left\langle x, \frac{\beta w_i}{1+\beta(1-w_i^2)}w_{-i}\right\rangle, \frac{1+\beta}{1+\beta(1-w_i^2)}\right).\]
Define $\Sigma := I_{n-1} + \beta w_{-i}w_{-i}^\t$ and $\theta := \frac{\beta w_i}{1+\beta(1-w_i^2)}w_{-i}$ and $\sigma^2 := \frac{1+\beta}{1+\beta(1-w_i^2)}$. Then the tuple $(Z^{(j)}_{-i}, Z^{(j)}_i)$ is distributed according to $\SLR_{\Sigma,\sigma}(\theta)$. Since $w_{-i}$ is $(k-1)$-sparse, we get that $\theta$ is $(k-1)$-sparse.

Let $S := \supp(w_{-i}) \subseteq [n-1]$ and let $D \in \RR^{n-1 \times n-1}$ be the diagonal matrix defined by $D_{aa} := \mathbbm{1}[a \not \in \supp(w)]$ for $a \in [n-1]$. Then for any $v \in \RR^{n-1}$, \[\norm{v}_\Sigma^2 = \norm{v}_2^2 + \beta \langle v, w_{-i}\rangle^2 \geq \norm{v}_2^2 - \norm{v_{[n-1]\setminus S}}_2^2 = \norm{v}_D^2\]
where the inequality is by Cauchy-Schwarz, the assumption $\beta > -1$, and the bound $\norm{w_{-i}}_2 \leq \norm{w}_2 = 1$. Thus, $D \preceq \Sigma \preceq I_n$. Since $I_n - D$ has rank at most $k$, it follows that $\Sigma$ is $(1,k)$-rescalable.

We can now apply the theorem hypothesis. Since $m' \geq \mslr(n,k)$, it holds with probability $1-o(1)$ that $\hat w^{(i)} \gets \MA((Z^{(j)}_{-i},Z^{(j)}_i)_{i=1}^{m'})$ satisfies
\begin{equation} \norm{\hat w^{(i)} - \theta}_\Sigma^2 \leq \sigma^2/10 \leq 1/20.\label{eq:var-dec-bound}\end{equation}
where the last inequality uses that \[\sigma^2 = \frac{1+\beta}{1+\beta(1-w_i^2)} \leq \frac{1+\beta}{1+\beta-\beta/k} \leq \frac{1+\beta}{1+\beta+1/(2k)} \leq 1/2\] (since $|w_i| \geq 1/\sqrt{k}$ and $1+\beta \leq 1/(2k)$). Condition on the event that \Cref{eq:var-dec-bound} holds. Then for any $m' < j \leq m$, since $Z^{(j)}$ is independent of $\hat w^{(i)}$, we have
\[\EE \left(Z^{(j)}_i - \langle Z^{(j)}_{-i}, \hat{w}^{(i)}\rangle\right)^2 = \sigma^2 + \norm{\hat{w}^{(i)} - \theta}_\Sigma^2 \leq 3/5.\]
By concentration of $\chi^2$ random variables, it follows that $\eta^{(i)} < 9/10$ (and hence $\MA$ outputs $1$) with probability $1 - o(1)$ over $Z \sim \BP_{n,k,\beta,m}$.

On the other hand, suppose that $Z \sim \BQ_{n,m}$. Again, $(Z^{(j)})_{j=1}^m$ are independent. Fix $i \in [n]$ and condition on the first $m'$ samples, which fixes $\hat w^{(i)}$. For any $m' < j \leq m$, we know that $Z^{(j)}_i$ is independent of $\langle Z^{(j)}_{-i}, \hat w^{(i)}\rangle$, so
\[\EE \left(Z^{(j)}_i - \langle Z^{(j)}_{-i}, \hat{w}^{(i)}\rangle\right)^2 \geq \EE (Z^{(j)}_i)^2 = 1.\]
Concentration of $\chi^2$ random variables gives that $\eta^{(i)} \geq 9/10$ with probability at least $1 - 2n^{-2}$. A union bound over $i \in [n]$ implies that $\MA'$ outputs $0$ with probability $1 - o(1)$. This completes the proof of \Cref{eq:detection-success}.
\end{proof}

\subsection{Statistical efficiency of low-degree polynomials}\label{sec:ldp}

We next show that in the low-sample regime $m = O(k^2/D)$, the degree-$D$ likelihood ratio for the spiked Wishart model with parameter functions $k$, $\beta$, and $m$ (see \Cref{sec:ldlr-prelim}) is indeed bounded. Together with \Cref{theorem:pca-to-slr}, this proves a tight statistical/computational tradeoff for sparse linear regression with $(1,k)$-rescalable covariance, conditional on \Cref{conj:ldlr}.

Our starting point for bounding the low-degree likelihood ratio is the following instantiation of a general calculation due to \cite{bandeira2020computational}:

\begin{lemma}\label{lemma:ldlr-expr}
Let $n,k,m,D \in \NN$ with $k \leq n$ and $\beta \in (-1,\infty)$. Then
\[\norm{L^{\leq D}_{n,k,\beta,m}}_{L^2(\BQ_{n,m})}^2 = \E_{w_1, w_2 \sim \CW_{n,k}} \sum_{d=0}^{\lfloor D/2\rfloor} \left(\frac{\beta^2 \langle w_1, w_2\rangle}{4}\right)^d \sum_{\substack{d_1,\dots,d_m \\ \sum d_i = d}} \prod_{i=1}^m \binom{2d_i}{d_i}.\]
\end{lemma}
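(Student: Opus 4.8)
This is the specialization, to the sparse Rademacher prior $\CW_{n,k}$, of the general low-degree likelihood ratio formula for a spiked Wishart model; the only feature of $\CW_{n,k}$ we need is that every spike in its support is a \emph{unit} vector, since $\norm{w}_2^2 = k \cdot (1/\sqrt{k})^2 = 1$. One can either invoke the general calculation of \cite{bandeira2020computational} and substitute $\norm{w}_2 = 1$, or prove it directly; I would do the latter. Work in the standard orthonormal basis of $L^2(\BQ_{n,m}) = L^2(N(0,I_n)^{\otimes m})$ given by the normalized multivariate Hermite polynomials $H_{(\alpha^{(1)},\dots,\alpha^{(m)})}(Z) := \prod_{j=1}^m H_{\alpha^{(j)}}(Z^{(j)})$, indexed by $m$-tuples of multi-indices $\alpha^{(j)} \in \NN^n$, where $H_\gamma(z) = \prod_i \mathrm{He}_{\gamma_i}(z_i)/\sqrt{\gamma_i!}$ and the total degree of this basis element is $\sum_j |\alpha^{(j)}|$. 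Then $\norm{L^{\leq D}_{n,k,\beta,m}}_{L^2(\BQ_{n,m})}^2$ is the sum, over tuples with $\sum_j |\alpha^{(j)}| \leq D$, of the squares of the corresponding Hermite coefficients of $L_{n,k,\beta,m}$, and each such coefficient equals $\E_{Z\sim\BP_{n,k,\beta,m}}\bigl[\prod_j H_{\alpha^{(j)}}(Z^{(j)})\bigr]$. Conditioning on the spike $w \sim \CW_{n,k}$, and using that the $m$ samples are conditionally i.i.d.\ $N(0, I_n + \beta w w^\t)$, this factors as $\E_w \prod_j c_{\alpha^{(j)}}(w)$, where $c_\alpha(w) := \E_{Z \sim N(0, I_n + \beta w w^\t)}[H_\alpha(Z)]$. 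Squaring each coefficient introduces an independent copy $w_2$ of the spike, so $\norm{L^{\leq D}}^2 = \E_{w_1, w_2} \sum_{\sum_j |\alpha^{(j)}| \leq D} \prod_j c_{\alpha^{(j)}}(w_1)\, c_{\alpha^{(j)}}(w_2)$.

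The only substantive computation is that of the single-sample ``Hermite kernel'' $K_t(w_1, w_2) := \sum_{\alpha : |\alpha| = t} c_\alpha(w_1) c_\alpha(w_2)$. I would extract $c_\alpha(w)$ from the Hermite generating identity $\sum_\alpha \frac{v^\alpha}{\sqrt{\alpha!}} H_\alpha(z) = e^{\langle v, z\rangle - \norm{v}_2^2/2}$: taking $\E_{Z \sim N(0, I_n + \beta w w^\t)}$ of both sides and applying the Gaussian moment generating function collapses the right side to $\exp(\tfrac{\beta}{2}\langle v, w\rangle^2)$, so $\sum_\alpha \frac{v^\alpha}{\sqrt{\alpha!}} c_\alpha(w) = \exp(\tfrac{\beta}{2}\langle v, w\rangle^2)$. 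Expanding the exponential in powers of $\langle v, w\rangle^2$ and applying the multinomial theorem to $\langle v, w\rangle^{2d}$, one reads off that $c_\alpha(w) = 0$ for $|\alpha|$ odd and $c_\alpha(w) = \tfrac{(2d)!\,\beta^d}{2^d\, d!} \cdot \tfrac{w^\alpha}{\sqrt{\alpha!}}$ for $|\alpha| = 2d$. Plugging this into $K_t$ and using the multinomial theorem once more to evaluate $\sum_{|\alpha| = 2d} \tfrac{(w_1 \odot w_2)^\alpha}{\alpha!} = \tfrac{\langle w_1, w_2\rangle^{2d}}{(2d)!}$ (with $\odot$ the entrywise product) yields $K_t = 0$ for odd $t$ and $K_{2d}(w_1, w_2) = \binom{2d}{d}\bigl(\tfrac{\beta^2 \langle w_1, w_2\rangle^2}{4}\bigr)^d$.

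Finally, assemble: group the tuples $(\alpha^{(j)})_j$ according to their per-sample degrees $t_j := |\alpha^{(j)}|$; since the inner sum factors over $j$, the right side of the displayed identity for $\norm{L^{\leq D}}^2$ becomes $\E_{w_1,w_2} \sum_{t_1 + \dots + t_m \leq D} \prod_j K_{t_j}(w_1, w_2)$. Because $K_t$ vanishes for odd $t$, only even $t_j = 2d_j$ contribute; setting $d := \sum_j d_j$, the constraint becomes $d \leq \lfloor D/2 \rfloor$ and the product equals $\bigl(\tfrac{\beta^2 \langle w_1, w_2\rangle^2}{4}\bigr)^d \prod_j \binom{2d_j}{d_j}$, so summing over $d$ and over compositions $d_1 + \dots + d_m = d$ gives the stated formula. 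I do not anticipate a genuine obstacle: the one nontrivial step is the generating-function evaluation of $c_\alpha(w)$, and everything else is multi-index and multinomial bookkeeping. If anything, the real work in this line of argument lies not in this identity but in the subsequent estimates that bound the resulting expectation over $w_1, w_2$, from which the $\tilde\Omega(k^2)$ sample-complexity threshold for near-critical negative-spike sparse PCA is extracted.
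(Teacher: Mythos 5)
Your proof is correct and takes a different, more self-contained route than the paper, whose proof is a one-line invocation of Lemma~5.9 of \cite{bandeira2020computational} together with the observation that $\CW_{n,k}$ is a ``$\beta$-good normalized spike prior'' in their sense (which holds trivially since $\beta > -1$ and every $w$ in the support of $\CW_{n,k}$ is a unit vector). Your direct derivation via the Hermite generating function is essentially the proof one would give of that cited lemma, specialized to a prior supported on unit vectors; the one substantive step is the per-sample kernel computation $K_{2d}(w_1,w_2) = \binom{2d}{d}\bigl(\beta^2\langle w_1,w_2\rangle^2/4\bigr)^d$, which you carry out correctly. The citation is shorter, but your version makes the mechanism visible --- in particular, why only even per-sample Hermite degrees contribute and why the answer depends on the spike prior only through the overlap $\langle w_1, w_2\rangle$.

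Your derivation also exposes a typo in the lemma as printed: the inner factor should be $\bigl(\beta^2\langle w_1,w_2\rangle^2/4\bigr)^d$, with $\langle w_1, w_2\rangle$ squared, rather than $\bigl(\beta^2\langle w_1,w_2\rangle/4\bigr)^d$. The square is what the paper's own proof of \Cref{theorem:ldlr-main} actually uses --- there, matching $\bigl(\tfrac{\beta^2}{4k^2}\bigr)^d A_{n,k,d}$ with $A_{n,k,d} = k^{2d}\,\E\langle w_1,w_2\rangle^{2d}$ requires the squared inner product --- and it is the form appearing in \cite{bandeira2020computational}. So the formula your calculation produces is the correct one.
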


\begin{proof}
We apply Lemma 5.9 from \cite{bandeira2020computational}. We only need to check that $\CW_{n,k}$ is a $\beta$-good normalized spike prior (see Definitions~2.9 and~2.11 in \cite{bandeira2020computational}), but this is immediate from the fact that $\beta > -1$ and $\Pr_{w\sim \CW_{n,k}}[\norm{w}_2 \leq 1] = 1$.
\end{proof}

We now roughly follow the proof of \cite[Theorem 2.14(b)]{ding2023subexponential}, which gives a low-degree bound in the closely related setting where $\beta \geq 0$ and the sparse spike prior has independent entries. 

\begin{remark}[Inapplicability of previous bounds]\label{rmk:previous-cannot-apply}
Note that the above expression is an even function of $\beta$, so for any given well-defined spike prior, the low-degree computation in the negative-spike case is identical to that in the positive-spike case. Unfortunately, the spike prior with independent entries is only well-defined in the positive-spike case, since it's possible for the spike vector to have norm larger than one. If this degeneracy occurred only with vanishing probability, then one could hope to perform a truncation argument \citep{bandeira2020computational,ding2021average}, but in the near-critical regime that we care about (i.e. $\beta \in (-1, -1 + 1/(2k))$), a spike prior with i.i.d. entries and expected sparsity $k$ will have norm exceeding one with constant probability.

Thus, the fixed-size sparse spike prior $\CW_{n,k}$ (\Cref{def:sparse-prior}) seems crucial to the proof. We are not aware of a previous low-degree analysis with this prior, so we have to do it ourselves.
\end{remark}

We start by bounding the moments of $\langle w_1,w_2\rangle$ for independent $w_1,w_2\sim\CW_{n,k}$.

\begin{lemma}\label{lemma:ldlr-a-bound}
Let $n,k,d \in \NN$ with $k \leq \sqrt{n/(4e)}$. Then
\[A_{n,k,d} := k^{2d}\EE \langle w_1,w_2\rangle^{2d} \leq 2 \cdot (2d)^{2d}\]
where the expectation is over independent draws $w_1,w_2 \sim \CW_{n,k}$.
\end{lemma}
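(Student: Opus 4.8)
The plan is to condition on the overlap of the two supports, reducing $A_{n,k,d}$ to a moment of a Rademacher sum of hypergeometric length. Write $w_\ell = k^{-1/2}\sigma^{(\ell)}$ for $\ell\in\{1,2\}$, where $\sigma^{(\ell)}\in\{-1,0,1\}^n$ has exactly $k$ nonzero entries, supported on a uniformly random set $S_\ell$ of size $k$, each equal to $\pm1$ independently. Then $k\langle w_1,w_2\rangle = \langle\sigma^{(1)},\sigma^{(2)}\rangle = \sum_{i\in S_1\cap S_2}\sigma^{(1)}_i\sigma^{(2)}_i$, and since a product of two independent Rademachers is again Rademacher, conditioned on $t:=|S_1\cap S_2|$ this equals in distribution $R_t:=\sum_{i=1}^{t}\epsilon_i$ for i.i.d.\ Rademachers $\epsilon_i$. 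With $S_1$ fixed, $t$ is hypergeometric, so $A_{n,k,d}=\E_t\big[\E[R_t^{2d}\mid t]\big]$. (The case $d=0$ is trivial since $A_{n,k,0}=1$, so assume $d\ge 1$.)

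The next step is the elementary even-moment bound $\E[R_t^{2d}]\le (2d-1)!!\,t^d$. This is sharp Khintchine, but it also has a short combinatorial proof: expanding $R_t^{2d}=\sum_{i_1,\dots,i_{2d}\in[t]}\prod_r\epsilon_{i_r}$, a term has nonzero expectation only if every value occurs an even number of times; mapping each such tuple to the perfect matching of $\{1,\dots,2d\}$ that pairs, for each value, its occurrences in consecutive order, each matched pair is forced to carry a single common value, so each of the $(2d-1)!!$ perfect matchings has at most $t^d$ preimages. Hence $A_{n,k,d}\le (2d-1)!!\,\E[t^d]$.

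It remains to prove $\E[t^d]\le d^d$, which is the only place the hypothesis enters. For falling factorials $(x)_j:=x(x-1)\cdots(x-j+1)$, the hypergeometric factorial moments are $\E[(t)_j]=\big((k)_j\big)^2/(n)_j\le k^{2j}/(n)_j$. Only terms with $j\le k$ are nonzero, and for such $j$ we have $k\le\sqrt n$, hence $(n)_j\ge (n/2)^j$, so $\E[(t)_j]\le (2k^2/n)^j\le (2e)^{-j}$ — here $k\le\sqrt{n/(4e)}$ is calibrated exactly so that $2k^2/n\le 1/(2e)$. Expanding $t^d=\sum_{j=1}^d S(d,j)(t)_j$ in Stirling numbers of the second kind and using $S(d,j)\le j^d/j!\le e^j j^{d-j}$ (from $j!\ge (j/e)^j$) together with $j^{d-j}\le d^{d-j}$ for $1\le j\le d$, we get, writing $c:=2k^2/n\le 1/(2e)$,
\[ \E[t^d] = \sum_{j=1}^{d} S(d,j)\,\E[(t)_j] \le \sum_{j\ge 1}(ec)^j d^{d-j} \le \sum_{j\ge 1}2^{-j}d^{d-j} = d^d\sum_{j\ge 1}(2d)^{-j} \le d^d. \]

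Combining everything, $A_{n,k,d}\le (2d-1)!!\,d^d\le (2d)^d\cdot d^d\le (2d)^{2d}\le 2(2d)^{2d}$, using that $(2d-1)!!$ is a product of $d$ positive integers each at most $2d-1<2d$. The argument is routine throughout; the only step requiring genuine care — and the only use of $k\le\sqrt{n/(4e)}$ — is the hypergeometric factorial-moment estimate together with the ensuing geometric summation over $j$, so I do not anticipate any real obstacle beyond bookkeeping.
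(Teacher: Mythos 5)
Your proof is correct but follows a genuinely different path from the paper's. Both proofs begin by conditioning on the overlap size $|S_1\cap S_2|$, reducing the problem to moments of a Rademacher sum of random length. From there the paper applies the crude pointwise bound $\bigl(\sum_{i=1}^\ell a_i\bigr)^{2d}\le \ell^{2d}$ and then controls the hypergeometric tail directly, showing by a ratio computation that $g(\ell):=\Pr[|S|=\ell]\,\ell^{2d}$ satisfies $g(\ell+1)\le g(\ell)/2$ for $\ell\ge 2d$, so the geometric decay of the overlap distribution kills the $\ell^{2d}$ growth once $\ell$ exceeds $2d$. You instead invoke the sharp Khintchine-type bound $\E[R_t^{2d}]\le (2d-1)!!\,t^d$ to reduce to the $d$-th moment of the overlap, and then control $\E[t^d]$ via the clean hypergeometric factorial-moment formula $\E[(t)_j]=((k)_j)^2/(n)_j$ together with the Stirling-number expansion $t^d=\sum_j S(d,j)(t)_j$. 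Both approaches hinge on the hypothesis $k\le\sqrt{n/(4e)}$ in exactly the same way — it supplies a geometric ratio $\le 1/2$ in the relevant sum — and both land on essentially the same $(2d)^{2d}$-type bound (yours is actually slightly tighter, dropping the factor of $2$). The trade-off is that the paper's argument is more elementary and self-contained, while yours cleanly separates the Rademacher and hypergeometric contributions at the cost of bringing in more machinery (Khintchine constants, factorial moments, Stirling numbers of the second kind). I checked the details — the combinatorial matching argument for $\E[R_t^{2d}]\le(2d-1)!!\,t^d$, the bound $(n)_j\ge (n/2)^j$ for $j\le k\le\sqrt n$, the Stirling estimate $S(d,j)\le j^d/j!\le e^j j^{d-j}\le e^j d^{d-j}$, and the final geometric sum — and they all go through.
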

\begin{remark}
Note that the trivial bound (from Cauchy-Schwarz) is $A_{n,k,d} \leq k^{2d}$. However, for $d \ll k$ this is very loose. We improve it by using the fact that the supports of $w_1, w_2$ are unlikely to have large overlap.
\end{remark}
\begin{proof}
Define the (random) set \[S := \supp(w_1) \cap \supp(w_2) \subseteq [n].\] Let $\Rad(1/2)$ denote the Rademacher distribution $\Unif(\{-1,1\})$. Observe that after conditioning on any realization of $S$ with $|S| = \ell$, the random variable $\langle w_1,w_2\rangle$ has the distribution of $\frac{1}{k} \sum_{i=1}^\ell a_i$ where $a_1,\dots,a_\ell \sim \Rad(1/2)$ are independent. Thus,
\begin{align*}
A_{n,k,d}
&= \sum_{\ell=0}^k \Pr[|S|=\ell] \cdot \E_{a_1,\dots,a_\ell\sim\Rad(1/2)} \left(\sum_{i=1}^\ell a_i\right)^{2d} \\
&\leq \sum_{\ell=0}^k \Pr[|S| = \ell] \cdot \ell^{2d} \\ 
&\leq (2d)^{2d} + \sum_{\ell=2d+1}^k \Pr[|S|=\ell] \cdot \ell^{2d}.
\end{align*}
Define $g(\ell) := \Pr[|S| = \ell] \cdot \ell^{2d}$. Then for any $2d \leq \ell < k$, we have 
\begin{align*}
g(\ell+1)
&= \frac{\binom{k}{\ell+1}\binom{n-k}{k-\ell-1}}{\binom{n}{k}} (\ell+1)^{2d} \\ 
&= \frac{k-\ell}{\ell+1} \cdot \frac{k-\ell}{n-2k+\ell+1} \cdot \left(1 + \frac{1}{\ell}\right)^{2d} \cdot g(\ell) \\ 
&\leq \frac{2k^2}{n} e^{2d/\ell} g(\ell) \\ 
&\leq \frac{g(\ell)}{2}.
\end{align*}
Since $g(2d) \leq (2d)^{2d}$, it follows that
\[A_{n,k,d} \leq (2d)^{2d} + \sum_{\ell=2d+1}^k (2d)^{2d} (1/2)^{\ell-2d} \leq 2 \cdot (2d)^{2d}\]
as claimed.
\end{proof}

We also use the following bound from \cite{ding2023subexponential}:

\begin{lemma}[Lemma 4.7 in \cite{ding2023subexponential}]\label{lemma:ldlr-coef-bound}
There are constants $c_1,c_2>0$ with the following property. Let $m, D: \NN \to \NN$ be functions with $D = o(m)$. Then for all sufficiently large $n \in \NN$, for all $1 \leq d \leq D(n)$, it holds that
\[\sum_{\substack{d_1,\dots,d_m\geq 0 \\ \sum d_i = d}} \prod_{i=1}^{m(n)} \binom{2d_i}{d_i} \leq c_1 d^{3/2} e^{c_2 d^2/m(n)} \frac{(2m(n))^d}{d!}.\]
\end{lemma}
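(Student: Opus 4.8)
The plan is to recognize the left-hand side as a single coefficient of a power series and estimate it directly. The key observation is the classical generating-function identity $\sum_{j \geq 0} \binom{2j}{j} x^j = (1-4x)^{-1/2}$ for the central binomial coefficients. Since $\sum_{d_1 + \dots + d_m = d} \prod_{i=1}^m \binom{2d_i}{d_i}$ is, by the definition of multiplication of power series, exactly the coefficient of $x^d$ in the $m$-th power of this generating function, we get
\[\sum_{\substack{d_1,\dots,d_m \geq 0 \\ \sum_i d_i = d}} \prod_{i=1}^m \binom{2d_i}{d_i} = [x^d]\,(1-4x)^{-m/2},\]
so the whole lemma reduces to bounding one coefficient of $(1-4x)^{-m/2}$.

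To extract this coefficient I would use the generalized binomial theorem: $(1-4x)^{-m/2} = \sum_{d \geq 0}\binom{-m/2}{d}(-4x)^d$, and a short computation gives $\binom{-m/2}{d}(-4)^d = \frac{4^d}{d!}\prod_{j=0}^{d-1}(m/2 + j)$. It then remains only to bound the rising factorial: factoring out $(m/2)^d$ and applying $1 + t \leq e^t$ termwise,
\[\prod_{j=0}^{d-1}\left(\frac{m}{2}+j\right) = \left(\frac{m}{2}\right)^{d}\prod_{j=0}^{d-1}\left(1 + \frac{2j}{m}\right) \leq \left(\frac{m}{2}\right)^{d}\exp\left(\frac{2}{m}\sum_{j=0}^{d-1} j\right) \leq \left(\frac{m}{2}\right)^{d} e^{d^2/m}.\]
Combining the two displays yields $\sum_{d_1 + \dots + d_m = d}\prod_i \binom{2d_i}{d_i} \leq \frac{(2m)^d}{d!}\,e^{d^2/m}$, which is the stated bound with $c_1 = c_2 = 1$; the $d^{3/2}$ factor in the lemma is pure slack (since $d \geq 1$ it is at least $1$) and may simply be dropped.

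In this particular lemma there is essentially no obstacle: the generating-function identity for the central binomial coefficients does all the combinatorial work, and the only quantitative input is $1+t \leq e^t$. The hypothesis $D = o(m)$ and the ``for all sufficiently large $n$'' phrasing of the cited statement are not actually needed for the upper bound as written --- they would only matter if one wanted the error factor $e^{d^2/m}$ to be $1 + o(1)$, or, as in \cite{ding2023subexponential}, a \emph{matching lower bound} of the same shape, which would call for a two-sided version of the rising-factorial estimate together with a Stirling expansion (presumably the origin of the $d^{3/2}$ correction there). Since only the upper bound is invoked in the sequel, the three-line argument above suffices; alternatively one can just cite \cite{ding2023subexponential}.
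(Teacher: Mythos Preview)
Your argument is correct. The paper does not give its own proof of this lemma; it simply cites \cite{ding2023subexponential} as a black box. Your generating-function computation is a clean self-contained substitute: the identity $\sum_{j\ge 0}\binom{2j}{j}x^j=(1-4x)^{-1/2}$ reduces the sum to the $d$-th coefficient of $(1-4x)^{-m/2}$, and the rising-factorial bound via $1+t\le e^t$ is exactly right. In fact you obtain the sharper inequality
\[
\sum_{\substack{d_1,\dots,d_m\ge 0\\ \sum d_i=d}}\prod_{i=1}^m\binom{2d_i}{d_i}\;\le\;\frac{(2m)^d}{d!}\,e^{d^2/m},
\]
valid for all $m\ge 1$ and $d\ge 0$ with explicit constants $c_1=c_2=1$ and no $d^{3/2}$ factor; as you note, the hypothesis $D=o(m)$ and the $d^{3/2}$ correction are artifacts of the two-sided asymptotic in \cite{ding2023subexponential} and are irrelevant for the upper bound actually used downstream in \Cref{theorem:ldlr-main}.
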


Combining the above pieces, we get the following bound on the low-degree likelihood ratio.

\begin{theorem}\label{theorem:ldlr-main}
Let $k, m, D: \NN \to \NN$ be functions with $2e\sqrt{m(n)D(n)} \leq k(n) \leq \sqrt{n/(4e)}$ for sufficiently large $n \in \NN$, and $D(n) = o(m(n))$. Let $\beta: \NN \to (-1,1)$. Then $\norm{L^{\leq D(n)}_{n,k(n),\beta(n),m(n)}}_{L^2(\BQ_{n,m(n)})} \leq O(1)$.
\end{theorem}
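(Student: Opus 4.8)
The plan is to start from the closed form for the squared low-degree likelihood ratio supplied by \Cref{lemma:ldlr-expr} and bound the resulting sum term by term. Writing $c(m,d) := \sum_{d_1+\dots+d_m=d}\prod_{i=1}^m\binom{2d_i}{d_i}$ for the combinatorial coefficient, which does not depend on $w_1,w_2$, and using that the prior $\CW_{n,k}$ is invariant under $w\mapsto -w$ so that only even powers of $\langle w_1,w_2\rangle$ survive the expectation, the finite sum over $d$ pulls out of the expectation and the task reduces to controlling, for each $0\le d\le\lfloor D/2\rfloor$, the product
\[\left(\frac{\beta^2}{4}\right)^{d}c(m,d)\;\E_{w_1,w_2\sim\CW_{n,k}}\langle w_1,w_2\rangle^{2d}.\]
The $d=0$ term equals $1$, so it suffices to show the sum over $d\ge 1$ is $O(1)$.

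For $d\ge 1$ I would bound the two factors separately. The overlap-moment factor is exactly what \Cref{lemma:ldlr-a-bound} handles: since the hypothesis $k\le\sqrt{n/(4e)}$ is precisely that lemma's requirement, $\E_{w_1,w_2}\langle w_1,w_2\rangle^{2d} = k^{-2d}A_{n,k,d}\le 2(2d)^{2d}/k^{2d}$. The coefficient factor is handled by \Cref{lemma:ldlr-coef-bound}: since $D=o(m)$ and $1\le d\le D$, we get $c(m,d)\le c_1 d^{3/2}e^{c_2 d^2/m}(2m)^d/d!$. Multiplying the two estimates and using $\beta^2\le 1$, the identities $(2d)^{2d}=4^d d^{2d}$ and $(2m)^d/4^d=(m/2)^d$, and the Stirling bound $d^{2d}/d!\le (ed)^d$, the $d$-th term collapses to at most
\[2c_1 d^{3/2}\,e^{c_2 d^2/m}\left(\frac{2edm}{k^2}\right)^{d}.\]

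It then remains to see this is summable. The hypothesis $k\ge 2e\sqrt{mD}$ gives $k^2\ge 4e^2 mD\ge 8e^2 md$ (using $d\le D/2$), so the geometric ratio satisfies $2edm/k^2\le 1/(4e)<1$. The one point needing care is the prefactor $e^{c_2 d^2/m}$, which a priori might grow with $n$; but since $d\le D/2$ and $D=o(m)$, we have $c_2 d^2/m\le \frac{c_2 D}{2m}\,d = o(d)$, so for all sufficiently large $n$ this prefactor is dominated by the geometric decay, and the $d$-th term is at most $2c_1 d^{3/2}\rho^{d}$ for a fixed $\rho<1$. Summing the convergent series $\sum_{d\ge 1}d^{3/2}\rho^{d}=O(1)$ then completes the bound.

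The substantive content here is really carried by the two preparatory lemmas — \Cref{lemma:ldlr-a-bound}, which improves the trivial bound $\E\langle w_1,w_2\rangle^{2d}\le 1$ by exploiting that two uniformly random size-$k$ supports rarely overlap when $k\ll\sqrt n$, and \Cref{lemma:ldlr-coef-bound} from \cite{ding2023subexponential}. Given those, the only genuine task in this step is the constant bookkeeping that makes the geometric ratio come out strictly below one (this is where the explicit constant $2e$ in the hypothesis $k\ge 2e\sqrt{mD}$ is spent), together with the observation that the $e^{c_2 d^2/m}$ factor is harmless precisely because the degree budget $\lfloor D/2\rfloor$ is sublinear in the sample size $m$. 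I expect this last point — ruling out that the $e^{c_2 d^2/m}$ factor spoils convergence — to be the only place where one must argue slightly rather than compute mechanically.
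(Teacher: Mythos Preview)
Your proposal is correct and follows essentially the same route as the paper: apply \Cref{lemma:ldlr-expr}, bound the overlap moment via \Cref{lemma:ldlr-a-bound} and the combinatorial coefficient via \Cref{lemma:ldlr-coef-bound}, then use Stirling and the hypothesis $k\ge 2e\sqrt{mD}$ to make the geometric ratio strictly below one. The only cosmetic difference is bookkeeping order: the paper absorbs $e^{c_2 d^2/m}\le e^d$ (valid once $D=o(m)$) before applying Stirling, whereas you keep it separate and absorb it into the geometric decay at the end; both are equally valid and yield the same conclusion.
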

\allowdisplaybreaks

\begin{proof}
For notational simplicity, we write $k=k(n)$, $D = D(n)$, and $m = m(n)$. Applying \Cref{lemma:ldlr-expr} and the definition of $A_{n,k,d}$ (\Cref{lemma:ldlr-a-bound}), we have that for all sufficiently large $n$,
\begin{align*}
\norm{L^{\leq D}_{n,k,\beta,m}}_{L^2(\BQ_{n,m})}^2 
&= 1+ \sum_{d=0}^{\lfloor D/2\rfloor} \left(\frac{\beta^2}{4k^2}\right)^d A_{n,k,d}\sum_{\substack{d_1,\dots,d_m \\ \sum d_i = d}} \prod_{i=1}^m \binom{2d_i}{d_i} \\ 
&\leq 1 + 2c_1 \sum_{d=1}^{\lfloor D/2\rfloor} \left(\frac{\beta^2}{4k^2}\right)^d (2d)^{2d} d^{3/2} e^{c_2 d^2/m} \frac{(2m)^d}{d!} \\ 
&\leq 1 + 2c_1 \sum_{d=1}^{\lfloor D/2\rfloor} d^{3/2} \left(\frac{8ed^2 m}{4k^2 (d!)^{1/d}}\right)^d \\ 
&\leq 1 + 2c_1 \sum_{d=1}^{\lfloor D/2\rfloor} d^{3/2} \left(\frac{8e^2 d m}{4k^2}\right)^d \\ 
&\leq 1 + 2c_1 \sum_{d=1}^{\lfloor D/2\rfloor} d^{3/2} \left(1/2\right)^d
\end{align*}
where the first inequality applies \Cref{lemma:ldlr-a-bound} (using that $k \leq \sqrt{n/(4e)}$) and \Cref{lemma:ldlr-coef-bound} (using that $D = o(m)$); the second inequality uses the bounds $\beta^2 \leq 1$ and $e^{c_2 d^2/m} \leq e^{c_2 dD/m} \leq e^d$ (which holds for sufficiently large $n$, since $D = o(m)$); the third inequality uses Stirling's approximation; and the fourth inequality uses the assumption that $k^2 \geq 2e^2 mD$. It's clear that the final summation is upper bounded by an absolute constant, which completes the proof.
\end{proof}

\begin{corollary}[Restatement of \Cref{thm:no-subquadratic-alg-intro}]\label{cor:no-subquadratic-alg}
Let $\epsilon,C>0$ with $\epsilon \leq 2$. Suppose that there is a polynomial-time algorithm $\MA$ satisfying the following property. For any $n,k \in \NN$, $\sigma>0$, positive semi-definite, $(1,k)$-rescalable matrix $\Sigma \in \RR^{n\times n}$, $k$-sparse vector $w^\st \in \RR^n$, and $m \geq C k^{2-\epsilon}\log n$, the output $\hat w \gets \MA((X^{(j)},y^{(j)})_{j=1}^m)$ satisfies
\[\Pr[\norm{\hat w - w^\st}_\Sigma^2 \leq \sigma^2/10] \geq 1-o(1)\]
where the probability is over the randomness of $\MA$ and $m$ independent samples $(X^{(j)},y^{(j)})_{j=1}^m$ from $\SLR_{\Sigma,\sigma}(w^\st)$. Then \Cref{conj:ldlr} is false.
\end{corollary}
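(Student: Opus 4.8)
The plan is to chain together the two workhorses already established — the reduction \Cref{theorem:pca-to-slr} from near-critical negative-spike sparse PCA to SLR with a $(1,k)$-rescalable covariance, and the low-degree bound \Cref{theorem:ldlr-main} — and then invoke \Cref{conj:ldlr}. Concretely, let $\MA$ be the hypothesized polynomial-time SLR algorithm and set $\mslr(n,k) := \lceil C k^{2-\epsilon}\log n\rceil$, so that $\MA$ meets the hypothesis of \Cref{theorem:pca-to-slr}. That theorem then hands us a polynomial-time algorithm $\MA'$ that solves strong detection (in the sense of \Cref{def:strong-detection}) for $(\BP_{n,k,\beta,m},\BQ_{n,m})$ whenever $\beta\in(-1,-1+1/(2k)]$ and $m\ge\mslr(n,k)+1600\log n$. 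To falsify \Cref{conj:ldlr} it now suffices to exhibit parameter functions $k(n),m(n),D(n),\beta(n)$ such that: (i) $\beta(n)\in(-1,0)$ with $1+\beta(n)\ge 1/\poly(n)$, and $\beta(n)\le -1+1/(2k(n))$, and $m(n)\ge\mslr(n,k(n))+1600\log n$ — so that $\MA'$ is a bona fide polynomial-time strong-detection algorithm for these parameters; (ii) $D(n)=\log^{1+\Omega(1)}(n)$; and (iii) the hypotheses of \Cref{theorem:ldlr-main} hold, namely $2e\sqrt{m(n)D(n)}\le k(n)\le\sqrt{n/(4e)}$ for large $n$ and $D(n)=o(m(n))$, which gives $\norm{L^{\le D(n)}_{n,k(n),\beta(n),m(n)}}_{L^2(\BQ_{n,m(n)})}=O(1)$. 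Given such functions, \Cref{theorem:ldlr-main} certifies the premise of \Cref{conj:ldlr} while $\MA'$ refutes its conclusion, so \Cref{conj:ldlr} is false.

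The remaining work is just to pin down the parameters, which is a short arithmetic exercise. I would take $D(n):=\lceil\log^2 n\rceil$ (handling (ii)), $k(n):=\lceil\log^{a}n\rceil$ for a sufficiently large constant depending on $\epsilon$, e.g. $a:=\lceil 3/\epsilon\rceil+1$, $\beta(n):=-1+1/(2k(n))$ (so the $\beta$ conditions in (i) hold once $k$ is polylogarithmic), and $m(n):=\max\bigl(\lceil C\,k(n)^{2-\epsilon}\log n\rceil+1600\log n,\ \lceil\log^3 n\rceil\bigr)$. Then $m(n)\ge\mslr(n,k(n))+1600\log n$ by construction, $D(n)=o(m(n))$ since $m(n)\ge\log^3 n\gg\log^2 n$, and $k(n)=\polylog(n)\le\sqrt{n/(4e)}$ for large $n$. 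The one inequality requiring a line of checking is $2e\sqrt{m(n)D(n)}\le k(n)$: whichever term realizes the maximum defining $m(n)$, the exponent of $\log n$ in $2e\sqrt{m(n)D(n)}$ is $\max\bigl(\tfrac{5}{2},\ \tfrac{a(2-\epsilon)+3}{2}\bigr)$, and both $\tfrac{5}{2}\le a$ (since $a\ge 3$) and $\tfrac{a(2-\epsilon)+3}{2}\le a$ (since $a\ge 3/\epsilon$), so the bound holds for $n$ large. This single choice works uniformly over $\epsilon\in(0,2]$, including the boundary case $\epsilon=2$ where $\mslr$ no longer depends on $k$.

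The step I expect to be the main (though modest) obstacle is precisely this bookkeeping — verifying that the sample regime forced by the SLR hypothesis, $m\gtrsim k^{2-\epsilon}\log n$, still lies inside the low-degree-hard regime $m\lesssim k^2/D$ of \Cref{theorem:ldlr-main} for every admissible $\epsilon$. The feature that makes this go through is that $D$ need only be a small fixed power of $\log n$ (any $\log^{1+\Omega(1)}n$ is enough for \Cref{conj:ldlr}), so choosing $k$ to be a somewhat larger power of $\log n$ buys a multiplicative slack of order $k^{\epsilon}/\log n \gg D$ between the two regimes; everything else is a direct appeal to \Cref{theorem:pca-to-slr}, \Cref{theorem:ldlr-main}, and \Cref{conj:ldlr}, concluding the proof of the corollary (equivalently \Cref{thm:no-subquadratic-alg-intro}).
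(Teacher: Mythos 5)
Your proposal is correct and follows essentially the same path as the paper's proof: instantiate $\mslr(n,k)$ from the hypothesis on $\MA$, invoke \Cref{theorem:pca-to-slr} to produce a polynomial-time detector $\MA'$, then pick polylogarithmic $k(n)$, $m(n)$, $D(n)$ and $\beta(n) = -1 + 1/(2k(n))$ so that \Cref{theorem:ldlr-main} certifies the LDLR is $O(1)$, contradicting \Cref{conj:ldlr}. The paper uses $k(n)=\log^{10/\epsilon}n$ and $D(n)=(2e)^{-2}\log^2 n$ while you use $k(n)=\lceil\log^{\lceil 3/\epsilon\rceil+1}n\rceil$ and $D(n)=\lceil\log^2 n\rceil$; the arithmetic you give (in particular the strict inequality $a>3/\epsilon$ needed to absorb the multiplicative constants in $2e\sqrt{m(n)D(n)}\le k(n)$) checks out, so the differences are cosmetic.
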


\begin{proof}
Define functions $m, k,D: \NN \to \NN$ and $\beta: \NN \to (-1,0)$ by $k(n) := \log^{10/\epsilon} n$, $m(n) := Ck(n)^{2-\epsilon}\log^3 n + 1600\log(n)$, $D(n) := (2e)^{-2}\log^2 n$, and $\beta(n) = -1 + 1/(2k(n))$. By \Cref{theorem:pca-to-slr}, there is a polynomial-time algorithm $\MA'$ that solves strong detection for the spiked Wishart model with parameter functions $k$, $m$, and $\beta$. But now
\[2e\sqrt{m(n)D(n)} \leq \sqrt{(C+1600)k(n)^{2-\epsilon}\log^5 n} \leq \sqrt{\frac{(C+1600)k(n)^2}{\log^5 n}} \leq k(n)\]
for sufficiently large $n \in \NN$. Also, $k(n) = \log^{10/\epsilon} n \leq \sqrt{n/(4e)}$ for sufficiently large $n$. Finally, $m(n) = \Omega(\log^3 n)$, so $D(n) = o(m(n))$. We can therefore apply \Cref{theorem:ldlr-main}, which gives that $\norm{L^{\leq D(n)}_{n,k(n),\beta(n),m(n)}}_{L^2(\BQ_{n,m(n)})} \leq O(1)$. Together with the guarantee on $\MA'$, this contradicts \Cref{conj:ldlr}.
\end{proof}

\begin{remark}\label{remark:ggms}
\Cref{theorem:ldlr-main} also implies a computational-statistical gap for learning Gaussian Graphical Models, under \Cref{conj:ldlr}. In particular, for any $\beta \in (-1, -1 + 1/k)$ and any $k$-sparse $w \in \RR^n$, define $\Sigma := I_n + \beta ww^\t$. Suppose that $w$ lies in the support of $\CW_{n,k}$, so that every nonzero entry of $w$ lies in $\{-1/\sqrt{k}, 1/\sqrt{k}\}$. Then it can be checked that the Gaussian Graphical Model with distribution $N(0,\Sigma)$ is $\kappa$-nondegenerate for some $\kappa = \Omega(1)$, i.e. the precision matrix $\Theta := \Sigma^{-1}$ satisfies $|\Theta_{ij}| \geq \Theta(1) \cdot \sqrt{\Theta_{ii}\Theta_{jj}}$ for all $i,j \in [n]$ with $\Theta_{ij} \neq 0$.

For any $\kappa$-nondegenerate Gaussian Graphical Model $N(0,\Sigma)$ with maximum degree $k$, the Markov structure (i.e. the support of $\Sigma^{-1}$) can be information-theoretically learned with $O(k\log(n)/\kappa^2)$ samples \citep{misra2017information}. But for any $\epsilon>0$, if there were a computationally efficient algorithm for learning such models with $O(k^{2-\epsilon}\log(n)/\kappa^2)$ samples, then by taking $\kappa$ to be a constant, there would be a computationally efficient algorithm for distinguishing the spiked Wishart distribution $\BP_{n,k,\beta,m}$ from the null distribution $\BQ_{n,m}$ for some $m = O(k^{2-\epsilon}\log(n))$, so long as $\beta \in (-1,-1+1/k)$: simply learn the underlying structure and check if it is the empty graph or not. But such an algorithm would contradict \Cref{theorem:ldlr-main}, assuming that \Cref{conj:ldlr} holds.

Finally, we observe that this lower bound actually holds for the natural testing problem of distinguishing between an empty GGM and a sparse GGM with at least one nondegenerate edge. We establish upper bound with matching dependence on $k$ later in Section~\ref{sec:testing}.
\end{remark}

\subsection{Statistical efficiency of semi-definite programming}\label{sec:sdp}

We now give a second piece of evidence that negative-spike sparse PCA may exhibits a statistical/computational tradeoff, based on the failure of a natural semi-definite program for solving the detection problem. Given samples $Z^{(1)},\dots,Z^{(m)} \in \RR^n$, one can solve the following program in polynomial time:
\begin{equation}
V_k(\hat\Sigma) := \min_{A \in \RR^{n\times n}: A \succeq 0} \langle \hat \Sigma, A\rangle \quad\text{s.t. } \tr(A) = 1 \text{ and } \sum_{i,j=1}^n |A_{ij}| \leq k
\label{eq:sdp}
\end{equation}
where $\hat\Sigma = \frac{1}{m} \sum_{j=1}^m Z^{(j)} (Z^{(j)})^\t$ is the empirical covariance of the samples $(Z^{(j)})_{j=1}^m$. This is the same as the standard semi-definite programming relaxation suggested for positive spike sparse PCA \cite{d2004direct, krauthgamer2013semidefinite} except that the program minimizes rather than maximizing.\footnote{This difference means that we need a different construction than they use to establish the SDP lower bound --- ours is based on looking at the kernel of the empirical covariance.}

Since the matrix $A := ww^\t$ is feasible for \eqref{eq:sdp}, the value of the program is at most $O(1+\beta)$ with high probability over $(X^{(j)})_{j=1}^m \sim \BP_{n,k,\beta,m}$ (so long as $m = \Omega(k\log n)$). One would hope that under the null hypothesis $(Z^{(j)})_{j=1}^m \sim \BQ_{n,m}$, the program value $V_k(\hat\Sigma)$ is with high probability concentrated near one, in which case \eqref{eq:sdp} would solve the strong detection problem. However, we show that this is not the case when $m \ll k^2$: in this regime, the value of the program is in fact \emph{zero} with high probability under the null hypothesis (\Cref{thm:sdp}). Since the value is always nonnegative, it follows that the natural, computationally efficient test based on this program \--- reject the null hypothesis if $V_k(\hat\Sigma)$ is below some threshold \--- fails to solve the strong detection problem if the number of samples $m$ is significantly less than $O(k^2)$.

We prove \Cref{thm:sdp} by taking $A$ in the above program to be (an appropriate rescaling of) the orthogonal projection onto $\ker(\hat \Sigma)$. It's clear that $\langle \hat \Sigma, A\rangle = 0$, so it remains to check that $A$ is feasible for \eqref{eq:sdp} with high probability, when rescaled to have trace one.

Quantitatively, if $Q$ is an orthogonal projection matrix onto a random $(n-m)$-dimensional subspace of $\RR^n$ (as $\vspan(\hat \Sigma)$ indeed is), we would like to bound $\sum_{i,j = 1}^n |Q_{ij}|$ by roughly $n\sqrt{m}$. Since $I_n - Q$ has the same sum of entries up to $O(n)$, we can equivalently consider an orthogonal projection matrix $P$ onto a random $m$-dimensional subspace. The following lemma gives the desired bound on the sum of entries of $P$ (essentially as an application of Johnson-Lindenstrauss concentration).

\begin{lemma}\label{lemma:proj-norm}
Let $n,m \in \NN$ with $m \leq n$, and let $V \subseteq \RR^n$ be a uniformly random $m$-dimensional subspace. Let $P \in \RR^{n\times n}$ be the orthogonal projection matrix onto $V$. Then for any $\delta>0$,
\[\Pr\left[\sum_{i,j=1}^n |P_{ij}| > 11n\sqrt{m\log(n/\delta)}\right] \leq \delta.\]
\end{lemma}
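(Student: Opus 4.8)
The plan is to bypass any probabilistic argument and instead establish the much stronger \emph{deterministic} bound $\sum_{i,j=1}^n |P_{ij}| \le n\sqrt{m}$, which holds for \emph{every} rank-$m$ orthogonal projection $P$ on $\RR^n$ regardless of how $V$ is chosen; the stated high-probability guarantee then follows trivially. The entire argument is two applications of Cauchy--Schwarz together with the identity $\tr(P)=m$.

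First I would work row by row. Since $P=P^\top$, the $i$-th row of $P$ equals the vector $Pe_i\in\RR^n$, so $\sum_{j=1}^n|P_{ij}| = \|Pe_i\|_1 \le \sqrt{n}\,\|Pe_i\|_2$ by Cauchy--Schwarz in $\RR^n$. Summing this over $i\in[n]$ and applying Cauchy--Schwarz once more, this time over the index set $[n]$, gives $\sum_{i=1}^n\|Pe_i\|_2 \le \sqrt{n}\bigl(\sum_{i=1}^n\|Pe_i\|_2^2\bigr)^{1/2}$. Finally, because $P$ is an orthogonal projection onto an $m$-dimensional subspace we have $P^\top P = P^2 = P$, hence $\sum_{i=1}^n\|Pe_i\|_2^2 = \sum_{i=1}^n e_i^\top P^\top P e_i = \tr(P^\top P) = \tr(P) = m$. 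Chaining the three estimates yields $\sum_{i,j=1}^n|P_{ij}| \le \sqrt{n}\cdot\sqrt{n}\cdot\sqrt{m} = n\sqrt{m}$.

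To conclude, it suffices to consider $\delta\in(0,1]$ and $n\ge 2$ (the probability bound is vacuous or the setup degenerate otherwise), in which case $\log(n/\delta)\ge\log 2 > 1/121$, so $n\sqrt{m}\le 11\,n\sqrt{m\log(n/\delta)}$. Therefore the event $\{\sum_{i,j}|P_{ij}| > 11n\sqrt{m\log(n/\delta)}\}$ is empty, and in particular has probability $0\le\delta$, which is exactly the claim.

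The main point is really just recognizing that no concentration inequality is needed here. If one instead wanted to mimic the Johnson--Lindenstrauss-flavored argument the discussion alludes to, the step requiring care would be a union bound showing $\max_i\|Pe_i\|_2^2 \le (m+O(\log(n/\delta)))/n$ with probability $1-\delta$: one uses rotational invariance of the Haar measure on $m$-dimensional subspaces to identify $\|Pe_i\|_2^2$ in distribution with $\chi^2_m/\chi^2_n$, then applies standard $\chi^2$-tail bounds and unions over $i\in[n]$; combined with $\|Pe_i\|_1\le\sqrt n\,\|Pe_i\|_2$ this reproduces the same $n\sqrt{m}$-type bound up to the logarithmic factor. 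But the pure Cauchy--Schwarz route above is both simpler and quantitatively stronger, so that is what I would write.
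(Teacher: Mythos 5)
Your deterministic argument is correct, and it is a genuinely different route from the paper's. The paper proves this lemma by invoking Johnson--Lindenstrauss-type concentration for random projections (the Dasgupta--Gupta bound on $\|v\|_P^2$) together with a union bound over the $O(n^2)$ test vectors $\{e_i\}\cup\{(e_i+e_j)/\sqrt{2}\}$, which yields an entrywise estimate $|P_{ij}| \lesssim \sqrt{m\log(n/\delta)}/n$ for $i\neq j$ and hence $\sum_{i,j}|P_{ij}| \lesssim n\sqrt{m\log(n/\delta)}$ with probability $1-\delta$. Your argument instead establishes the stronger \emph{deterministic} bound $\sum_{i,j}|P_{ij}| \le n\sqrt{m}$ valid for every rank-$m$ orthogonal projection, via the chain $\sum_j|P_{ij}|=\|Pe_i\|_1\le\sqrt{n}\,\|Pe_i\|_2$, then $\sum_i\|Pe_i\|_2\le\sqrt{n}\left(\sum_i\|Pe_i\|_2^2\right)^{1/2}$, and finally $\sum_i\|Pe_i\|_2^2=\tr(P^\top P)=\tr(P)=m$ using $P^2=P=P^\top$. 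Each step is correct, and the resulting bound is both simpler and quantitatively sharper (it removes the spurious $\sqrt{\log}$ factor), with equality attained e.g.\ when $m=1$ and $V=\vspan(\mathbf{1})$. Your reduction of the probabilistic claim to the deterministic one, namely $11\sqrt{\log(n/\delta)}\ge 1$ for $n\ge 2$ and $\delta\le 1$, is also fine; the only caveat is the genuinely degenerate corner $n=1$, $\delta$ very close to $1$, where the lemma as literally stated can fail, but you flag and exclude this, and the paper never uses the lemma in that regime (it is applied with $m\le n/2$ and small $\delta$). One consequence worth noting: since your bound is deterministic, feeding it back into the proof of \Cref{thm:sdp} shows that $V_k(\hat\Sigma)=0$ holds \emph{surely} (not just with probability $1-\delta$) whenever $m\le c k^2$ and $m\le n/2$, a small strengthening of what the paper states.
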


\begin{proof}
For any fixed unit vector $v \in \RR^n$, it is known \cite[Lemma 2.2]{dasgupta2003elementary} that
\[\Pr\left[\left|\norm{v}_P^2 - \frac{m}{n}\right| > \frac{\epsilon m}{n}\right] \leq \exp(-k\epsilon^2/12).\]
Define $S := \{(e_i+e_j)/\sqrt{2}: i,j \in [n] \land i\neq j\} \cup \{e_i: i \in [n]\}$ and $\epsilon := \sqrt{12\log(n^2/\delta)/m}$. By a union bound, we get that $|\norm{v}_P^2 - m/n| \leq \epsilon m/n$ for all $v \in S$, with probability at least $1-\delta$. Henceforth we condition on this event.

First, $\sum_{i=1}^n |P_{ii}| = \tr(P) = m$. Next, fix any $i,j \in [n]$ with $i \neq j$. Then
\[P_{ij} = e_i^\t P e_j = \frac{1}{2}\left(\norm{e_i+e_j}_P^2 - \norm{e_i}_P^2 - \norm{e_j}_P^2\right).\]
Thus,
\[|P_{ij}| \leq \frac{1}{2}\left|\norm{e_i+e_j}_P^2 - \frac{2m}{n}\right| + \frac{1}{2}\left|\norm{e_i}_P^2 - \frac{m}{n}\right| + \frac{1}{2}\left|\norm{e_j}_P^2 - \frac{m}{n}\right| \leq \frac{2\epsilon m}{n}.\]
Substituting in the choice of $\epsilon$, we get that
\[\sum_{i,j=1}^n |P_{ij}| \leq n + 2\epsilon m n \leq n + 2n\sqrt{12m\log(n^2/\delta)} \leq 11n\sqrt{m\log(n/\delta)}\]
as claimed.
\end{proof}

We can now prove the claimed result that \eqref{eq:sdp} has value zero with high probability under the null hypothesis.

\begin{theorem}\label{thm:sdp}
There is a constant $c>0$ with the following property. Let $n,m,k \in \NN$, and let $(Z^{(j)})_{j=1}^m \sim \BQ_{n,m}$. Let $\hat\Sigma := \frac{1}{m}\sum_{j=1}^m Z^{(j)} (Z^{(j)})^\t$. For any $\delta>0$, if $m \leq \min(ck^2/\log(n/\delta), n/2)$, then 
\[\Pr[V_k(\hat\Sigma) > 0] \leq \delta.\]
\end{theorem}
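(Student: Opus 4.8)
The plan is to exhibit an explicit feasible point of the program \eqref{eq:sdp} that attains objective value zero, namely a suitable rescaling of the orthogonal projection onto $\ker(\hat\Sigma)$. Since the objective $\langle\hat\Sigma,A\rangle$ is nonnegative on the feasible set (both $\hat\Sigma\succeq 0$ and $A\succeq 0$), producing such a point forces $V_k(\hat\Sigma)=0$. Write $\BX := [Z^{(1)}\ \cdots\ Z^{(m)}]^\t \in \RR^{m\times n}$, so $\hat\Sigma = \frac1m \BX^\t\BX$ and $\ker(\hat\Sigma)=\ker(\BX)$. Because $m\le n/2<n$ and the rows of $\BX$ are i.i.d.\ Gaussian, $\BX$ almost surely has rank $m$, so $\ker(\hat\Sigma)$ has dimension exactly $n-m\ge n/2>0$. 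Let $Q$ be the orthogonal projection onto $\ker(\hat\Sigma)$, and set $A := \frac{1}{n-m}Q$.

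The steps I would carry out are as follows. First, the three ``easy'' constraints: $A\succeq 0$ since $Q$ is a projection and $n-m>0$; $\tr(A) = \tr(Q)/(n-m) = 1$; and $\langle\hat\Sigma,A\rangle = \frac{1}{n-m}\tr(\hat\Sigma Q) = 0$ since $Q$ projects onto $\ker(\hat\Sigma)$. The only remaining thing to check is the $\ell_1$ constraint $\sum_{i,j}|A_{ij}|\le k$. For this, write $Q = I_n - P$, where $P$ is the orthogonal projection onto the row space of $\BX$; by rotational invariance of the i.i.d.\ Gaussian rows, this row space is a uniformly random $m$-dimensional subspace of $\RR^n$, so \Cref{lemma:proj-norm} applies: with probability at least $1-\delta$, $\sum_{i,j}|P_{ij}|\le 11n\sqrt{m\log(n/\delta)}$. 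Using $0\le P_{ii}\le 1$ we have $|Q_{ij}|\le \mathbbm{1}[i=j] + |P_{ij}|$, hence $\sum_{i,j}|Q_{ij}|\le n + 11n\sqrt{m\log(n/\delta)} \le 13 n\sqrt{m\log(n/\delta)}$, and therefore, invoking $n-m\ge n/2$,
\[ \sum_{i,j}|A_{ij}| \;=\; \frac{1}{n-m}\sum_{i,j}|Q_{ij}| \;\le\; \frac{26\, n\sqrt{m\log(n/\delta)}}{n} \;=\; 26\sqrt{m\log(n/\delta)}. \]
This is at most $k$ precisely when $m\le k^2/(676\log(n/\delta))$, which is guaranteed by the hypothesis $m\le ck^2/\log(n/\delta)$ with $c := 1/676$ (any smaller absolute constant also works). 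On the event of \Cref{lemma:proj-norm} we conclude $A$ is feasible with $\langle\hat\Sigma,A\rangle = 0$, so $V_k(\hat\Sigma) = 0$.

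I do not expect a serious obstacle here: the quantitative heart of the argument is already isolated in \Cref{lemma:proj-norm}, and everything else is bookkeeping. The two points requiring a little care are (i) the identification of $\ker(\hat\Sigma)$ as the orthogonal complement of a \emph{uniformly random} $m$-dimensional subspace (so that \Cref{lemma:proj-norm} is legitimately applicable), which follows from the $O(n)$-invariance of the Gaussian design, and (ii) tracking the normalization factor $\frac{1}{n-m}$ against the $\ell_1$ budget, where the assumption $m\le n/2$ is used to keep $n-m = \Theta(n)$ and thereby cancel the leading $n$ in the bound on $\sum_{i,j}|Q_{ij}|$. These determine the absolute constant $c$ but present no conceptual difficulty.
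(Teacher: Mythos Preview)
Your proposal is correct and follows essentially the same argument as the paper: both take $A$ to be $\frac{1}{n-m}$ times the projection onto $\ker(\hat\Sigma)$, write this projection as $I_n-P$ with $P$ projecting onto the (uniformly random, $m$-dimensional) row space, invoke \Cref{lemma:proj-norm} to control $\sum_{i,j}|P_{ij}|$, and then use $m\le n/2$ to absorb the $1/(n-m)$ factor. The only differences are cosmetic (slightly different constants in the chain of inequalities).
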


\begin{proof}
Let $P \in \RR^{n\times n}$ be the orthogonal projection onto $\vspan(\hat\Sigma)$ and let $A = \frac{I_n - P}{n-m}$. Note that $\tr(P) = \vrank(\hat\Sigma) = m$ almost surely, so $\tr(A) = 1$ almost surely. Moreover $I_n - P = \sum_{i=1}^{n-m} w_i w_i^\t$ where $w_1,\dots,w_{n-m}$ form an orthonormal basis for $\ker(\hat \Sigma)$. Hence, $\langle \hat\Sigma, A\rangle = \frac{1}{n-m}\sum_{i=1}^{n-m} w_i^\t \hat\Sigma w_i  = 0$. Thus, if $\sum_{i,j=1}^n |A_{ij}| \leq k$, then $V_k(\hat\Sigma) = 0$. It only remains to bound the probability that $\sum_{i,j=1}^n |A_{ij}| > k$. By \Cref{lemma:proj-norm} and the fact that $\vspan(\hat\Sigma)$ is a uniformly random $m$-dimensional subspace of $\RR^n$, with probability at least $1-\delta$ we have $\sum_{i,j=1}^n |P_{ij}| \leq 11n\sqrt{m\log(n/\delta)}$. In this event,
\begin{align*}
\sum_{i,j=1}^n |A_{ij}| 
&\leq \frac{n}{n-m} + \frac{1}{n-m}\sum_{i,j=1}^n |P_{ij}| \\ 
&\leq \frac{12n\sqrt{m\log(n/\delta)}}{n-m} \\ 
&\leq 24\sqrt{m\log(n/\delta)} \\
&\leq k
\end{align*}
where the third inequality uses the assumption that $m \leq n/2$, and the fourth inequality holds so long as $c \geq 576$.
\end{proof}

\ifarxiv
\bibliographystyle{amsalpha}
\bibliography{bib}

\appendix
\fi

\section{Testing between an empty and non-empty GGM}\label{sec:testing}
In this section, we give a polynomial-time algorithm for testing between an empty and sparse non-empty Gaussian Graphical Model (GGM). Interestingly, this is possible to do with polynomial dependence on the sparsity and strength of the strongest edge in the graphical model, even though it is not known how to \emph{learn} the entire graphical structure in the same setting in polynomial time (see e.g. discussion in \cite{anandkumar2012high,misra2017information,kelner2020learning}). The sample complexity of this algorithm is suboptimal information-theoretically. In particular, it has a quadratic dependence on the sparsity $k$ even though information-theoretically, it is possible to learn the entire model with sample complexity only linear in $k$. However, our lower bound based on negatively spiked sparse PCA (see \Cref{sec:ldlr} and \Cref{remark:ggms} for the connection with GGMs) suggests that it is optimal among polynomial-time algorithms. 

The test that we use is very simple --- we simply estimate all of the correlation coefficients between the variables in our model and check if any of them are significantly different from zero.
The fact that such a test is possible to construct was alluded to in Remark 10 of \cite{kelner2020learning} without a proof or precise statement of the sample complexity, both of which we provide here. In particular, we show here that the test obtains the conjecturally sharp quadratic dependence on $k$ among efficient algorithms.
\begin{lemma}\label{lem:cond-variance-ggm}
Let $\Sigma \in \RR^{n\times n}$ be positive-definite and let $\Theta := \Sigma^{-1}$ be the corresponding precision matrix. Let $X \sim N(0,\Sigma)$. For any indices $i \ne j$, we have
\[ \Theta_{ii}\Var(X_i \mid X_{\sim i,j}) = \frac{1}{1 - \Theta_{ij}^2/\Theta_{ii}\Theta_{jj}}. \]
\end{lemma}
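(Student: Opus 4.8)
The plan is to reduce everything to the classical formula for Gaussian conditioning expressed via the precision matrix. Write $a := \{i,j\}$ and $b := [n]\setminus\{i,j\}$, and partition $X = (X_a, X_b)$ accordingly. First I would invoke the standard fact that for $X \sim N(0,\Sigma)$ with precision matrix $\Theta = \Sigma^{-1}$, the conditional law of $X_a$ given $X_b = x_b$ is Gaussian with covariance matrix $(\Theta_{aa})^{-1}$, where $\Theta_{aa} \in \RR^{2\times 2}$ is the principal submatrix of $\Theta$ with rows and columns indexed by $a$. This is immediate from writing the joint density as proportional to $\exp(-\tfrac12 x^\t \Theta x)$ and completing the square in $x_a$: the quadratic part in $x_a$ is $\tfrac12 x_a^\t \Theta_{aa} x_a$ plus linear terms, so the conditional precision is exactly $\Theta_{aa}$.

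Given this, $\Var(X_i \mid X_{\sim i,j})$ is just the $(i,i)$ entry of the $2\times 2$ matrix $(\Theta_{aa})^{-1}$, which I would compute by hand. Since $\Theta \succ 0$, its principal submatrix $\Theta_{aa}$ is positive definite, so $\det \Theta_{aa} = \Theta_{ii}\Theta_{jj} - \Theta_{ij}^2 > 0$ and the inverse exists. Explicitly,
\[
\Theta_{aa} = \begin{pmatrix} \Theta_{ii} & \Theta_{ij} \\ \Theta_{ij} & \Theta_{jj}\end{pmatrix},
\qquad
(\Theta_{aa})^{-1} = \frac{1}{\Theta_{ii}\Theta_{jj} - \Theta_{ij}^2}\begin{pmatrix} \Theta_{jj} & -\Theta_{ij} \\ -\Theta_{ij} & \Theta_{ii}\end{pmatrix},
\]
so that $\Var(X_i \mid X_{\sim i,j}) = \Theta_{jj}/(\Theta_{ii}\Theta_{jj} - \Theta_{ij}^2)$. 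Multiplying by $\Theta_{ii}$ and then dividing numerator and denominator by $\Theta_{ii}\Theta_{jj}$ gives $\Theta_{ii}\Var(X_i \mid X_{\sim i,j}) = 1/(1 - \Theta_{ij}^2/(\Theta_{ii}\Theta_{jj}))$, as claimed.

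There is essentially no serious obstacle here: the statement is a two-line computation once the precision-matrix conditioning formula is in hand. The only points requiring a word of care are (i) correctly citing or quickly re-deriving the fact that conditioning on a block of coordinates replaces $\Theta$ by the principal submatrix $\Theta_{aa}$ of the complementary block, and (ii) observing that positive-definiteness of $\Theta$ guarantees $\Theta_{ii}\Theta_{jj} - \Theta_{ij}^2 > 0$, so that all the inversions and the final division by $1 - \Theta_{ij}^2/(\Theta_{ii}\Theta_{jj})$ are legitimate.
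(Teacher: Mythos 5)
Your proof is correct and follows essentially the same route as the paper: condition on the complementary block to get conditional precision $\Theta_{\{i,j\},\{i,j\}}$, invert the $2\times 2$ matrix explicitly, and read off the $(i,i)$ entry. The only cosmetic difference is that you spell out the positive-definiteness justification for the determinant being nonzero, which the paper leaves implicit.
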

\begin{proof}
Define $S = \{i,j\}$.
Conditional on any fixing of $X_{\sim S} = x_{\sim S}$, the conditional density of $X_{S}$ is proportional to $\exp(-\langle x_{S}, \Theta_{SS} x_S \rangle/2 + \langle h, x_S \rangle)$ where $h$, the coefficient of the linear term, is determined by $x_{\sim S}$. This is the pdf of a Gaussian distribution with precision matrix $\Theta_{SS}$, so using the explicit formula for $2 \times 2$ matrix inversion yields
\[ \Var(X_i \mid X_{\sim S}) = (\Theta_{SS})^{-1}_{ii} = \frac{\Theta_{jj}}{\Theta_{ii} \Theta_{jj} - \Theta_{ij}^2} \]
which is equivalent to the claim. 
\end{proof}
The following crucial lemma says that if we invert a sparse matrix containing a nonnegligible off-diagonal entry, then its inverse contains a nonnegligible off-diagonal entry as well. 
\begin{lemma}\label{lem:invlb}
Let $\Theta \in \mathbb{R}^{n \times n}$ be a positive-definite matrix with at most $k + 1$ nonzero entries in each row, and define $\Sigma := \Theta^{-1}$. Then there exist indices $i, \ell \in [n]$ such that $i \neq \ell$ and
\[ \frac{\Sigma_{i \ell}}{\sqrt{\Sigma_{ii} \Sigma_{\ell \ell}}} \ge \frac{1}{2k} \max_{a \ne b} \frac{\Theta_{ab}^2}{\Theta_{aa}\Theta_{bb}}. \]
\end{lemma}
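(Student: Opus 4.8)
The plan is to establish the bound with an absolute value, $|\Sigma_{i\ell}|/\sqrt{\Sigma_{ii}\Sigma_{\ell\ell}} \ge \tfrac{1}{2k}\max_{a\ne b}\Theta_{ab}^2/(\Theta_{aa}\Theta_{bb})$, which is the natural reading and is exactly what the correlation test of \Cref{sec:testing} uses (without an absolute value the inequality already fails for $n=2$ with $\Theta_{12}>0$). Since both sides are invariant under $\Sigma \mapsto W\Sigma W$, $\Theta \mapsto W^{-1}\Theta W^{-1}$ for positive diagonal $W$ — correlations and partial correlations are scale-free — and this preserves the sparsity pattern of $\Theta$, I would first normalize so that $\Sigma$ is a correlation matrix, writing $R := \Sigma$ (unit diagonal) and $\Phi := R^{-1} = \Theta$, which still has at most $k+1$ nonzeros per row. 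Let $\rho := \big(\max_{a\ne b}\Phi_{ab}^2/(\Phi_{aa}\Phi_{bb})\big)^{1/2}$, the largest absolute partial correlation, which lies in $[0,1)$ because every $2\times2$ principal submatrix of $\Phi\succ0$ is positive definite; fix $(a,b)$ achieving the maximum. The goal is $\max_{i\ne\ell}|R_{i\ell}|\ge\rho^2/(2k)$. If $\rho=0$ then $\Phi$, hence $R$, is diagonal and there is nothing to prove, so assume $\rho>0$ and, for contradiction, that $|R_{i\ell}| < \epsilon := \rho^2/(2k)$ for all $i\ne\ell$ (note $\epsilon\le 1/(2k)$).

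The first step shows the contradiction hypothesis forces the diagonal of $\Phi$ to be nearly $1$: concretely $\Phi_{ii} < P^2 := (1-\rho^4/(2k))^{-1} \le 2$ for every $i$. I would argue probabilistically with $X\sim N(0,R)$. For a vertex $i$ with neighbor set $N_i := \{j\ne i:\Phi_{ij}\ne 0\}$ (if $N_i=\emptyset$ then $\Phi_{ii}=1$ immediately), the prediction $Y_i := \E[X_i\mid X_{\sim i}] = \sum_{j\in N_i}\gamma_{ij}X_j$ with $\gamma_{ij}=-\Phi_{ij}/\Phi_{ii}$ satisfies $v := \Var(Y_i) = 1-1/\Phi_{ii} = \E[X_i Y_i] = \sum_{j\in N_i}\gamma_{ij}R_{ij}$. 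I then use two facts: (i) under the hypothesis the $|N_i|\le k$ submatrix $R[N_i,N_i]$ has unit diagonal and off-diagonal entries below $\epsilon$, so by Gershgorin $R[N_i,N_i]\succeq(1-k\epsilon)I\succeq\tfrac12 I$, giving $\norm{\gamma_i}_1 \le \sqrt{k}\,\norm{\gamma_i}_2 \le \sqrt{2kv}$; and (ii) $v = \sum_{j\in N_i}\gamma_{ij}R_{ij} \le \norm{\gamma_i}_1\cdot\epsilon < \sqrt{2kv}\,\epsilon$. Rearranging (ii) gives $v < 2k\epsilon^2 = \rho^4/(2k)$, i.e.\ $\Phi_{ii}<P^2$.

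The second step lower-bounds $|R_{ab}|$ via the identity $\Phi R = I$. Its $(a,b)$ entry reads $\Phi_{aa}R_{ab} = -\Phi_{ab} - \sum_{j\in N_a\setminus\{b\}}\Phi_{aj}R_{jb}$ (here $R_{bb}=1$ and $b\in N_a$ since $\Phi_{ab}\ne0$), so using $|\Phi_{ab}| = \rho\sqrt{\Phi_{aa}\Phi_{bb}}$, $|\Phi_{aj}| \le \rho\sqrt{\Phi_{aa}\Phi_{jj}}$ (maximality of $\rho$), $|R_{jb}|<\epsilon$, $\Phi_{bb}\ge1$, and $\Phi_{jj}<P^2$ from step one:
\[
\sqrt{\Phi_{aa}}\,|R_{ab}| \;\ge\; \frac{|\Phi_{ab}|}{\sqrt{\Phi_{aa}}} - \sum_{j\in N_a\setminus\{b\}}\frac{|\Phi_{aj}|}{\sqrt{\Phi_{aa}}}\,|R_{jb}| \;\ge\; \rho\Big(\sqrt{\Phi_{bb}} - \epsilon\!\!\sum_{j\in N_a\setminus\{b\}}\!\!\sqrt{\Phi_{jj}}\Big) \;\ge\; \rho\big(1-(k-1)P\epsilon\big).
\]
Since $(k-1)P\epsilon \le P\rho^2/2 < 1$, the right side is positive, and combining with $\sqrt{\Phi_{aa}}<P$ gives $|R_{ab}| > \rho\big(1-(k-1)P\epsilon\big)/P$. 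To finish I would check this contradicts $|R_{ab}|<\epsilon$: that would force $2k\big(1-(k-1)P\epsilon\big) < \rho P$, and substituting $2k\epsilon=\rho^2$ turns this into $2k < \rho P\big(1+(k-1)\rho\big)$; but $\rho<1$ and $P\le\sqrt2$ bound the right side by $\sqrt2\,k < 2k$ — a contradiction, completing the proof.

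The step I expect to be the main obstacle is getting these two halves to fit together with margin for all $k$: the diagonal bound $\Phi_{ii}<P^2$ is available only because the contradiction hypothesis forces $R[N_i,N_i]$ to be well-conditioned (this is precisely why the statement cannot hold for a general ill-conditioned $\Sigma$, and why $k$ must enter at least linearly), and then the final arithmetic compares $2k$ against $\sqrt2\,k$, so one must carry the refined bound $P\le\sqrt2$ — not merely $P^2<2$ — through step two; with the cruder estimate the argument closes only for $k\ge 3$ and $k\in\{1,2\}$ would need separate treatment. The remaining details (scale-invariance of partial correlations, the Gershgorin estimate, and the sign/absolute-value caveat) are routine.
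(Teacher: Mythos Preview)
Your proof is correct, and the observation that the statement needs an absolute value on $\Sigma_{i\ell}$ is well taken (the paper's proof also establishes the bound with $|\Sigma_{i\ell}|$). However, your route differs substantially from the paper's.

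The paper normalizes in the opposite direction: it sets $\Theta_{ii}=1$ for all $i$ (rather than $\Sigma_{ii}=1$). It then reads the \emph{diagonal} $(i,i)$ entry of $I=\Sigma\Theta$ to get $\Sigma_{ii}-1 = -\sum_{\ell\ne i}\Theta_{i\ell}\Sigma_{i\ell} \le k\max_{\ell\ne i}|\Sigma_{i\ell}|$, and combines this with the conditional-variance identity $\Sigma_{ii}\ge 1/(1-\Theta_{ij}^2)$ (their Lemma~\ref{lem:cond-variance-ggm}). A short case split on whether $\max_i\Sigma_{ii}>2$ then finishes the argument directly, with no contradiction and essentially no arithmetic to balance.

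Your argument instead normalizes $\Sigma$, argues by contradiction, first bounds the diagonal of $\Phi$ via a Gershgorin estimate on the correlation submatrix, and then reads the \emph{off-diagonal} $(a,b)$ entry of $\Phi R=I$. Both approaches exploit $\Sigma\Theta=I$ and sparsity of $\Theta$, but the paper's is more direct and avoids the somewhat delicate final inequality $2k<\sqrt{2}k$ that you flagged as the main obstacle. On the other hand, your Gershgorin step makes explicit why well-conditioning of $R$ on the neighborhoods is what powers the bound, which is a nice conceptual point not visible in the paper's proof.
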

\begin{proof}
Since the statement of the lemma is invariant to rescaling (i.e. replacing $\Theta$ by $D\Theta D$ for any positive-definite diagonal matrix $D$), we may assume without loss of generality that $\Theta_{ii} = 1$ for all $i \in [n]$.
Since $\Theta$ is positive definite, it follows that $\Theta_{ij}^2 < \Theta_{ii}\Theta_{jj} = 1$  for any indices $i \ne j$. Furthermore, by Lemma~\ref{lem:cond-variance-ggm} and the law of total variance (where we define $X \sim N(0,\Sigma)$), we have
\begin{equation}\label{eqn:thetalbtosigma}
\Sigma_{ii} \ge \mathbb E \Var(X_i \mid X_{\sim i,j}) = \frac{1}{1 - \Theta_{ij}^2}. 
\end{equation}
Since $I_n = \Sigma \Theta$, we have for any coordinate $i\in [n]$ that
\[ 1 = \sum_{\ell=1}^n \Theta_{i\ell} \Sigma_{i\ell} = \Sigma_{ii} + \sum_{\ell \ne i} \Theta_{i\ell} \Sigma_{i\ell},  \]
and thus
\[ 
\Sigma_{ii} - 1 = -\sum_{\ell \ne i}\Theta_{i\ell} \Sigma_{i\ell} \le \|\Theta_{i,\sim i}\|_1 \max_{\ell} |\Sigma_{i \ell}| \le k \max_{\ell \neq i} |\Sigma_{i \ell}|\]
where the final inequality uses that $\Theta_{i,\sim i}$ has at most $k$ nonzero entries, each with magnitude at most one. If we define $\ell^\st(i) := \argmax_{\ell \neq i} |\Sigma_{i\ell}|$, then the above display implies that
\begin{equation} |\Sigma_{i \ell^\st(i) }| \ge \frac{\Sigma_{ii} - 1}{k}. \label{eq:big-corr}\end{equation}
We now consider two cases: 
\begin{enumerate}
    \item If $\max_i \Sigma_{ii} > 2$, define $i^\st := \arg\max \Sigma_{ii}$.  Then by \eqref{eq:big-corr} and choice of $i^\st$, we have
    \[ |\Sigma_{i^\st \ell^\st(i^\st)}| \ge \frac{\Sigma_{i^\st i^\st}}{2k} = \frac{\max\{\Sigma_{i^\st i^\st} , \Sigma_{\ell^\st(i^\st) \ell^\st(i^\st)}\}}{2k}. \] 
    \item Otherwise we have $\max_i \Sigma_{ii} \le 2$, so by \eqref{eq:big-corr} and \eqref{eqn:thetalbtosigma} we have for any $i,j \in [n]$ with $i \neq j$ that
    \[ |\Sigma_{i \ell^\st(i) }| \ge \frac{\Sigma_{ii} - 1}{k} \ge \frac{1/(1 - \Theta_{ij}^2) - 1}{k} \ge \frac{1/(1 - \Theta_{ij}^2) - 1}{2k} \max\{\Sigma_{ii},\Sigma_{\ell^\st  \ell^\st }\}. \]
    In particular, this bound holds when $i$ and $j$ are chosen to maximize $|\Theta_{ij}|$.
\end{enumerate}
Therefore in either case, there exists some $i \in [n]$ such that
\[ \frac{\Sigma_{i \ell^\st(i)}}{\max \{\Sigma_{ii}, \Sigma_{\ell^\st(i)  \ell^\st(i) }\}} \ge \frac{1}{2k} \min \left\{1, \frac{1}{1 - \max_{a \ne b} \Theta_{ab}^2} - 1 \right\}. \]
Finally, using the inequality $1/(1 - x) - 1 \ge x$ which is valid for all $x < 1$, we obtain
\[ \frac{\Sigma_{i \ell^\st(i) }}{\max \{\Sigma_{ii}, \Sigma_{\ell^\st(i)  \ell^\st(i) }\}} \ge \frac{1}{2k} \min\left\{1, \max_{a \ne b} \Theta_{ab}^2 \right\} = \frac{1}{2k} \max_{a \ne b} \Theta_{ab}^2.\]
Using the fact that $\max \{\Sigma_{ii}, \Sigma_{\ell^\st(i)  \ell^\st(i) } \} \ge \sqrt{\Sigma_{ii} \Sigma_{\ell^\st(i)  \ell^\st(i) }}$ proves the result. 
\end{proof}
This structural result yields a tester because we can directly estimate correlations from data. Very precise results about the sample correlation coefficient were obtained by \cite{hotelling1953new}. Below, we give a simple argument which yields easy-to-use nonasymptotic bounds. 

We recall the following basic fact about Gaussians. See e.g. Lemma 2 of \cite{kelner2020learning} for an explicit proof.
\begin{lemma}[classical]
\label{lemma:var-cov-exp}
If $X,Y$ are jointly Gaussian random variables with $\Var(Y)>0$, then
\[ \Var(Y) - \Var(Y \mid X) = \frac{\operatorname{Cov}(X,Y)^2}{\Var(Y)}. \]
\end{lemma}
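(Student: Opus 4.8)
The plan is to prove this as an elementary consequence of the structure of a bivariate Gaussian, in three short moves. First, observe that both sides of the claimed identity are unchanged if we replace $X$ by $X - \E X$ and $Y$ by $Y - \E Y$, since variances, conditional variances, and covariances are all invariant under centering; so we may assume $(X,Y)$ is a centered jointly Gaussian pair. Next, dispose of the degenerate case: if $\Var(X) = 0$ then $X$ is almost surely constant, whence $\operatorname{Cov}(X,Y) = 0$ and $\Var(Y \mid X) = \Var(Y)$, so both sides vanish; hence we may assume $\Var(X) > 0$ (recall $\Var(Y) > 0$ by hypothesis, which is also what makes the right-hand side well defined).

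Second, I would write down a closed form for $\Var(Y \mid X)$. This can be done by the linear-regression decomposition — write $Y = cX + \varepsilon$ where $c \in \RR$ and $\varepsilon$ is a centered Gaussian \emph{independent} of $X$ (possible because zero correlation implies independence for jointly Gaussian variables), so that $\E[Y\mid X] = cX$ and $\Var(Y\mid X) = \Var(\varepsilon)$ is a constant not depending on the value of $X$ — or, equivalently, by using the $2\times 2$ precision matrix $\Theta = \Sigma^{-1}$ of $(X,Y)$ together with the fact (exactly the computation behind \Cref{lem:cond-variance-ggm}) that the conditional law of $Y$ given $X$ is Gaussian with precision equal to the corresponding diagonal entry of $\Theta$. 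Either route expresses $\Var(Y\mid X)$ purely in terms of $\Var(X)$, $\Var(Y)$, and $\operatorname{Cov}(X,Y)$.

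Third, I would combine this with the law of total variance, $\Var(Y) = \Var(\E[Y\mid X]) + \E[\Var(Y\mid X)]$, and simplify: a one-line algebraic manipulation then yields $\Var(Y) - \Var(Y\mid X) = \operatorname{Cov}(X,Y)^2/\Var(Y)$, as claimed. There is no substantive obstacle here — the statement is genuinely classical — and the only points requiring care are the bookkeeping in the degenerate case above and correctly reading off the closed form for $\Var(Y\mid X)$ from the $2\times 2$ covariance/precision matrix, which is precisely the kind of second-moment identity this lemma is meant to make available in the downstream analysis of the sample correlation tester.
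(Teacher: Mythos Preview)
Your three-step argument is the standard one and is essentially correct, but if you actually carry out the ``one-line algebraic manipulation'' you describe, you will \emph{not} obtain the identity as stated. Writing $Y = cX + \varepsilon$ with $\varepsilon$ independent of $X$ gives $c = \operatorname{Cov}(X,Y)/\Var(X)$ and $\Var(Y\mid X) = \Var(\varepsilon)$, so
\[
\Var(Y) - \Var(Y\mid X) = c^2\Var(X) = \frac{\operatorname{Cov}(X,Y)^2}{\Var(X)},
\]
with $\Var(X)$, not $\Var(Y)$, in the denominator. (Quick sanity check: take $Y = 2X$ with $X \sim N(0,1)$; then the left side is $4$, while $\operatorname{Cov}(X,Y)^2/\Var(Y) = 4/4 = 1$.) So the lemma as printed contains a typo, and your proposal's final sentence glosses over exactly the step where this would have been caught. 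The paper itself does not give a proof here --- it simply cites Lemma~2 of \cite{kelner2020learning} --- so there is no alternative argument to compare against; the issue is purely that the displayed formula is misstated.

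For what it's worth, the downstream application in the theorem of \Cref{sec:testing} goes through unchanged (in fact more directly) with the corrected denominator: applying it with $Y = X_i$, $X = X_j$ gives $\Var(X_i) - \Var(X_i\mid X_j) = \operatorname{Cov}(X_i,X_j)^2/\Var(X_j)$, and dividing by $\Var(X_i\mid X_j)\le \Var(X_i)$ immediately yields $\gamma \ge \operatorname{Cov}(X_i,X_j)^2/(\Var(X_i)\Var(X_j))$ without needing the intermediate $\Var(X_i)^2$ bound or the assumption $\Var(X_i)\le\Var(X_j)$.
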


For convenience, we use the following lemma. It is a special case of a general statement about testing for changes in conditional variance, which is closely related to classical results about non-central F-statistics and Wishart matrices (see e.g. \cite{keener2010theoretical}).
\begin{lemma}[Special case of Lemma 12 of \cite{kelner2020learning}]\label{lem:change-variance}
For jointly Gaussian random variables $X,Y$ with $\Var(Y|X)>0$, define
\[ \gamma(Y;X) := \frac{\Var(Y) - \Var(Y \mid X)}{\Var(Y \mid X)}. \]
There exists an efficiently computable (i.e. polynomial time) statistic $\hat \gamma$ of $m$ i.i.d. copies $(X_1,Y_1),\ldots,(X_m,Y_m)$ of $(X,Y)$ such that
\[ \left|\sqrt{\hat \gamma} - \sqrt{\gamma}\right| \le \sqrt{\frac{4\log(4/\delta)}{m}} + \sqrt{\gamma/64}. \]
\end{lemma}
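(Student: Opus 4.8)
The plan is to take $\hat\gamma$ to be the classical $F$-type statistic of the ordinary least-squares regression of the $Y_i$'s on the $X_i$'s (with an intercept): writing $\hat a,\hat b$ for the fitted coefficients, $S_{XX} := \sum_i (X_i-\bar X)^2$, $\mathrm{ESS} := \hat b^2 S_{XX}$ and $\mathrm{RSS} := \sum_i (Y_i-\hat a-\hat b X_i)^2$, set $\hat\gamma := \mathrm{ESS}/\mathrm{RSS}$. This is computable in $O(m)$ arithmetic operations and is the natural empirical counterpart of $\gamma$ (total minus residual sum of squares, over residual sum of squares). Both $\gamma$ and $\hat\gamma$ are invariant under affine reparametrizations of $X$ and of $Y$, so for the analysis I may assume $\E X=\E Y=0$ and write $Y=bX+\epsilon$ with $\epsilon\sim N(0,\sigma^2)$ independent of $X$, $b=\operatorname{Cov}(X,Y)/\Var(X)$, $\sigma^2=\Var(Y\mid X)$; then $\Var(Y)-\Var(Y\mid X)=b^2\Var(X)$, so $\sqrt\gamma=|b|\sqrt{\Var(X)}/\sigma$. (The claimed inequality should be read as holding with probability at least $1-\delta$ over the samples, and is of interest once $m$ is at least a sufficiently large constant times $\log(1/\delta)$.)

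The heart of the argument is an exact distributional identity. Conditioning on the design $X_1,\dots,X_m$, classical Gaussian linear-model theory gives $\hat b\sim N(b,\sigma^2/S_{XX})$ and $\mathrm{RSS}\sim\sigma^2\chi^2_{m-2}$ with these two independent; moreover neither conditional law depends on the design, while $W_1 := S_{XX}/\Var(X)\sim\chi^2_{m-1}$ is a function of the design only, so $W_1$, $W_2 := \mathrm{RSS}/\sigma^2\sim\chi^2_{m-2}$ and $g := (\hat b-b)\sqrt{S_{XX}}/\sigma\sim N(0,1)$ are mutually independent. Substituting $b\sqrt{S_{XX}}/\sigma=\sign(b)\sqrt\gamma\,\sqrt{W_1}$ into $\hat\gamma=\hat b^2 S_{XX}/\mathrm{RSS}$ then gives
\[
\sqrt{\hat\gamma}\;=\;\frac{\bigl|\,\sign(b)\sqrt\gamma\,\sqrt{W_1}+g\,\bigr|}{\sqrt{W_2}}\;=\;\Bigl|\,\sqrt\gamma\,\sqrt{W_1/W_2}\;+\;\sign(b)\,g/\sqrt{W_2}\,\Bigr|.
\]

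From here everything is routine concentration. Using $\bigl||u|-|v|\bigr|\le|u-v|$ with $|v|=\sqrt\gamma$ and the triangle inequality,
\[
\bigl|\sqrt{\hat\gamma}-\sqrt\gamma\bigr|\;\le\;\sqrt\gamma\,\bigl|\sqrt{W_1/W_2}-1\bigr|\;+\;\frac{|g|}{\sqrt{W_2}}.
\]
Standard $\chi^2$ tail bounds (e.g.\ \Cref{lemma:chisq-conc}) show that with probability at least $1-\delta/2$ the quantities $W_1,W_2$ each lie within relative error $O(\sqrt{\log(1/\delta)/m})$ of their means (and $W_2\ge m/2$), so once $m$ exceeds a large enough constant times $\log(1/\delta)$ we get $\bigl|\sqrt{W_1/W_2}-1\bigr|\le 1/8$, bounding the first term by $\sqrt\gamma/8=\sqrt{\gamma/64}$. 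A Gaussian tail bound gives $|g|\le\sqrt{2\log(4/\delta)}$ with probability at least $1-\delta/2$, which together with $W_2\ge m/2$ bounds the second term by $\sqrt{4\log(4/\delta)/m}$; a union bound completes the proof. I expect the only real obstacle to be the careful bookkeeping of the numerical constants (and the implicit constraint $m\gtrsim\log(1/\delta)$); the distributional identity in the middle step, though classical, is what makes the multiplicative error $\sqrt{\gamma/64}$ and the $\gamma$-independent additive error $\sqrt{4\log(4/\delta)/m}$ fall out cleanly, and is where the actual content of the lemma resides.
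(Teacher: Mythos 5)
Your proof is correct, and it is the argument the paper gestures at: the paper does not prove this lemma itself (it is quoted as a special case of Lemma~12 of \cite{kelner2020learning}, described as ``closely related to classical results about non-central $F$-statistics and Wishart matrices''), and your derivation is exactly that classical route. The exact distributional identity
\[
\sqrt{\hat\gamma}=\Bigl|\sqrt{\gamma}\,\sqrt{W_1/W_2}+\operatorname{sign}(b)\,g/\sqrt{W_2}\Bigr|,
\qquad W_1\sim\chi^2_{m-1},\ W_2\sim\chi^2_{m-2},\ g\sim N(0,1)\ \text{independent},
\]
is correctly derived (the affine-invariance reduction, the conditional-on-design decoupling of $g$ and $W_2$ from $W_1$, and the degrees-of-freedom bookkeeping for an intercept-plus-slope fit are all right), and the split $\bigl|\sqrt{\hat\gamma}-\sqrt\gamma\bigr|\le\sqrt\gamma\,\bigl|\sqrt{W_1/W_2}-1\bigr|+|g|/\sqrt{W_2}$ is the right move. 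Two small remarks. First, you cite \Cref{lemma:chisq-conc} for the $\chi^2$ concentration, but that lemma only gives factor-$2$ bounds, which is too weak to conclude $\bigl|\sqrt{W_1/W_2}-1\bigr|\le 1/8$; what you actually need (and implicitly use when you say ``relative error $O(\sqrt{\log(1/\delta)/m})$'') is a Laurent--Massart type tail bound for $\chi^2$, which indeed gives $|W_i - \EE W_i|\lesssim \sqrt{m\log(1/\delta)}$ and hence the claim once $m$ exceeds a sufficiently large constant times $\log(1/\delta)$. Second, you correctly note that the stated inequality must be read as a high-probability guarantee under the implicit constraint $m\gtrsim\log(1/\delta)$; this matches the intended reading and the way the bound is used downstream.
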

\begin{theorem}
Suppose that $X \sim N(0,\Sigma)$, $\kappa \ge 0, \delta > 0$ and consider the following two hypotheses. The null hypothesis $H_0$ is that $\Sigma$ is a diagonal matrix. The alternative hypothesis $H_1$ is that $\Sigma = \Theta^{-1}$ where $\Theta$ has $(k + 1$)-sparse rows, and where
\[ \kappa \le \max_{a \ne b} \frac{|\Theta_{ab}|}{\sqrt{\Theta_{aa} \Theta_{bb}}}, \]
i.e.\ the maximal partial correlation coefficient is at least $\kappa$. Then provided $m = \Omega(k^2\log(n/\delta)/\kappa^4)$ i.i.d. copies of $X$, we can distinguish in polynomial time between $H_0$ and $H_1$ with sum of probability of type I and type II errors at most $\delta$.
\end{theorem}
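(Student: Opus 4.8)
The plan is to reduce the testing problem to estimating the pairwise correlation coefficients $\rho_{ij} := \Sigma_{ij}/\sqrt{\Sigma_{ii}\Sigma_{jj}}$ and to reject $H_0$ precisely when one of them is detectably nonzero. Under $H_0$ the matrix $\Sigma$ is diagonal, so $\rho_{ij} = 0$ for all $i\ne j$. Under $H_1$, since $\Theta = \Sigma^{-1}$ has $(k+1)$-sparse rows and its maximal normalized off-diagonal entry $\max_{a\ne b}|\Theta_{ab}|/\sqrt{\Theta_{aa}\Theta_{bb}}$ is at least $\kappa$, \Cref{lem:invlb} produces a pair $i\ne\ell$ with
\[ \rho_{i\ell} \;\ge\; \frac{1}{2k}\max_{a\ne b}\frac{\Theta_{ab}^2}{\Theta_{aa}\Theta_{bb}} \;\ge\; \frac{\kappa^2}{2k}. \]
So it suffices to estimate every $\rho_{ij}$ to additive accuracy $o(\kappa^2/k)$ and reject iff some estimate exceeds a threshold of order $\kappa^2/k$.

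To obtain such estimators efficiently I would route through \Cref{lem:change-variance}. For a jointly Gaussian pair $(X_i,X_j)$, Gaussian conditioning (via \Cref{lemma:var-cov-exp}) gives $\Var(X_j\mid X_i) = (1-\rho_{ij}^2)\Var(X_j)$, so the quantity $\gamma(X_j;X_i) := (\Var(X_j)-\Var(X_j\mid X_i))/\Var(X_j\mid X_i)$ equals $\rho_{ij}^2/(1-\rho_{ij}^2)$; in particular $\gamma(X_j;X_i)=0$ iff $\rho_{ij}=0$, and $\sqrt{\gamma(X_j;X_i)}\ge|\rho_{ij}|$ always. Since $\Sigma\succ 0$ under both hypotheses we have $\Var(X_j\mid X_i)>0$, so \Cref{lem:change-variance} applies. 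Instantiating it for each of the $\binom{n}{2}$ pairs with failure probability $\delta/(2\binom{n}{2})$ and union bounding, we get efficiently computable statistics $\hat\gamma_{ij}$ such that, with probability $\ge 1-\delta/2$, simultaneously for all $i\ne j$,
\[ \bigl|\sqrt{\hat\gamma_{ij}}-\sqrt{\gamma_{ij}}\bigr| \;\le\; \xi + \tfrac{1}{8}\sqrt{\gamma_{ij}}, \qquad \xi := \sqrt{\tfrac{4\log(8\binom{n}{2}/\delta)}{m}} = O\!\left(\sqrt{\tfrac{\log(n/\delta)}{m}}\right). \]

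On this event, under $H_0$ every $\sqrt{\hat\gamma_{ij}}\le\xi$, while under $H_1$ the distinguished pair obeys $\sqrt{\hat\gamma_{i\ell}}\ge\tfrac{7}{8}\sqrt{\gamma_{i\ell}}-\xi\ge\tfrac{7}{8}\cdot\tfrac{\kappa^2}{2k}-\xi = \tfrac{7\kappa^2}{16k}-\xi$. Hence the test ``reject $H_0$ iff $\max_{i\ne j}\sqrt{\hat\gamma_{ij}}>\tau$'' with $\tau:=\tfrac{7\kappa^2}{32k}$ makes neither a type-I nor a type-II error as soon as $\xi<\tfrac{7\kappa^2}{32k}$, i.e. whenever $m=\Omega(k^2\log(n/\delta)/\kappa^4)$; summing the two error probabilities gives total error at most $\delta$, and the runtime is polynomial since we compute $O(n^2)$ statistics, each in polynomial time. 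The conceptual heart is \Cref{lem:invlb}, already proved above — it is precisely where the $k^2$ and $\kappa^{-4}$ dependences enter, since inverting a $k$-sparse precision matrix can shrink a normalized off-diagonal entry of size $\Theta(\kappa)$ down to $\Theta(\kappa^2/k)$, a degradation that is conjecturally unavoidable for polynomial-time algorithms (cf.\ \Cref{remark:ggms}). Given that lemma, the main things requiring care in the assembly are handling the multiplicative $\tfrac18\sqrt{\gamma}$ slack in \Cref{lem:change-variance} and tracking the effect of the union bound on the logarithmic factor; everything else is routine, so I do not anticipate a genuine obstacle.
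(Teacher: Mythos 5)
Your proposal is correct and follows essentially the same route as the paper: invoke \Cref{lem:invlb} to exhibit a pair with correlation at least $\kappa^2/(2k)$, use \Cref{lem:change-variance} (with a union bound over the $O(n^2)$ pairs) to estimate the conditional-variance statistics $\gamma$, and threshold. The only cosmetic difference is that you derive the identity $\gamma(X_j;X_i)=\rho_{ij}^2/(1-\rho_{ij}^2)$ and read off $\sqrt{\gamma}\ge|\rho_{ij}|$, whereas the paper reaches the equivalent bound $\gamma\ge\operatorname{Cov}(X_i,X_j)^2/(\Var(X_i)\Var(X_j))$ via a WLOG ordering $\Var(X_i)\le\Var(X_j)$ and the chain $\gamma\ge\operatorname{Cov}^2/\Var(X_i)^2\ge\operatorname{Cov}^2/(\Var(X_i)\Var(X_j))$; your presentation is slightly cleaner and sidesteps the WLOG entirely.
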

\begin{proof}
We test the maximal correlation of $X_i$ and $X_j$ over all $i,j \in [n]$ with $i \neq j$.
Under the null hypothesis the true correlation $\gamma(X_i;X_j)$ equals zero no matter the choice of $i,j$. Under the alternative hypothesis, let $i,j$ be the indices given by \Cref{lem:invlb} and without loss of generality suppose $\Var(X_i) \leq \Var(X_j)$. Then $\gamma = \gamma(X_i;X_j)$ can be lower bounded as
\[ \gamma = \frac{\Var(X_i) - \Var(X_i \mid X_j)}{\Var(X_i|X_j)} \geq \frac{\operatorname{Cov}(X_i,X_j)^2}{\Var(X_i)^2} \ge \frac{\operatorname{Cov}(X_i,X_j)^2}{\Var(X_i) \Var(X_j)} \ge \left(\frac{1}{2k} \max_{a \ne b} \frac{\Theta_{ab}^2}{\Theta_{aa} \Theta_{bb}}\right)^2\]
where the first inequality uses \Cref{lemma:var-cov-exp} and the fact that $\Var(X_i|X_j) \leq \Var(X_i)$. Thus, by the theorem assumption, it holds that $\gamma(X_i;X_j) \geq \kappa^4/(4k^2)$. By Lemma~\ref{lem:change-variance} and a union bound over all choices of $i,j$, given 
\[ m = \Omega(k^2\log(4n/\delta)/\kappa^4) \]
samples we can distinguish between the two hypotheses with the sum of probability of type I and type II error at most $\delta$. 
\end{proof}
\section{Simulation}\label{sec:simulation}
\begin{figure}[t]
    \centering
    \includegraphics[width=\textwidth]{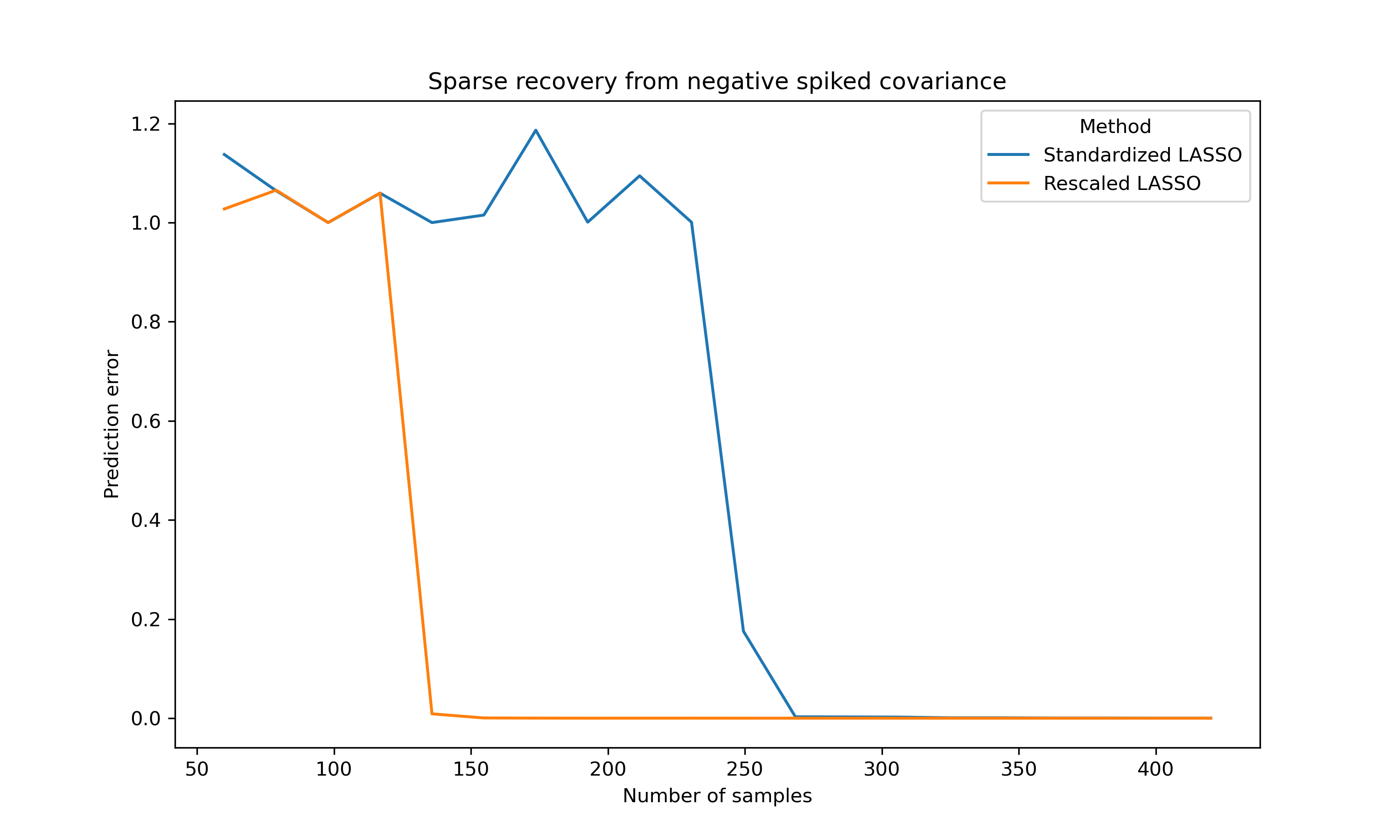}
    \caption{Standardized Lasso vs \texttt{RescaledLasso} in a simple example with varying number of samples. For each datapoint, the covariates were drawn i.i.d. from the negatively spiked sparse PCA model with ambient dimension $n = 300$ and with $\beta = -0.99$. For covariate vector $X$, the ground truth response $Y$ is generated as 
    $Y = \frac{1}{\sqrt{(1 + \beta) k}} \langle 1_S, X \rangle$ 
    where $S$ is the set of coordinates of size $k = 5$ where the spike is supported. As we expect from the theory, \texttt{RescaledLasso} recovers the signal from fewer samples than Lasso applied with the usual standardization/normalization of covariates.}
    \label{fig:simulation}
\end{figure}
As a simple example, we ran \rl{} in a simple simulation on synthetic data  where covariates follow negatively spiked sparse PCA (Figure~\ref{fig:simulation}) and verified that it indeed had improved prediction error compared to usual standardization of covariates. We used the glmnet package in R and optimized the regularization hyperparameter using a validation set. 

When we ran the algorithm in the simulation, we changed the value of the hyperparameters  $\textsf{DIV}$ to $1.1$ and of $\textsf{B}$ to $2k$ --- generally speaking, we expect setting the value of $\textsf{DIV}$ closer to $1$ will not significantly hurt the statistical performance (and may help a bit in some cases) although it may make the algorithm run a bit slower, and the proof does not depend on the particular value of $\textsf{DIV}$ chosen in the original pseudocode (which was only chosen for mathematical convenience). Also, instead of updating the scale only of $i_{min}$, in each pass we we update the scale of all indices $i$ such that $\frac{1}{m} \|\mathbb X v^{(t,i)}\|_2^2 \le 1$ --- this significantly reduces the number of iterations and thus the runtime of the algorithm (it can be checked that the theoretical guarantees also hold with this modification).

\section{Technical lemmas}

\begin{lemma}\label{lemma:chisq-conc}
Let $\Sigma \in \RR^{n\times n}$ be a positive semi-definite matrix and fix $w \in \RR^n$. Let $\BX \in \RR^{m\times n}$ have i.i.d. rows $X_1,\dots,X_m\sim N(0,\Sigma)$. If $m \geq 32\log(2/\delta)$, then
\[\Pr\left[\frac{1}{2} \norm{w}_\Sigma^2 \leq \frac{1}{m}\norm{Xw}_2^2 \leq 2\norm{w}_\Sigma^2\right] \geq 1-\delta.\] 
\end{lemma}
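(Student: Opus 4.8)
The plan is a direct reduction to a one-dimensional chi-squared concentration bound. First I would observe that since each row $X_i$ of $\BX$ is distributed as $N(0,\Sigma)$, the scalar $\langle X_i, w\rangle$ is distributed as $N(0, w^\t \Sigma w) = N(0, \norm{w}_\Sigma^2)$, and these are independent across $i \in [m]$. Hence $\frac{1}{m}\norm{\BX w}_2^2 = \frac{1}{m}\sum_{i=1}^m \langle X_i, w\rangle^2$. If $\norm{w}_\Sigma = 0$ the claimed inequalities hold trivially (both sides are zero), so I may assume $\norm{w}_\Sigma > 0$, in which case $\frac{1}{\norm{w}_\Sigma^2}\norm{\BX w}_2^2 \sim \chi_m^2$.

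Next I would invoke a standard chi-squared tail bound (e.g.\ the Laurent--Massart inequalities): for $t > 0$, $\Pr[\chi_m^2 \geq m + 2\sqrt{mt} + 2t] \leq e^{-t}$ and $\Pr[\chi_m^2 \leq m - 2\sqrt{mt}] \leq e^{-t}$. Setting $t := \log(2/\delta)$ and using the hypothesis $m \geq 32t$, elementary arithmetic gives $2\sqrt{mt} + 2t \leq 2\sqrt{m^2/32} + m/16 < m$ and $2\sqrt{mt} \leq 2\sqrt{m^2/32} < m/2$. Therefore, by a union bound, with probability at least $1 - 2e^{-t} = 1-\delta$ we have $\frac{m}{2} \leq \chi_m^2 \leq 2m$, which upon rescaling by $\norm{w}_\Sigma^2/m$ yields exactly the claimed two-sided bound.

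This argument is entirely routine; there is no real obstacle. The only points requiring a line of care are (i) handling the degenerate case $\norm{w}_\Sigma = 0$ separately, and (ii) verifying the two numerical inequalities for the deviation terms under $m \geq 32\log(2/\delta)$, both of which are immediate. If one prefers to avoid citing Laurent--Massart, the same bounds follow from a Chernoff/MGF computation for $\chi_m^2$, but the cited inequalities keep the constants clean and match the threshold $32\log(2/\delta)$ stated in the lemma.
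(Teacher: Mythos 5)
Your proposal is correct and follows essentially the same route as the paper's proof: reduce to the observation that $\norm{\BX w}_2^2/\norm{w}_\Sigma^2 \sim \chi_m^2$ (the paper phrases this as a change of variables to $\Sigma = I_n$, you phrase it as $\langle X_i, w\rangle \sim N(0,\norm{w}_\Sigma^2)$ — the same thing), then invoke chi-squared concentration. The paper's proof is merely a sketch that defers to "concentration of $\chi$-squared random variables"; you fill in the arithmetic via Laurent--Massart, which indeed verifies the stated threshold $m \geq 32\log(2/\delta)$, and you correctly note the degenerate case $\norm{w}_\Sigma = 0$.
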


\begin{proof}
By a change of variables, it suffices to consider the case $\Sigma = I_n$. But then $\norm{\BX w}_2^2/\norm{w}_2^2$ is distributed as a $\chi$-squared random variable with $m$ degrees of freedom. The statement follows from concentration of $\chi$-squared random variables.
\end{proof}

\begin{lemma}[Hanson-Wright inequality \citep{rudelson2013hanson}]\label{lemma:hw}
Let $\Sigma\in\RR^{n\times n}$ be a positive semi-definite matrix. Let $X \sim N(0,\Sigma)$. Then for any $t>0$,
\[\Pr[|\norm{X}_2^2 - \tr(\Sigma)| > t] \leq 2\exp\left(-c\min\left(\frac{t^2}{\norm{\Sigma}_F^2}, \frac{t}{\norm{\Sigma}_\mathsf{op}}\right)\right)\]
where $c>0$ is a universal constant.
\end{lemma}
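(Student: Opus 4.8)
The statement is the Gaussian special case of Hanson--Wright, which (unlike the general sub-Gaussian version) follows by a completely elementary diagonalization plus a Chernoff bound; the plan is to spell this out. First I would write $X = \Sigma^{1/2} g$ with $g \sim N(0,I_n)$, so $\norm{X}_2^2 = g^\top \Sigma g$, and diagonalize $\Sigma = U \diag(\lambda_1,\dots,\lambda_n) U^\top$ with $\lambda_i \ge 0$. Since $U^\top g$ has the same law as $g$, we get the distributional identity
\[ \norm{X}_2^2 - \tr(\Sigma) \;\overset{d}{=}\; \sum_{i=1}^n \lambda_i (Z_i^2 - 1), \]
with $Z_1,\dots,Z_n$ i.i.d.\ standard normal, and we note $\tr(\Sigma) = \sum_i \lambda_i$, $\norm{\Sigma}_F^2 = \sum_i \lambda_i^2$, $\norm{\Sigma}_\mathsf{op} = \max_i \lambda_i$. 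Thus it suffices to prove a chi-squared-type deviation bound for the weighted sum $\sum_i \lambda_i(Z_i^2-1)$.

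Next I would control the moment generating function of each summand. A direct computation gives $\E[e^{s(Z_i^2-1)}] = e^{-s}(1-2s)^{-1/2}$ for $s < 1/2$, and using $-u - \log(1-u) \le u^2$ (valid for $|u| \le 1/2$) one obtains $\log \E[e^{s(Z_i^2-1)}] \le 2s^2$ whenever $|s| \le 1/4$. Hence, provided $|s| \le 1/(4\norm{\Sigma}_\mathsf{op})$ so that $|s\lambda_i| \le 1/4$ for every $i$, independence yields
\[ \log \E\!\left[\exp\!\Big(s\textstyle\sum_i \lambda_i(Z_i^2-1)\Big)\right] \;\le\; 2s^2 \sum_i \lambda_i^2 \;=\; 2s^2 \norm{\Sigma}_F^2. \]
The upper tail then comes from the Chernoff bound $\Pr[\,\cdot > t\,] \le \exp(-st + 2s^2\norm{\Sigma}_F^2)$ optimized over $0 \le s \le 1/(4\norm{\Sigma}_\mathsf{op})$: if $t \le \norm{\Sigma}_F^2 / \norm{\Sigma}_\mathsf{op}$ the unconstrained optimizer $s = t/(4\norm{\Sigma}_F^2)$ is feasible and gives $\exp(-t^2/(8\norm{\Sigma}_F^2))$; otherwise setting $s = 1/(4\norm{\Sigma}_\mathsf{op})$ and using $\norm{\Sigma}_F^2 \le t\norm{\Sigma}_\mathsf{op}$ gives $\exp(-t/(8\norm{\Sigma}_\mathsf{op}))$. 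In both cases the exponent is at most $-\tfrac18 \min(t^2/\norm{\Sigma}_F^2,\; t/\norm{\Sigma}_\mathsf{op})$.

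Finally, the lower tail is handled identically: $\E[e^{-s(Z_i^2-1)}] = e^{s}(1+2s)^{-1/2}$ has logarithm at most $2s^2$ for all $s \ge 0$, so the same Chernoff argument applies (in fact with no constraint on $s$, which only strengthens the bound); a union bound over the two tails produces the factor $2$ and establishes the claim with $c = 1/8$. There is no genuine obstacle here beyond bookkeeping of constants in the two-regime optimization; alternatively one can simply cite the Laurent--Massart chi-squared deviation inequality, or Bernstein's inequality for sums of independent sub-exponential variables applied to $\{\lambda_i(Z_i^2-1)\}_{i=1}^n$ (whose Orlicz-$1$ norms are $\Theta(\lambda_i)$), and read off the stated bound directly.
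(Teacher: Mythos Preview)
Your argument is correct. The paper does not prove this lemma at all; it simply states it with a citation to Rudelson--Vershynin, treating it as a known black box. What you have written is the standard self-contained proof of the Gaussian special case: diagonalize to reduce to a weighted chi-squared deviation, bound the log-MGF of $Z^2-1$ by a quadratic on a bounded interval, and run the two-regime Chernoff optimization. All steps are sound, including the elementary inequality $-u-\log(1-u)\le u^2$ on $|u|\le 1/2$ and the case split at $t=\norm{\Sigma}_F^2/\norm{\Sigma}_{\mathsf{op}}$; the resulting constant $c=1/8$ is fine for the paper's purposes (only \Cref{cor:hw} is used downstream, and it absorbs constants anyway). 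So you have supplied strictly more than the paper does here.
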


In particular we will use the following simplification of the Hanson-Wright inequality.

\begin{corollary}\label{cor:hw}
Let $\Sigma\in\RR^{n\times n}$ be a positive semi-definite matrix. Let $X \sim N(0,\Sigma)$. Let $\delta \in (0,1/4)$. Then
\[ \Pr[\norm{X}_2^2 > C\tr(\Sigma)\log(2/\delta)] \leq \delta\]
where $C>0$ is a universal constant.
\end{corollary}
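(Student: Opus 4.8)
The plan is to deduce the corollary directly from the Hanson--Wright inequality (\Cref{lemma:hw}) by choosing the deviation parameter $t$ proportional to $\tr(\Sigma)\log(2/\delta)$. The key observation that makes this clean is that since $\Sigma$ is positive semi-definite, its eigenvalues $\lambda_1,\dots,\lambda_n$ are nonnegative, so $\norm{\Sigma}_F^2 = \sum_i \lambda_i^2 \le \big(\sum_i \lambda_i\big)^2 = \tr(\Sigma)^2$ and $\norm{\Sigma}_{\mathsf{op}} = \max_i \lambda_i \le \sum_i \lambda_i = \tr(\Sigma)$. Thus both quantities controlling the Hanson--Wright tail are bounded by $\tr(\Sigma)$.

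First I would set $t := (C\log(2/\delta) - 1)\tr(\Sigma)$, so that the event $\{\norm{X}_2^2 > C\tr(\Sigma)\log(2/\delta)\}$ is contained in the event $\{|\norm{X}_2^2 - \tr(\Sigma)| > t\}$ to which \Cref{lemma:hw} applies. Since $\delta < 1/4$ we have $\log(2/\delta) > \log 8 > 2$, so as long as $C \ge 1$ this gives $C\log(2/\delta) - 1 \ge \tfrac{C}{2}\log(2/\delta)$, i.e.\ $t \ge \tfrac{C}{2}\tr(\Sigma)\log(2/\delta)$.

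Next I would plug this $t$ into \Cref{lemma:hw}. Using $\norm{\Sigma}_F^2 \le \tr(\Sigma)^2$ and $\norm{\Sigma}_{\mathsf{op}} \le \tr(\Sigma)$ together with $\log(2/\delta) \ge 1$, we get $t^2/\norm{\Sigma}_F^2 \ge \tfrac{C^2}{4}\log(2/\delta)$ and $t/\norm{\Sigma}_{\mathsf{op}} \ge \tfrac{C}{2}\log(2/\delta)$, so the minimum in the exponent is at least $\min\{\tfrac{C^2}{4},\tfrac{C}{2}\}\log(2/\delta)$. Choosing $C := \max\{2, 2/c\}$, where $c$ is the universal constant of \Cref{lemma:hw}, ensures $c\cdot\min\{\tfrac{C^2}{4},\tfrac{C}{2}\} \ge 1$, hence the Hanson--Wright bound is at most $2\exp(-\log(2/\delta)) = \delta$, which is exactly the claim.

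There is no real obstacle here; the only care needed is in the bookkeeping of constants so that the $\log(2/\delta)$ appearing in the exponent of Hanson--Wright absorbs the leading factor of $2$ and produces exactly $\delta$. The hypothesis $\delta < 1/4$ is used precisely to keep $\log(2/\delta)$ bounded away from zero (in fact larger than $2$), which lets the linear-in-$t$ and quadratic-in-$t$ branches of the Hanson--Wright tail be handled uniformly.
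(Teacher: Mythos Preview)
Your proof is correct and follows essentially the same approach as the paper: both reduce to \Cref{lemma:hw} using the key observation that $\norm{\Sigma}_{\mathsf{op}} \le \norm{\Sigma}_F \le \tr(\Sigma)$ for positive semi-definite $\Sigma$, then choose $t$ proportional to $\tr(\Sigma)\log(2/\delta)$. The only difference is cosmetic bookkeeping in how the constant is selected.
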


\begin{proof}
Observe that $\norm{\Sigma}_\mathsf{op} \leq \norm{\Sigma}_F \leq \tr(\Sigma)$ (since $\norm{\Sigma}_{\mathsf{op}}$ is the $\ell_\infty$ norm of the eigenvalues of $\Sigma$, whereas $\norm{\Sigma}_F$ is the $\ell_2$ norm and $\tr(\Sigma)$ is the $\ell_1$ norm). Thus, \Cref{lemma:hw} gives that for any $t>0$,
\[\Pr[|\norm{X}_2^2 - \tr(\Sigma)| > t] \leq 2\exp\left(-c\min\left(\frac{t^2}{\tr(\Sigma)^2}, \frac{t}{\tr(\Sigma)}\right)\right).\]
Taking $t := \max(1, 1/c)\tr(\Sigma)\log(2/\delta)$ gives the claimed result.
\end{proof}

For a symmetric matrix $A$, let $\lambda_1(A) \leq \dots \leq \lambda_n(A)$ denote the eigenvalues of $A$. The following inequality is well-known.

\begin{lemma}[Weyl's inequality]\label{lemma:weyl}
Let $N,R \in \RR^{n\times n}$ be symmetric matrices. Suppose that $\vrank(R) = r$. Then
\[\lambda_{n-r}(N+R) \leq \lambda_n(N).\]
\end{lemma}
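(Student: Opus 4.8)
The plan is to invoke the Courant--Fischer (min--max) characterization of eigenvalues, which for a symmetric matrix $A \in \RR^{n\times n}$ with eigenvalues $\lambda_1(A) \le \cdots \le \lambda_n(A)$ states that
\[
\lambda_k(A) = \min_{\substack{V \subseteq \RR^n \\ \dim V = k}} \ \max_{\substack{v \in V \\ v \neq 0}} \frac{v^\t A v}{v^\t v}.
\]
First I would apply this with $A = N+R$ and $k = n-r$, so that $\lambda_{n-r}(N+R)$ is expressed as a minimum, over all $(n-r)$-dimensional subspaces $V$, of the largest Rayleigh quotient of $N+R$ on $V$. To get the desired upper bound it then suffices to exhibit one good candidate subspace and plug it in.

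The natural candidate is $V := \ker(R)$. Since $\vrank(R) = r$, rank--nullity gives $\dim \ker(R) = n-r$, so $V$ is admissible in the min--max formula. Every $v \in \ker(R)$ satisfies $Rv = 0$ and hence $v^\t(N+R)v = v^\t N v$, so the plan is to conclude
\[
\lambda_{n-r}(N+R) \ \le \ \max_{\substack{v \in \ker(R) \\ v \neq 0}} \frac{v^\t(N+R)v}{v^\t v} \ = \ \max_{\substack{v \in \ker(R) \\ v \neq 0}} \frac{v^\t N v}{v^\t v} \ \le \ \max_{\substack{v \in \RR^n \\ v \neq 0}} \frac{v^\t N v}{v^\t v} \ = \ \lambda_n(N),
\]
where the last equality is the $k=n$ case of Courant--Fischer (the maximal Rayleigh quotient is the top eigenvalue).

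There is essentially no hard step here; the only points requiring a bit of care are (i) matching the paper's indexing convention $\lambda_1 \le \cdots \le \lambda_n$ — so that $\lambda_{n-r}$ sits near the \emph{top} of the spectrum and one wants the ``$\min$ over $(n-r)$-dimensional subspaces'' form of the variational principle — and (ii) recalling rank--nullity to certify $\dim\ker(R) = n-r$. If one prefers to avoid quoting Courant--Fischer, an equally short alternative is a dimension count: the span of the top $r+1$ eigenvectors of $N+R$ has dimension $r+1$ and therefore meets $\ker(R)$ (dimension $n-r$) in a nonzero vector $v$; such a $v$ obeys both $v^\t(N+R)v \ge \lambda_{n-r}(N+R)\,\|v\|^2$ and $v^\t(N+R)v = v^\t N v \le \lambda_n(N)\,\|v\|^2$, which gives the claim. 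I would present the Courant--Fischer version for brevity.
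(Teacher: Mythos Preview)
Your proof is correct. The paper does not actually prove this lemma---it simply states it as ``well-known'' without proof---so there is nothing to compare against; your Courant--Fischer argument (and the alternative dimension-count you sketch) is the standard route and is fine as written.
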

\end{document}